\documentclass[final,12pt]{elsarticle}

\usepackage{amsmath,amssymb,amsthm}
\usepackage{url}
\usepackage{graphicx}
\usepackage{booktabs}
\usepackage{multirow}
\usepackage{array}
\usepackage{xcolor}
\usepackage{mathtools}
\usepackage{float}
\usepackage{algorithm}
\usepackage{algpseudocode}

\makeatletter
\@ifundefined{theorem}{\newtheorem{theorem}{Theorem}}{}
\@ifundefined{proposition}{\newtheorem{proposition}{Proposition}}{}
\@ifundefined{corollary}{\newtheorem{corollary}{Corollary}}{}
\@ifundefined{remark}{\newtheorem{remark}{Remark}}{}
\makeatother

\journal{Neural Networks}

\begin{document}

\begin{frontmatter}

\title{DDCL: Deep Dual Competitive Learning:\\
A Differentiable End-to-End Framework\\
for Unsupervised Prototype-Based Representation Learning}

\author{Giansalvo Cirrincione}
\address{Lab.\ LTI, Universit\'{e} de Picardie Jules Verne, Amiens, France}
\ead{exin@u-picardie.fr}

\begin{abstract}
A persistent structural weakness in deep clustering is the \emph{disconnect}
between feature learning and cluster assignment.
Most architectures invoke an external clustering step, typically $k$-means,
to produce pseudo-labels that guide training, preventing the backbone from
directly optimising for cluster quality.

This paper introduces Deep Dual Competitive Learning (DDCL), the first fully
differentiable end-to-end framework for unsupervised prototype-based
representation learning.
The core contribution is architectural: the external $k$-means is replaced by
an internal Dual Competitive Layer (DCL) that generates prototypes as native
differentiable outputs of the network.
This single inversion makes the complete pipeline, from backbone feature
extraction through prototype generation to soft cluster assignment, trainable
by backpropagation through a single unified loss, with no Lloyd iterations,
no pseudo-label discretisation, and no external clustering step.

To ground the framework theoretically, the paper derives an exact algebraic
decomposition of the soft quantisation loss into a simplex-constrained
reconstruction error and a non-negative weighted prototype variance term.
This identity reveals a self-regulating mechanism built into the loss geometry:
the gradient of the variance term acts as an implicit separation force that
resists prototype collapse without any auxiliary objective, and leads to
a global Lyapunov stability theorem for the reduced frozen-encoder system.

Six blocks of controlled experiments validate each structural prediction.
The decomposition identity holds with zero violations across more than $10^5$
training epochs; the negative feedback cycle is confirmed with Pearson $r = -0.98$;
with a jointly trained backbone, DDCL outperforms its non-differentiable ablation
by $+65\%$ in clustering accuracy and DeepCluster end-to-end by $+122\%$.
\end{abstract}

\begin{keyword}
competitive learning \sep deep clustering \sep differentiable clustering \sep
dual competitive layer \sep Lyapunov stability \sep prototype learning
\end{keyword}

\end{frontmatter}

\section{Introduction}

Unsupervised representation learning is a central challenge in machine learning:
given a large unlabeled dataset, the goal is to learn feature representations that
capture the intrinsic structure of the data without relying on manual
annotations~\cite{bengio2013}.
Clustering-based approaches offer a natural framework for this task, grouping
similar samples together and using the resulting structure to guide feature
learning~\cite{survey}.

Convolutional neural networks trained on large supervised datasets~\cite{krizhevsky}
have become the dominant paradigm for visual representation learning.
Yet the requirement for labeled data is a fundamental bottleneck: annotations are
expensive, domain-specific, and unavailable in many scientific settings such as
genomics~\cite{lovino} and medical imaging.
This motivates the development of unsupervised methods capable of learning rich
representations from raw data alone.

\subsection{Motivation: the disconnect problem}

A fundamental limitation shared by many deep clustering methods is the disconnect
between representation learning and cluster assignment.
Methods such as DeepCluster~\cite{deepcluster}, DEC~\cite{dec}, and Deep
$k$-Means~\cite{deepkmeans} perform feature extraction and clustering in separate
stages, preventing the two objectives from being jointly optimized through
backpropagation.

In DeepCluster~\cite{deepcluster}, the most prominent example of this paradigm,
a CNN backbone extracts features which are then clustered by $k$-means; the cluster
assignments are converted to discrete pseudo-labels and used to train the backbone
in a subsequent supervised step.
This two-stage procedure has been shown to yield strong unsupervised representations,
but its structural disconnect prevents the backbone from directly optimizing for
cluster quality: it can only predict whatever labels the current $k$-means iteration
happens to produce.

\subsection{Contribution: DDCL}

This paper introduces Deep Dual Competitive Learning (DDCL), the first fully
differentiable end-to-end framework for unsupervised prototype-based deep
clustering.
The core contribution is architectural: the external $k$-means is replaced by
an internal Dual Competitive Layer (DCL)~\cite{dcl}, which feeds the transposed
feature matrix to the competitive layer so that prototypes emerge as native
network outputs rather than internal weight vectors.
This single inversion makes the complete backbone--prototype--assignment pipeline
trainable by backpropagation through a single unified loss.
The theoretical analysis that follows, including the loss decomposition, collapse
analysis, feedback stability, and global Lyapunov theorem, constitutes a rigorous
foundation for understanding why this architectural choice works.

The main contributions are:
\begin{enumerate}
  \item \textbf{The DDCL framework}: a unified differentiable architecture replacing
    external $k$-means with an internal DCL module, in both batch and incremental
    formulations.
  \item \textbf{Loss decomposition theorem}: a proof that
    $\mathcal{L}_q = L_{\mathrm{OLS}} + V$ where $V \ge 0$ is the weighted
    prototype variance under the soft assignment distribution.
  \item \textbf{Gradient analysis}: complete characterization of gradients with
    respect to assignments, prototypes, and backbone parameters, including the role
    of stop-gradient.
  \item \textbf{Collapse analysis}: proof that $\mathcal{L}_q$ makes prototype
    collapse a first-order locally unstable saddle via the implicit separation force
    $\nabla_P V = 2P\Sigma_{q_n}$, where $P \in \mathbb{R}^{d \times k}$ is the
    prototype matrix, $\Sigma_{q_n} = \mathrm{diag}(q_n) - q_nq_n^\top \in \mathbb{R}^{k \times k}$
    is the soft-assignment covariance matrix, and $q_n \in \Delta^{k-1}$ is the
    soft assignment vector of sample $n$ (all formally defined in
    Section~\ref{sec:theory}); this force is absent in $L_{\mathrm{OLS}}$.
  \item \textbf{Feedback stability}: identification and linearized analysis of a
    negative feedback loop between prototype separation and assignment concentration,
    with explicit stability conditions.
  \item \textbf{Global Lyapunov stability}: a rigorous global stability theorem
    for the reduced (frozen-encoder) DDCL flow, establishing that the regularized
    energy $E(P,Q)$, where $P \in \mathbb{R}^{d \times k}$ is the prototype matrix
    and $Q = \{q_n\}_{n=1}^N$ is the collection of soft assignment vectors, is a
    Lyapunov function, all trajectories remain bounded, and every trajectory
    converges to the KKT stationary set.
  \item \textbf{Extensions}: principled extension to CNNs beyond AlexNet, recurrent
    architectures, and Transformers.
\end{enumerate}

It is important to emphasize the scope of the theoretical results.
The decomposition theorem and gradient analysis are exact algebraic identities.
The collapse analysis establishes a first-order local instability of the
collapsed configuration under $\mathcal{L}_q$, not a global convergence
guarantee for the full nonconvex deep system.
The feedback stability analysis is a linearized characterization of the
reduced prototype--assignment dynamics, providing structural insight and a
practical design rule rather than a full nonlinear convergence proof.
The global Lyapunov theorem (Section~\ref{sec:lyapunov}) is a genuine global
result, but applies to the reduced system with frozen encoder features; global
asymptotic stability of the full end-to-end DDCL system remains an open problem
(Section~\ref{sec:limits}).
\textbf{The goal of this paper is not to establish large-scale state-of-the-art
benchmark performance, but to provide the theoretical foundation and controlled
validation of a differentiable prototype-based deep clustering framework.}
The numerical experiments in Section~\ref{sec:experiments} are designed to verify
these structural predictions; large-scale GPU validation on standard visual
benchmarks is identified as the primary direction for future work.

\subsection{Paper organization}

Section~\ref{sec:related} reviews related work.
Section~\ref{sec:background} recalls DeepCluster and DCL.
Section~\ref{sec:ddcl} introduces the DDCL framework.
Section~\ref{sec:theory} develops the theoretical analysis, culminating in the
global Lyapunov stability result.
Section~\ref{sec:extensions} discusses architectural extensions.
Section~\ref{sec:experiments} presents numerical validation.
Section~\ref{sec:discussion} provides a practical discussion.
Section~\ref{sec:conclusion} concludes.
Proofs are collected in the Appendices.

\section{Related Work}
\label{sec:related}

\subsection{Deep clustering}

Deep clustering methods~\cite{survey} combine representation learning with
clustering objectives.
DEC~\cite{dec} minimizes the KL divergence between soft assignments and a sharpened
target distribution, jointly training an autoencoder and cluster centers; its
improved variant IDEC~\cite{idec} preserves local structure through a reconstruction
term.
JULE~\cite{jule} alternates between agglomerative clustering and network updates
in a recurrent framework.
Deep Adaptive Image Clustering~\cite{daic} learns mappings that bring augmented
pairs close in feature space.
Deep $k$-Means~\cite{deepkmeans} embeds $k$-means directly into the network loss
via a differentiable relaxation, but without a dedicated prototype generation module.
Deep Subspace Clustering~\cite{dsc} uses self-expressive layers to capture subspace
structure.

Two recent methods pursue end-to-end differentiable clustering through a different
route.
ClAM~\cite{clam} (ICML 2023) proposes a continuous relaxation of the discrete
assignment problem using Associative Memory dynamics, enabling differentiable
clustering via SGD; however, it operates in the ambient data space and does not
learn latent representations, limiting its applicability to high-dimensional data.
DCAM~\cite{dcam} (NeurIPS 2024) extends this idea to latent space by using
Associative Memory attractor dynamics as an inductive bias inside an autoencoder,
making the encoder, decoder, and cluster prototypes jointly differentiable.
DDCL differs from both in three fundamental respects: (i) prototypes are native
outputs of the backbone graph via the DCL module, not latent attractors of a
recurrent memory system; (ii) no decoder is required, so the framework is
composable with any backbone architecture; (iii) the competitive learning
formulation admits a rigorous algebraic decomposition and Lyapunov analysis
that have no analogue in associative memory methods.

More recent methods exploit self-supervised pretraining.
SCAN~\cite{scan} decouples representation learning from clustering assignment in
two explicit phases: a contrastive pretraining step followed by a neighborhood-based
cluster assignment.
DINO~\cite{dino} shows that Vision Transformer features trained with self-distillation
contain rich semantic structure directly exploitable for clustering without further
training.
CC~\cite{cc} combines instance-level and cluster-level contrastive objectives in a
single end-to-end framework.
Prototypical contrastive learning (PCL,~\cite{pcl}) explicitly maintains cluster
prototypes and uses them as additional contrastive anchors, arriving at a similar
motivation to DDCL but from a contrastive rather than competitive learning
perspective, and without the theoretical decomposition
$\mathcal{L}_q = L_{\mathrm{OLS}} + V$.

The key structural difference between DDCL and all these methods is
summarized as follows.
SwAV~\cite{swav}, which combines contrastive learning with online clustering by
assigning augmented views to shared prototypes, and PCL use prototypes as
contrastive anchors within an augmentation-based objective: prototypes are cluster
centroids maintained outside the backbone graph and updated via an EM or momentum
step, not as differentiable outputs.
SCAN separates pretraining from clustering in two distinct phases with no
joint gradient signal.
DDCL differs in three specific ways: (i) prototypes are \emph{native outputs}
of the backbone graph (DCL module), not external centroids; (ii) the entire
system is optimized by a \emph{single unified loss} $\mathcal{L}_q$ with
continuous gradient flow; (iii) the decomposition
$\mathcal{L}_q = L_{\mathrm{OLS}} + V$ and its consequences (collapse
resistance via $\nabla_P V$, the feedback cycle, and the Lyapunov theorem)
have no analogue in contrastive or EM-based prototype methods.
The present paper focuses on the theoretical analysis of this integration;
empirical comparison with SCAN, DINO, and PCL on large-scale visual
benchmarks is left for future work.

Table~\ref{tab:related} provides a structured comparison of DDCL against
the most relevant existing methods along five dimensions: whether prototypes
are native outputs of the backbone graph, whether the framework requires a
reconstruction decoder, whether training uses a single differentiable loss
with no alternating steps, whether data augmentation is needed, and whether
a rigorous theoretical analysis of the loss geometry accompanies the method.
The table covers the full spectrum from classical deep clustering (DEC,
DeepCluster, Deep $k$-Means) through modern self-supervised methods (SwAV,
PCL) to the two most closely related differentiable clustering approaches
(ClAM, DCAM).

\begin{table}[!htbp]
\centering
\caption{Structural comparison of deep clustering methods ($\checkmark$~=~satisfied,
\texttimes~=~not satisfied).
$\dagger$: ambient data space only, no latent representation.
$\ddagger$: requires encoder + decoder.}
\label{tab:related}
\begin{tabular}{lccccc}
\toprule
Method & \parbox{1.5cm}{\centering Native\\protos} &
         \parbox{1.5cm}{\centering No\\decoder} &
         \parbox{1.5cm}{\centering Unified\\loss} &
         \parbox{1.8cm}{\centering No\\augmentation} &
         Theory \\
\midrule
DEC~\cite{dec}           & \texttimes & \texttimes & \checkmark & \checkmark & \texttimes \\
DeepCluster~\cite{deepcluster} & \texttimes & \checkmark & \texttimes & \checkmark & \texttimes \\
Deep $k$-Means~\cite{deepkmeans} & \texttimes & \checkmark & \checkmark & \checkmark & \texttimes \\
SwAV~\cite{swav}         & \texttimes & \checkmark & \checkmark & \texttimes & \texttimes \\
PCL~\cite{pcl}           & \texttimes & \checkmark & \checkmark & \texttimes & \texttimes \\
ClAM$^\dagger$~\cite{clam} & \texttimes & \checkmark & \checkmark & \checkmark & \texttimes \\
DCAM$^\ddagger$~\cite{dcam} & \texttimes & \texttimes & \checkmark & \checkmark & \texttimes \\
\midrule
\textbf{DDCL (proposed)} & \checkmark & \checkmark & \checkmark & \checkmark & \checkmark \\
\bottomrule
\end{tabular}
\end{table}

The key observation from Table~\ref{tab:related} is that DDCL is the only
method that simultaneously satisfies all five criteria.
The columns denote: \emph{Native protos} --- prototypes are differentiable outputs
of the backbone graph, not external centroids; \emph{No decoder} --- no reconstruction
decoder is required; \emph{Unified loss} --- a single differentiable objective with
continuous gradient flow; \emph{No augmentation} --- no data augmentation or
contrastive views; \emph{Theory} --- rigorous analysis of loss geometry and dynamics.
The most discriminating column is \emph{Native protos}: no prior method
generates prototypes as differentiable outputs of the backbone computational
graph.
DEC and DeepCluster maintain cluster centres as external parameters updated
outside the gradient flow.
SwAV and PCL update prototypes via Sinkhorn-Knopp or EM steps that are
decoupled from the backbone.
ClAM and DCAM achieve differentiable clustering but through associative
memory dynamics rather than competitive learning: ClAM operates only in the
ambient data space without representation learning, while DCAM requires a
full autoencoder and uses memory attractors as prototypes rather than native
network outputs.
The \emph{Theory} column reflects that, unlike all of these methods, DDCL
comes with a rigorous analysis: an exact loss decomposition, a characterisation
of collapse resistance, a linearised feedback stability result, and a global
Lyapunov theorem for the reduced system.

Self-supervised methods~\cite{doersch,noroozi} avoid explicit clustering by
designing pretext tasks.
More recently, contrastive methods such as SimCLR~\cite{simclr} and
MoCo~\cite{moco} learn representations by contrasting augmented views of the same
image.
BYOL~\cite{byol} and SimSiam~\cite{simsiam} operate without negative pairs.
SwAV~\cite{swav} bridges contrastive and clustering objectives (discussed in
Section~\ref{sec:related} in relation to DDCL).

\subsection{Competitive learning}

Competitive learning~\cite{rumelhart} is a biologically inspired paradigm where
neurons compete to represent input patterns.
The Self-Organizing Map (SOM) \cite{kohonen} organizes prototypes on a fixed
low-dimensional grid, combining competition with topology preservation.
Growing Neural Gas (GNG)~\cite{fritzke} relaxes the fixed topology by incrementally
adding units where the quantization error is highest.
Both methods learn prototype positions through Competitive Hebbian Learning
(CHL)~\cite{martinetz}, which encodes the first- and second-winner structure in a
topology graph.
The GH-EXIN network~\cite{ghexin} extends these ideas to hierarchical clustering
by nesting competitive layers at multiple granularity levels.

A fundamental limitation shared by all these methods is that prototypes are weight
vectors internal to a single layer: they are not accessible as differentiable outputs
and cannot be backpropagated through from a downstream loss.
The DCL~\cite{dcl} resolves this by transposing the input matrix before the
competitive layer, so that prototypes appear as explicit, differentiable columns
of the output matrix (formally defined in Section~\ref{sec:dcl}).
This architectural inversion makes prototype generation fully differentiable and
composable with any downstream objective.
It also enables the gradient boundedness property (DCL gradients live in the
$n$-dimensional data subspace regardless of $d$, the feature dimension),
which is central to the high-dimensional robustness of DDCL.
The regression interpretation of DCL, connecting competitive learning to OLS and
its variants (TLS, weighted LS), is developed in~\cite{dcl} and further grounded
in the neural data-fitting framework of~\cite{exin}.

\section{Background}
\label{sec:background}

\subsection{DeepCluster}

Let $\{x_n\}_{n=1}^N$ be $N$ unlabeled input samples.
A backbone CNN $f_\theta$ with parameters $\theta$ extracts features
$z_n = f_\theta(x_n) \in \mathbb{R}^d$, collected in the feature matrix
$F_\theta = [z_1,\dots,z_N] \in \mathbb{R}^{d \times N}$,
where $d$ is the feature dimension.
Throughout the paper, $k$ denotes the number of prototypes (clusters), specified
\emph{a priori}.

DeepCluster~\cite{deepcluster} proceeds as follows:
\begin{enumerate}
  \item \textbf{Feature preprocessing}: PCA to $D \ll d$ dimensions,
    $\ell_2$-normalization, whitening.
  \item \textbf{External clustering}: run $k$-means on $\{z_n\}$ to obtain
    centroids $C \in \mathbb{R}^{d \times k}$ and hard assignments
    $y_n \in \{0,1\}^k$, $y_n^\top \mathbf{1}_k = 1$.
  \item \textbf{Pseudo-label training}: train $f_\theta$ and a linear classifier
    head $g_\phi$ (weights $\phi$) to predict the pseudo-labels.
    The classifier output is:
    \begin{equation}
      \hat{s}_n = \mathrm{softmax}(g_\phi(z_n)) \in \mathbb{R}^k,
    \end{equation}
    where $\hat{s}_{n,j}$ is the predicted probability of sample $n$ belonging to
    cluster $j$.
    The cross-entropy loss is:
    $L_{\mathrm{DC}} = -\frac{1}{N}\sum_n \log \hat{s}_{n,y_n}$,
    where $y_n \in \{1,\dots,k\}$ is the integer cluster index assigned by $k$-means
    (i.e., the component of $y_n$ that equals 1 in the one-hot encoding).
  \item Repeat until convergence.
\end{enumerate}

The $k$-means step solves:
\begin{equation}
  \min_{C,\{y_n\}} \frac{1}{N}\sum_n \|z_n - Cy_n\|^2
  \quad \text{s.t.}\quad y_n \in \{0,1\}^k,\; y_n^\top \mathbf{1}_k = 1.
\end{equation}
DeepCluster includes two mechanisms to prevent trivial solutions: empty cluster
reassignment and uniform cluster sampling~\cite{deepcluster}.

The fundamental limitation is structural: $k$-means is external to the computational
graph.
Gradients do not flow through the assignment step, so the backbone cannot directly
optimize for cluster quality.

\subsection{The Dual Competitive Layer (DCL)}
\label{sec:dcl}

The DCL was introduced in~\cite{dcl} as a gradient-based competitive learning
module that generates prototypes as differentiable network outputs by operating
on the transposed feature matrix.
In what follows, $X \in \mathbb{R}^{n \times d}$ denotes a generic input matrix
of $n$ samples (which in the DDCL context corresponds to the current batch,
$n \le N$); $d$ is the feature dimension and $k$ the number of prototypes, as
defined in Section~\ref{sec:background}.
The construction and the two properties most relevant to DDCL are recalled below.

\paragraph{Vanilla Competitive Layer (VCL)}
The VCL is the classical competitive learning unit~\cite{rumelhart,kohonen}
as it appears in the neural network literature.
Given an input matrix $X \in \mathbb{R}^{n \times d}$ ($n$ samples, each with
$d$ features, arranged as rows),
a VCL with $k$ neurons (where $k$ is specified \emph{a priori} and may not
equal the true number of clusters in the data) has weight matrix
$W_1 \in \mathbb{R}^{k \times d}$, where
each row $w_j^\top \in \mathbb{R}^d$ is a prototype vector.
For each sample $x_i \in \mathbb{R}^d$, the winning neuron is $j^* = \arg\min_j \|x_i - w_j\|^2$, and the
loss is the quantization error:
\begin{equation}
  Q = \sum_{i=1}^n \min_j \|x_i - w_j\|^2.
\end{equation}
Prototypes are stored internally as weight vectors and are not directly accessible as network
outputs; they cannot be connected to a downstream differentiable loss.

\paragraph{Dual Competitive Layer (DCL)}
The key insight of~\cite{dcl} is to feed the transposed matrix
$X^\top \in \mathbb{R}^{d \times n}$ to the competitive layer.
With $n$ input neurons and $k$ output neurons, the DCL weight matrix is
$W_2 \in \mathbb{R}^{n \times k}$, whose $j$-th column $w_{2,j} \in \mathbb{R}^n$
contains the weights connecting all $n$ input neurons to the $j$-th output neuron.
The output layer activation matrix is:
\begin{equation}
  P = Y_2 = X^\top W_2 \in \mathbb{R}^{d \times k},
  \label{eq:dcl_output}
\end{equation}
where $Y_2$ denotes the output activation matrix of the DCL (i.e., the result
of multiplying the transposed input $X^\top$ by the weight matrix $W_2$),
whose $j$-th column $p_j = X^\top w_{2,j} \in \mathbb{R}^d$ is the $j$-th prototype,
expressed as a linear combination of the input data columns weighted by $w_{2,j}$.
Prototypes are now explicit network outputs: gradients from any downstream loss
flow through $P$ back into $W_2$ and, via $X^\top$, into the backbone that produced
$X$.

The DCL loss augments the quantization error with a topology regularization term:
\begin{equation}
  L_{\mathrm{DCL}} = Q + \lambda \|E\|^2,
\end{equation}
where $E \in \mathbb{R}^{k \times k}$ is the Competitive Hebbian Learning (CHL)
adjacency matrix~\cite{martinetz}, encoding the first- and second-winner
co-activation structure across the training samples.
Minimizing $\|E\|^2$ penalizes dense connectivity between prototypes, enforcing a
\emph{sparse topology}: only pairs of prototypes that are mutually nearest neighbours
are linked, producing a graph whose connected components tend to reflect the true cluster
structure of the data.
This is particularly valuable in practice, since $k$ is set \emph{a priori} and may
overestimate the actual number of clusters; the sparse topology provides a visual and
algorithmic means of identifying the true cluster count without rerunning the model.

\paragraph{Duality and equivalence}
Reference~\cite{dcl} proves (Theorems~2.1--2.4) that VCL and DCL produce the same
prototype geometry under the assumption that input samples are uncorrelated with
unit variance, a condition practically achieved by batch normalization.
The two architectures are therefore equivalent in representation power while DCL
is strictly richer in connectivity: the same competitive dynamics, but with
prototypes promoted to explicit, differentiable output tensors that participate
fully in the computation graph.

\paragraph{Gradient subspace property}
Theorems~3.4--3.5 of~\cite{dcl} establish that the DCL gradient
$\nabla_{W_2} L_{\mathrm{DCL}}$ lies in the column space of $X^\top$, an
$n$-dimensional subspace of $\mathbb{R}^d$.
This result has a deep practical consequence that is central to DDCL's behavior
in high-dimensional settings.
In the ambient space $\mathbb{R}^d$, most directions are uninformative: when
$d \gg n$, only $n$ linearly independent directions are spanned by the data, and
the remaining $d - n$ directions contain only noise.
The gradient subspace property means that DCL updates are \emph{automatically
confined to the $n$-dimensional data subspace}, a geometric tunnel aligned
with the actual information content of the batch, and completely immune
to the noise that contaminates the $d - n$ orthogonal directions.
This is not a regularization imposed from outside but a structural property of
the dual architecture: because the prototype $p_j = X^\top w_{2,j}$ is formed
by taking a linear combination of the $n$ data vectors, its gradient can never
escape their span.
The practical effect is striking: as $d$ grows beyond $n$, methods that operate
in the full ambient space (including standard $k$-means and DeepCluster) see
their signal-to-noise ratio collapse, while DDCL maintains meaningful gradient
updates through this subspace tunnel.
Block~3 (Section~\ref{sec:block3}) validates this property experimentally across
six dimensionalities from $d = 10$ to $d = 5000$, showing that
DDCL($\mathcal{L}_q$) degrades gracefully at $d > n$ while all ambient-space
methods converge to chance level.

\section{Deep Dual Competitive Learning (DDCL)}
\label{sec:ddcl}

\subsection{Conceptual shift}

DDCL replaces the external $k$-means in DeepCluster with a DCL block.
The transpose of the feature matrix $F_\theta \in \mathbb{R}^{d \times N}$ is passed
to DCL:
\begin{equation}
  F_\theta^\top \xrightarrow{\mathrm{DCL}} P = [p_1,\dots,p_k] \in \mathbb{R}^{d \times k}.
\end{equation}
DCL outputs the prototype matrix $P$ as network outputs, differentiable end-to-end.
The full pipeline is:
\begin{equation}
  x_n \to f_\theta \to z_n,\quad
  F_\theta^\top \to \mathrm{DCL} \to P,\quad
  (z_n, P) \to q_n \to \mathcal{L}.
\end{equation}
Fig.~\ref{fig:pipeline} illustrates the batch DDCL pipeline with CNN backbone.

\begin{figure}[!htbp]
  \centering
  \includegraphics[width=\linewidth]{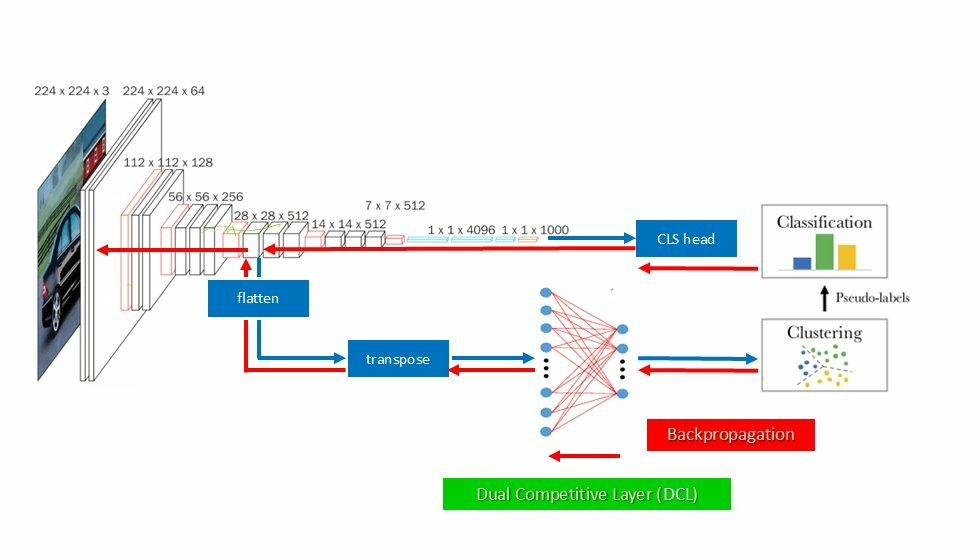}
  \caption{Batch DDCL pipeline with CNN backbone (see Section~\ref{sec:ddcl}).
    Blue: forward pass; red: backpropagation.}
  \label{fig:pipeline}
\end{figure}

\subsection{Batch DDCL with cross-entropy}

The simplest DDCL variant retains the pseudo-label logic of DeepCluster.
From $P$, define soft assignments with temperature $T > 0$:
\begin{equation}
  q_{nj} = \frac{\exp(-\|z_n - p_j\|^2/T)}{\sum_\ell \exp(-\|z_n - p_\ell\|^2/T)},
  \label{eq:softassign}
\end{equation}
and train a linear classifier head with weight matrix $W_c \in \mathbb{R}^{k \times d}$
and bias $b \in \mathbb{R}^k$, with output probability vector
\begin{equation}
  \hat{s}_n = \mathrm{softmax}(W_c z_n + b) \in \mathbb{R}^k,
\end{equation}
via the soft cross-entropy loss:
\begin{equation}
  L_{\mathrm{softCE}} = -\frac{1}{N}\sum_n\sum_j q_{nj}\log\hat{s}_{nj}.
\end{equation}
Here $\hat{s}_{nj}$ is the predicted probability that sample $n$ belongs to
cluster $j$, and $q_{nj}$ provides the soft target in place of the hard
pseudo-labels used by DeepCluster.

\subsection{Unified batch DDCL}

In the unified formulation, the separate classifier head is eliminated.
The soft quantization loss is:
\begin{equation}
  \mathcal{L}_q = \frac{1}{N}\sum_{n=1}^N\sum_{j=1}^k q_{nj}\|z_n - p_j\|^2,
  \label{eq:Lq}
\end{equation}
with $q_{nj}$ from~\eqref{eq:softassign} denoting the $j$-th component
of the soft assignment vector $q_n \in \Delta^{k-1}$.
This single loss jointly trains the backbone $f_\theta$ and the DCL module,
requiring no Lloyd iterations, no hard pseudo-labels, and no classifier head.

To prevent trivial solutions, the total loss augments $\mathcal{L}_q$:
\begin{equation}
  L_{\mathrm{tot}} = \mathcal{L}_q + \beta L_{\mathrm{bal}}
                   + \gamma L_{\mathrm{ent}}
                   + \eta L_{\mathrm{sep}}
                   + \tfrac{\lambda}{2}\|P\|_F^2,
  \label{eq:Ltot}
\end{equation}
where $\beta, \gamma, \eta, \lambda \ge 0$ are non-negative weights,
$\bar{q} = \frac{1}{N}\sum_n q_n \in \Delta^{k-1}$ is the mean soft assignment
across the batch, and:
\begin{align}
  L_{\mathrm{bal}}  &= \mathrm{KL}\!\left(\bar{q}\,\Big\|\,\tfrac{1}{k}\mathbf{1}_k\right),
    \label{eq:Lbal}\\
  L_{\mathrm{ent}}  &= \frac{1}{N}\sum_n H(q_n),\quad
    H(q_n) = -\sum_j q_{nj}\log q_{nj},
    \label{eq:Lent}\\
  L_{\mathrm{sep}}  &= -\sum_{i<j}\|p_i - p_j\|^2.
    \label{eq:Lsep}
\end{align}
Each term addresses a specific failure mode.
$L_{\mathrm{bal}}$ is the KL divergence between the mean assignment $\bar{q}$ and
the uniform distribution $\tfrac{1}{k}\mathbf{1}_k$: minimizing it encourages all
$k$ clusters to be used roughly equally, preventing the model from collapsing to a
solution where most samples are assigned to a single prototype.
$L_{\mathrm{ent}}$ is the mean Shannon entropy $H(q_n) = -\sum_j q_{nj}\log q_{nj}$
of the per-sample soft assignments: maximizing it (i.e., penalizing its negative)
keeps assignments soft and spread, avoiding premature hard convergence before
meaningful prototype positions have been established.
$L_{\mathrm{sep}} = -\sum_{i<j}\|p_i - p_j\|^2$ directly penalizes pairs of
prototypes that are too close, providing an explicit push-apart force that
complements the implicit separation already provided by $\nabla_P V$
(Section~\ref{sec:collapse}).
The quadratic term $\tfrac{\lambda}{2}\|P\|_F^2$ prevents prototypes from drifting
to infinity under the repulsion of $L_{\mathrm{sep}}$; its role in ensuring
coercivity and hence the global Lyapunov result is discussed in
Remark~\ref{rem:coercivity}.

\begin{remark}[Role of the quadratic term $(\lambda/2)\|P\|_F^2$]
\label{rem:coercivity}
The explicit separation term $L_{\mathrm{sep}}$ is unbounded below as
$\|p_i - p_j\| \to \infty$, so it cannot by itself guarantee that prototype
trajectories remain bounded.
The quadratic regularizer $(\lambda/2)\|P\|_F^2$ ($\lambda > 0$) introduces
a restoring force towards the origin that, together with $L_{\mathrm{sep}}$,
confines the prototypes to a bounded region: $L_{\mathrm{sep}}$ pushes prototypes
apart, while $(\lambda/2)\|P\|_F^2$ prevents them from drifting to infinity.
This coercivity is essential for the global Lyapunov result of
Section~\ref{sec:lyapunov}.
In practice $\lambda$ is chosen small (e.g.\ $10^{-4}$--$10^{-3}$) so that its
effect on prototype geometry is negligible.
\end{remark}

\subsection{Constrained OLS interpretation}

The name $L_{\mathrm{OLS}}$ reflects the classical \emph{Ordinary Least Squares}
problem from linear regression.
Given a feature vector $z_n \in \mathbb{R}^d$ and a prototype matrix
$P = [p_1,\dots,p_k] \in \mathbb{R}^{d \times k}$, the goal is to find
an assignment vector $q_n \in \mathbb{R}^k$ such that the linear combination
$Pq_n$ approximates $z_n$ as closely as possible.
This is precisely the OLS problem $\min_{q_n} \|z_n - Pq_n\|^2$, which in
unconstrained form has the closed-form solution $q_n = (P^\top P)^{-1}P^\top z_n$.
The constraint $q_n \in \Delta^{k-1}$, i.e., the entries of $q_n$ are
non-negative and sum to one, turns it into a \emph{simplex-constrained} OLS
problem, which no longer has a closed form but admits efficient projected-gradient
solvers.
Relaxing the hard one-hot constraint in~\eqref{eq:Lq} to the probability simplex
$\Delta^{k-1} = \{q \ge 0 : \mathbf{1}^\top q = 1\}$ gives the
\emph{simplex-constrained OLS loss}:
\begin{equation}
  L_{\mathrm{OLS}} = \|z_n - Pq_n\|^2 = \|z_n - \bar{p}_n\|^2,\quad
  \bar{p}_n = Pq_n.
  \label{eq:LOLS}
\end{equation}
Hard DDCL ($T \to 0$) recovers a discrete one-hot OLS problem equivalent to
$k$-means.
Soft DDCL is a continuous simplex-constrained OLS problem.

\begin{remark}[Beyond OLS: noise assumptions and regression variants]
OLS assumes that observation errors lie entirely in the feature space $z_n$
(errors-in-response model).
When both observations $z_n$ and prototype coordinates $P$ are subject to noise,
Total Least Squares (TLS) provides a more appropriate formulation, minimizing
orthogonal distances to the prototype subspace rather than vertical residuals.
More generally, the choice of regression criterion (OLS, TLS, weighted LS, or
robust variants) should be guided by the noise structure of the representation
space; a systematic treatment of these regression problems and their neural
implementations is given in~\cite{exin}.
\end{remark}

\subsection{Incremental DDCL}
\label{sec:incremental}

In the incremental formulation, prototype rows are generated sequentially: the DCL
produces one row $r^{(t)\top} \in \mathbb{R}^k$ per step.
After $t$ steps, the partial prototype matrix is $P_{1:t} \in \mathbb{R}^{t \times k}$.
Defining truncated features $z_{n,1:t} \in \mathbb{R}^t$, the incremental
assignment update is:
\begin{equation}
  q_n^{(t)} = \arg\min_{q \in \Delta^{k-1}} \|z_{n,1:t} - P_{1:t}q\|^2.
\end{equation}
This is solved online via Widrow--Hoff with simplex projection:
\begin{align}
  e_n^{(t)} &= z_{nt} - r^{(t)\top} q_n^{(t-1)},\\
  \tilde{q}_n^{(t)} &= q_n^{(t-1)} + \mu\, e_n^{(t)} r^{(t)},\\
  q_n^{(t)} &= \Pi_{\Delta}^{k-1}\!\left(\tilde{q}_n^{(t)}\right),
\end{align}
where $e_n^{(t)} \in \mathbb{R}$ is the scalar prediction error at step $t$,
$\mu > 0$ is the Widrow--Hoff step size,
and $\Pi_{\Delta}^{k-1}$ is the Euclidean projection onto $\Delta^{k-1}$,
computed via the efficient algorithm of~\cite{dcl}.

\paragraph{Simplex projection}
The Euclidean projection $\Pi_{\Delta}^{k-1}(v)$ maps any vector
$v \in \mathbb{R}^k$ to the nearest point in the probability simplex
$\Delta^{k-1} = \{q \in \mathbb{R}^k : q \ge 0,\, \mathbf{1}^\top q = 1\}$.
It can be computed exactly in $O(k \log k)$ time via the following
sorting-based algorithm~\cite{dcl}:
\begin{enumerate}
  \item Sort $v$ in descending order: $v_{(1)} \ge v_{(2)} \ge \cdots \ge v_{(k)}$.
  \item Find the largest $\rho \in \{1,\dots,k\}$ such that
    $v_{(\rho)} - \frac{1}{\rho}\!\left(\sum_{j=1}^\rho v_{(j)} - 1\right) > 0$.
  \item Set the threshold $\theta^* = \frac{1}{\rho}\!\left(\sum_{j=1}^\rho v_{(j)} - 1\right)$.
  \item Return $\Pi_{\Delta}^{k-1}(v) = \max(v - \theta^*\mathbf{1}_k,\, 0)$.
\end{enumerate}
This operation replaces the unconstrained update $\tilde{q}_n^{(t)}$ with the
nearest feasible assignment vector, ensuring $q_n^{(t)} \in \Delta^{k-1}$
at every step with no iterative inner loop.
A complete pseudocode of the incremental loop is given in~\ref{app:alg:incr}.
The total incremental objective is:
\begin{equation}
  L_{\mathrm{inc}} = \sum_{t=1}^d \alpha_t L^{(t)},\quad
  L^{(t)} = \frac{1}{N}\sum_n \|z_{n,1:t} - P_{1:t}q_n^{(t)}\|^2,
\end{equation}
with non-negative weights $\alpha_t$ typically increasing with $t$.

\section{Theoretical Analysis}
\label{sec:theory}

The core theoretical results are developed below.
Throughout this section a single sample $z_n \in \mathbb{R}^d$, prototypes
$P = [p_1,\dots,p_k] \in \mathbb{R}^{d \times k}$, and a soft assignment
$q_n \in \Delta^{k-1}$ are fixed.
The notation is: $\bar{p}_n = Pq_n$ (prototype mixture),
$d_n = (\|z_n - p_1\|^2, \dots, \|z_n - p_k\|^2)^\top \in \mathbb{R}^k$
(distance vector).
All proofs are in the Appendices.

\subsection{Loss decomposition}

\begin{theorem}[Loss decomposition]
\label{thm:decomp}
For any $z_n \in \mathbb{R}^d$, $P \in \mathbb{R}^{d \times k}$,
$q_n \in \Delta^{k-1}$:
\begin{equation}
  \mathcal{L}_q = L_{\mathrm{OLS}} + V(P, q_n),
  \label{eq:decomp}
\end{equation}
where the variance term
\begin{equation}
  V(P, q_n) = \sum_j q_{nj}\|p_j - \bar{p}_n\|^2 \ge 0
  \label{eq:V}
\end{equation}
is the weighted variance of the prototypes under $q_n$.
Equality holds iff $q_n$ is a vertex of $\Delta^{k-1}$ or all active prototypes
coincide.
\end{theorem}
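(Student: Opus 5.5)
The plan is to prove the identity for a single sample by the classical bias--variance (parallel-axis) argument, and then read off the equality case from the structure of $V$. Throughout, $\mathcal{L}_q$ in \eqref{eq:decomp} is the per-sample summand $\sum_j q_{nj}\|z_n - p_j\|^2$ of \eqref{eq:Lq}. Averaging the per-sample identity over $n$ then gives the batch version $\mathcal{L}_q = \frac{1}{N}\sum_n \bigl(\|z_n - Pq_n\|^2 + V(P,q_n)\bigr)$ at no extra cost.

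\textbf{Step 1 (centering).} For each $j$, insert the prototype mixture $\bar{p}_n = Pq_n$ and write $z_n - p_j = (z_n - \bar{p}_n) + (\bar{p}_n - p_j)$. Expanding the squared norm gives three terms:
\begin{equation*}
\|z_n - p_j\|^2 = \|z_n - \bar{p}_n\|^2 + 2(z_n - \bar{p}_n)^\top(\bar{p}_n - p_j) + \|\bar{p}_n - p_j\|^2 .
\end{equation*}

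\textbf{Step 2 (weighting and the cross term).} Multiply by $q_{nj}$ and sum over $j$, using the two simplex properties $\sum_j q_{nj} = 1$ and $q_{nj}\ge 0$. The first term becomes $\|z_n - \bar{p}_n\|^2 = L_{\mathrm{OLS}}$ by \eqref{eq:LOLS}, because the weights sum to one. The third term is exactly $V(P,q_n)$ from \eqref{eq:V}. The cross term factors as $2(z_n-\bar{p}_n)^\top\bigl(\bar{p}_n\sum_j q_{nj} - \sum_j q_{nj}p_j\bigr) = 2(z_n-\bar{p}_n)^\top(\bar{p}_n - Pq_n) = 0$. This vanishing is the key algebraic point: it uses precisely that $\bar{p}_n$ is the $q_n$-weighted mean of the prototypes. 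The bound $V \ge 0$ follows because $V$ is a sum of nonnegative weights times squared norms.

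\textbf{Step 3 (equality case).} Equality $\mathcal{L}_q = L_{\mathrm{OLS}}$ holds iff $V = 0$. Since every summand of $V$ is nonnegative, this happens iff $p_j = \bar{p}_n$ for every $j$ with $q_{nj} > 0$, i.e.\ iff all active prototypes (those in the support of $q_n$) coincide, and hence equal $\bar{p}_n$.

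\begin{itemize}
\item The vertex case is the special instance of a single active prototype: if $q_n = e_j$, then $\bar{p}_n = p_j$ and $V = 0$ trivially.
\item Conversely, if $V = 0$ with support of size at least two, the active prototypes must coincide.
\end{itemize}

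So the stated disjunction is an exact characterisation. The main point needing care is this step, not the algebra: the ``vertex'' alternative is subsumed by ``all active prototypes coincide''. I will state the equivalence in that form so that the ``iff'' is unambiguous.
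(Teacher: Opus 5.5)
Your proof is correct and is essentially the paper's argument. The paper expands $\mathcal{L}_q$, $L_{\mathrm{OLS}}$ and $V$ around the origin and shows that $\mathcal{L}_q - L_{\mathrm{OLS}}$ and $V$ both equal $\sum_j q_{nj}\|p_j\|^2 - \|\bar{p}_n\|^2$. You expand around $\bar{p}_n$ and kill the cross term, which is the same parallel-axis identity organised differently. Your handling of the equality case is more careful than the paper's, which simply asserts it: you justify the converse by termwise nonnegativity and note that the vertex case is subsumed by ``all active prototypes coincide''.
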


\begin{corollary}[Upper bound]
$0 \le V(P,q_n) \le \tfrac{1}{4}\mathrm{diam}(P)^2$,
where $\mathrm{diam}(P) = \max_{i \ne j}\|p_i - p_j\|$ is the prototype diameter.
\end{corollary}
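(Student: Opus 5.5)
The plan is to reduce the vector-valued variance to a ball-covering bound. The key identity is the variational characterisation of the weighted variance:
\begin{equation*}
  V(P,q_n) = \sum_j q_{nj}\|p_j - \bar p_n\|^2 = \min_{c\in\mathbb{R}^d} \sum_j q_{nj}\|p_j - c\|^2 .
\end{equation*}
It follows from the same parallel-axis expansion used in Theorem~\ref{thm:decomp}: expanding around $c$ gives $\sum_j q_{nj}\|p_j-c\|^2 = V + \|\bar p_n - c\|^2$. Non-negativity $V\ge 0$ is then immediate, since $V$ is a convex combination of squared norms.

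For the upper bound, I would take $c$ to be the centre of the smallest ball enclosing the active prototypes $\{p_j : q_{nj}>0\}$, with radius $R$. Since $\sum_j q_{nj}=1$, this gives $V \le R^2$, and the whole claim reduces to showing $R \le \tfrac12\,\mathrm{diam}(P)$.

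\textbf{Case of at most two active prototypes, $p_i, p_j$.} Here the midpoint is a valid centre, so $R = \tfrac12\|p_i-p_j\| \le \tfrac12\mathrm{diam}(P)$. A sharper direct route is also available: $V = q_{ni}q_{nj}\|p_i-p_j\|^2$, and $q_{ni}q_{nj}\le \tfrac14$ by AM--GM. This is the vector analogue of Popoviciu's inequality. It settles $k=2$ and, more generally, every $q_n$ supported on an edge of $\Delta^{k-1}$.

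\textbf{Collinear prototypes.} Next I would treat prototypes lying on a line. Projecting onto that line reduces the problem to the scalar Popoviciu inequality $\mathrm{Var}\le (b-a)^2/4$, whose proof via $\mathbb{E}[(X-a)(b-X)]\ge 0$ carries over verbatim to the weights $q_{nj}$. This gives the stated constant $\tfrac14$ for any $k$.

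\textbf{General position: the main obstacle.} The obstacle is exactly the step $R\le \tfrac12\mathrm{diam}(P)$ when the active prototypes are in general position. Jung's theorem only gives $R \le \mathrm{diam}(P)\sqrt{m/(2(m+1))}$, where $m$ is the affine dimension of the active set. This is strictly larger than $\tfrac12\mathrm{diam}(P)$ once $m\ge 2$. A direct check confirms that the stated constant cannot hold there: for an equilateral triangle of side $1$ with uniform weights, $V = \tfrac13 > \tfrac14$.

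To prove the statement as worded, I would therefore first try to establish that the regime in which the corollary is applied has effectively at most two active components, or collinear active prototypes. One such regime is the low-temperature setting of~\eqref{eq:softassign}, where $q_n$ concentrates on the nearest one or two prototypes. In that regime the argument above yields exactly $0\le V\le \tfrac14\mathrm{diam}(P)^2$. Outside it, the $\tfrac14$ constant would have to be replaced by the Jung-type factor, so identifying the correct support hypothesis is where I expect the real work to lie.
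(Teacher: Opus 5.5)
Your counterexample is correct: the statement is false whenever $k\ge 3$ and $d\ge 2$, so no proof of it can exist. If $k>3$, the extra prototypes can be placed inside the triangle with zero weight, leaving $\mathrm{diam}(P)$ and $V$ unchanged.

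The paper's own argument breaks at the step you isolate. It asserts that, because $\bar p_n$ is a convex combination of the prototypes, $\|p_j-\bar p_n\|\le\tfrac12\mathrm{diam}(P)$ for every $j$. That pointwise claim already fails for $k=2$: with $q_n=(1-\delta,\delta)$ one has $\|p_2-\bar p_n\|=(1-\delta)\|p_1-p_2\|$.

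Your computation $V=q_{ni}q_{nj}\|p_i-p_j\|^2$ and your collinear Popoviciu argument are right. They capture the actual mechanism behind the constant $\tfrac14$, which the paper's pointwise bound misses: large distances are always multiplied by small weights. The correct general bound comes from the pairwise form of the variance:
\begin{equation*}
  V(P,q_n)=\tfrac12\sum_{i,j}q_{ni}q_{nj}\|p_i-p_j\|^2
  \le\tfrac12\bigl(1-\|q_n\|^2\bigr)\mathrm{diam}(P)^2
  \le\tfrac{k-1}{2k}\,\mathrm{diam}(P)^2 .
\end{equation*}
This is attained by uniform weights on a regular simplex and reduces to $\tfrac14$ exactly when $k=2$. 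Your Jung bound $V\le\tfrac{m}{2(m+1)}\mathrm{diam}(P)^2$ is a complementary correction, indexed by the affine dimension $m$ of the active set.

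The genuine gap is your last paragraph. The corollary, like Theorem~\ref{thm:decomp}, quantifies over every $q_n\in\Delta^{k-1}$. It therefore cannot be proved ``as worded'' by restricting to a regime; it has to be restated.

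Moreover, the low-temperature regime you propose does not give the constant $\tfrac14$ even for softmax assignments. For $T>0$ every $q_{nj}$ in \eqref{eq:softassign} is strictly positive, so your two-active-prototype case never applies literally when $k\ge 3$. Ties also survive annealing. Take $p_1,p_2,p_3$ to be an equilateral triangle of side $D$ and $z_n$ the midpoint of $p_1p_2$. Then $q_n=(\tfrac{1-\epsilon}{2},\tfrac{1-\epsilon}{2},\epsilon)$ with $\epsilon=\epsilon(T)\in(0,\tfrac13)$, and $V=\tfrac{D^2}{4}(1+2\epsilon-3\epsilon^2)>\tfrac{D^2}{4}$ at \emph{every} temperature. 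With $z_n$ at the centroid instead, $V=\tfrac{D^2}{3}$ for all $T$. Only when the nearest prototype is unique does $V\to 0$ as $T\to 0$, and that is a pointwise limit, not a uniform bound.

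The defensible repair is therefore either an explicit support hypothesis (at most two active prototypes, or collinear active prototypes) or the $k$-dependent constant above.
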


The decomposition~\eqref{eq:decomp} reveals that $\mathcal{L}_q$ always
overestimates $L_{\mathrm{OLS}}$ by the prototype variance.
The gap vanishes only in the hard-assignment limit and grows with prototype
dispersion.

\subsection{Temperature and entropic structure}

Since $\mathcal{L}_q = d_n^\top q_n$ is linear in $q_n$, its unconstrained
minimizer over $\Delta^{k-1}$ is always a vertex (hard assignment, equivalent to
$k$-means).
Soft assignments arise solely from entropic regularization:

\begin{proposition}[Entropic regularization]
\label{prop:entropic}
The soft assignments in~\eqref{eq:softassign}, $q_{nj}(T) \propto
\exp(-d_{nj}/T)$, are the unique minimizer of:
\begin{equation}
  \min_{q_n \in \Delta^{k-1}} \left\{ d_n^\top q_n - T \cdot H(q_n) \right\},
\end{equation}
with $\lim_{T \to 0} q_n(T) = e_{j^*}$ ($k$-means hard assignment, where
$e_{j^*} \in \{0,1\}^k$ is the standard basis vector for the nearest prototype
$j^* = \arg\min_j d_{nj}$) and
$\lim_{T \to \infty} q_n(T) = \tfrac{1}{k}\mathbf{1}_k$ (uniform).
\end{proposition}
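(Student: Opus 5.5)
The plan is to treat the objective $G(q) = d_n^\top q - T\,H(q)$ on $\Delta^{k-1}$ as a strictly convex problem. We first establish existence and uniqueness of the minimizer, then identify it through the Lagrangian stationarity conditions, and finally read off both temperature limits from the closed form.

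For uniqueness, $-H(q) = \sum_j q_j\log q_j$ (with $0\log 0 = 0$) is continuous on the compact set $\Delta^{k-1}$. It is also strictly convex, since its Hessian on the relative interior is $\mathrm{diag}(1/q_j) \succ 0$, and strict convexity extends to the closed simplex because $t\log t$ is strictly convex on $[0,\infty)$. For $T>0$, $G$ is therefore a linear term plus a strictly convex continuous function. A minimizer exists by Weierstrass and is unique by strict convexity.

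\textbf{Locating the minimizer.} The one subtle point is that the minimizer could a priori sit on the boundary of the simplex, where the Lagrange conditions need not hold with equality. I would rule this out first. If $q_j = 0$ for some $j$, then the directional derivative of $G$ toward the interior is $-\infty$, since $\frac{d}{dt}\,t\log t \to -\infty$ as $t \to 0^+$. Concretely, moving mass $\varepsilon$ from a coordinate with $q_i>0$ to $j$ changes $G$ by $T\varepsilon\log\varepsilon + O(\varepsilon)$, which is negative for small $\varepsilon$, so such a point cannot be the minimizer. The minimizer is thus interior, where only the equality constraint $\mathbf{1}^\top q = 1$ is active. Stationarity of $d_n^\top q + T\sum_j q_j\log q_j + \nu(\mathbf{1}^\top q - 1)$ gives
\[
d_{nj} + T(\log q_j + 1) + \nu = 0,
\]
so $q_j \propto \exp(-d_{nj}/T)$. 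Normalizing yields exactly \eqref{eq:softassign}. Alternatively, one can bypass the boundary argument entirely with the Gibbs variational identity
\[
G(q) = T\,\mathrm{KL}(q\,\|\,q^*) - T\log\sum_\ell e^{-d_{n\ell}/T}, \qquad q^* \propto e^{-d_n/T}.
\]
This gives existence, uniqueness and the formula at once, because $\mathrm{KL}\ge 0$ with equality iff $q = q^*$. I would probably present this route as the main argument and the KKT computation as a remark.

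\textbf{Limits in $T$.} Divide numerator and denominator by $e^{-d_{nj^*}/T}$, which gives
\[
q_{nj}(T) = \frac{e^{-(d_{nj}-d_{nj^*})/T}}{\sum_\ell e^{-(d_{n\ell}-d_{nj^*})/T}}.
\]
\begin{itemize}
\item As $T\to 0$, each exponent with $d_{n\ell} > d_{nj^*}$ tends to $-\infty$, so $q_n(T) \to e_{j^*}$. This requires the nearest prototype to be unique; under ties the limit is the uniform distribution over the tied set, and I would state that caveat explicitly.
\item As $T\to\infty$, every exponent tends to $0$, so $q_n(T) \to \tfrac1k\mathbf{1}_k$.
\end{itemize}
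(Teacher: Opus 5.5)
Your proposal is correct. Its KKT part is the paper's proof. The paper writes the Lagrangian $d_n^\top q_n + T\sum_j q_{nj}\log q_{nj} - \lambda(\mathbf{1}^\top q_n - 1)$, solves $d_{nj} + T(\log q_{nj}+1) = \lambda$, normalizes to get the softmax, and asserts each temperature limit in one line.

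That argument only exhibits a stationary point. It does not use convexity to show the point is the minimizer, and it does not prove uniqueness. It also silently drops the constraints $q_n \ge 0$, even though the stationarity equation is meaningless at $q_{nj}=0$. Your strict-convexity step and the $T\varepsilon\log\varepsilon + O(\varepsilon)$ boundary estimate close exactly these gaps.

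The Gibbs identity $G(q) = T\,\mathrm{KL}(q\,\|\,q^*) - T\log\sum_\ell e^{-d_{n\ell}/T}$, which you plan to make the main argument, is a genuinely different route. It works directly on the closed simplex and yields existence, uniqueness and the closed form at once from Gibbs' inequality. It needs no multiplier or interiority argument, and it identifies the optimal value as the free energy $-T\log\sum_\ell e^{-d_{n\ell}/T}$. Its only cost relative to the paper's Lagrangian is that it verifies a guessed $q^*$ rather than deriving the softmax form.

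Your caveat on ties is also a genuine correction to the statement. The limit $q_n(T)\to e_{j^*}$ presupposes a unique nearest prototype; otherwise the limit is uniform over the minimizers of $d_{nj}$.
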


\begin{corollary}[Monotonicity of $V$]
$V(P, q_n(T))$ is monotone non-decreasing in $T$, with $V|_{T \to 0} = 0$ and
$V|_{T \to \infty} = \mathrm{Var}(P) \triangleq \tfrac{1}{k}\sum_j\|p_j\|^2
- \|\tfrac{1}{k}\sum_j p_j\|^2$ (the unweighted prototype variance).
\end{corollary}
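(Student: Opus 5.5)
The limits follow directly from Proposition~\ref{prop:entropic}; the monotonicity is the real question, and I expect it to fail in general. For $T\to 0$, Proposition~\ref{prop:entropic} gives $q_n(T)\to e_{j^*}$. Since $V$ is continuous on $\Delta^{k-1}$ and vanishes at vertices (Theorem~\ref{thm:decomp}), $V\to 0$. For $T\to\infty$, $q_n(T)\to\tfrac1k\mathbf 1_k$. Write $s_j=\|p_j\|^2$ and expand $V(P,q)=\sum_j q_j s_j-\|Pq\|^2$; evaluating at the uniform vector gives exactly $\mathrm{Var}(P)$.

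For monotonicity I would differentiate along the Gibbs path. From $q_j\propto e^{-d_j/T}$ one gets
\[
\frac{dq_n}{dT}=\frac{1}{T^2}\,\Sigma_{q_n} d_n .
\]
The gradient is $\nabla_q V = s-2P^\top P q$. Now $d_n = \|z_n\|^2\mathbf 1 - 2P^\top z_n + s$ and $\Sigma_{q_n}\mathbf 1=0$, so
\[
\frac{dV}{dT}=\frac{1}{T^2}\,(s-2P^\top Pq_n)^\top \Sigma_{q_n}\,(s-2P^\top z_n).
\]
This is a $q_n$-weighted covariance between two functions of $j$:
\begin{itemize}
\item $a_j=\|p_j\|^2-2p_j^\top\bar p_n=\|p_j-\bar p_n\|^2-\|\bar p_n\|^2$, the squared distance of $p_j$ to the current mixture;
\item $b_j=\|p_j-z_n\|^2-\|z_n\|^2$, the squared distance of $p_j$ to the sample.
\end{itemize}
So $dV/dT\ge0$ exactly when prototypes far from $z_n$ tend to be far from $\bar p_n$, in the $q_n$-weighted sense. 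The proof would hinge on establishing this nonnegative correlation, and this is the step I expect to be the main obstacle.

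In fact I expect the step to fail in general for $k\ge3$. Take $d=1$, $p_1=p_3=-1$, $p_2=1$, and $z_n$ close to $1$. By symmetry $q_1=q_3$, so $V=4q_2(1-q_2)$. As $T$ increases, $q_2$ decreases monotonically from $1$ to $1/3$. It therefore passes $q_2=1/2$, where $V=1$, before settling at $V=8/9=\mathrm{Var}(P)$. So $V$ rises and then falls, and the corollary as stated cannot hold without extra hypotheses. I would present this counterexample explicitly.

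I would then prove the statement under a restricted hypothesis. The first candidate is $k=2$, where it is true. There $V=q_1q_2\|p_1-p_2\|^2$, and $q_1(T)$ moves monotonically from $\{0,1\}$ toward $1/2$, so $q_1q_2$ is non-decreasing. A more general sufficient condition is a positive-correlation assumption on $(a_j,b_j)$ under $q_n(T)$. A third option is to weaken the claim itself, to ``$V$ starts at $0$, ends at $\mathrm{Var}(P)$, and is non-decreasing for $T$ small''. This small-$T$ version follows from the displayed derivative: near the vertex $e_{j^*}$, the dominant term comes from the next-closest prototype, whose $a_j$ and $b_j$ are both positive.
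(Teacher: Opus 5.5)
Your counterexample is correct. The statement is false as written whenever $k\ge 3$, and the paper's proof of it is invalid.

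With $p_1=p_3=-1$, $p_2=1$ and any $z_n>0$, one has $V=1-(2q_{n2}-1)^2=4q_{n2}(1-q_{n2})$. Here $q_{n2}=\bigl(1+2e^{-(d_{n1}-d_{n2})/T}\bigr)^{-1}$ decreases strictly from $1$ to $1/3$. So $V$ rises to $1$ and then falls back to $8/9=\mathrm{Var}(P)$. Taking $p_1=\dots=p_{k-1}=-1$, $p_k=1$ gives the same behaviour for any $k\ge 3$. Your treatment of the two limits is the same as the paper's.

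The paper argues monotonicity in three steps:
\begin{itemize}
\item it writes $V=\mathrm{tr}(P^\top P\,\Sigma_{q_n})$;
\item it notes that $\mathrm{tr}\,\Sigma_{q_n(T)}=1-\|q_n(T)\|^2$ is non-decreasing;
\item it invokes a ``monotone trace inequality'' with $P^\top P\succeq 0$.
\end{itemize}
The last step is the error. To get $\mathrm{tr}(A\,\Sigma_{q_n(T)})$ non-decreasing for an arbitrary $A\succeq 0$, you need $\Sigma_{q_n(T)}$ to be non-decreasing in the Loewner order. Monotonicity of its trace suffices only in degenerate cases such as $A\propto I_k$.

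Your example isolates this failure exactly. Along its path $1-\|q_n\|^2=1-q_{n2}^2-\tfrac12(1-q_{n2})^2$ does increase monotonically, yet $V$ does not.

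Your derivative formula also explains why the trace claim holds while the $V$ claim fails. Replacing $\nabla_q V$ by $\nabla_q(1-\|q\|^2)=-2q$ amounts to taking $a_j=-2q_{nj}$. This is comonotone with $b_j$, because $q_{nj}$ decreases in $d_{nj}$, so the weighted covariance is non-negative and the trace statement is true. For $V$, however, $a_j$ measures distance to the mixture $\bar p_n$ rather than to $z_n$, and nothing forces comonotonicity.

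The same example also refutes the paper's proposition on separation intensity versus $T$, whose proof rests on the same trace reasoning. There $\|P\Sigma_{q_n}\|_F^2=6q_{n2}^2(1-q_{n2})^2$, which is again maximal at $q_{n2}=1/2$.

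Your restricted positive results are sound, but two points need tightening.
\begin{itemize}
\item In the small-$T$ claim, it is the differences that are positive, not $a_j$ and $b_j$ themselves: $a_j-a_{j^*}=\|p_j-p_{j^*}\|^2+O(\epsilon)$ and $b_j-b_{j^*}=d_{nj}-d_{nj^*}$, where $\epsilon$ is the total mass off $j^*$. Writing the covariance as $\tfrac12\sum_{i,j}q_{ni}q_{nj}(a_i-a_j)(b_i-b_j)$ makes this transparent. The pairs $(j^*,j)$ contribute $q_{nj}\|p_j-p_{j^*}\|^2(d_{nj}-d_{nj^*})>0$ at order $\epsilon$. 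Everything else, including the shift of $\bar p_n$ away from $p_{j^*}$, is $O(\epsilon^2)$.
\item Both the $T\to 0$ limit and the small-$T$ argument need a unique nearest prototype. With ties, $q_n(T)$ tends to the uniform distribution on the tied set and $V$ need not vanish. The paper makes the same implicit assumption.
\end{itemize}
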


Temperature $T$ is therefore the master parameter of DDCL: it controls the
softness of assignments, the magnitude of $V$, the backbone learning signal,
and the implicit separation force (Section~\ref{sec:collapse}).

\subsection{Gradient analysis}

Define the assignment covariance matrix:
\begin{equation}
  \Sigma_{q_n} = \mathrm{diag}(q_n) - q_n q_n^\top \in \mathbb{R}^{k \times k}.
  \label{eq:Sigma}
\end{equation}

\begin{proposition}[Gradients w.r.t.\ $q_n$]
\label{prop:grad_q}
\begin{align}
  \nabla_{q_n} L_{\mathrm{OLS}} &= 2P^\top(\bar{p}_n - z_n),\\
  \nabla_{q_n} \mathcal{L}_q   &= d_n \quad\text{(constant in $q_n$)},\\
  \nabla_{q_n} V               &= (\|p_j\|^2)_j - 2P^\top\bar{p}_n.
\end{align}
\end{proposition}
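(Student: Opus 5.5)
The three identities follow by direct differentiation, treating $q_n$ as a free vector in $\mathbb{R}^k$ and $z_n$, $P$ as fixed. Each function involved is a polynomial in $q_n$ of degree at most two. We work with the single-sample forms $L_{\mathrm{OLS}} = \|z_n - Pq_n\|^2$, $\mathcal{L}_q = \sum_j q_{nj}\|z_n - p_j\|^2 = d_n^\top q_n$ and $V(P,q_n) = \sum_j q_{nj}\|p_j - \bar p_n\|^2$ with $\bar p_n = Pq_n$.

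\emph{First identity.} Write $L_{\mathrm{OLS}} = (z_n - Pq_n)^\top(z_n - Pq_n)$ and apply the chain rule through the affine map $q_n \mapsto Pq_n$. This gives
\[
\nabla_{q_n} L_{\mathrm{OLS}} = -2P^\top(z_n - Pq_n) = 2P^\top(\bar p_n - z_n).
\]

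\emph{Second identity.} Since $\mathcal{L}_q = d_n^\top q_n$ and $d_n$ does not depend on $q_n$, the gradient is simply $d_n$, which is constant in $q_n$.

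\emph{Third identity.} This is the only step needing care, because $V$ depends on $q_n$ both through the weights $q_{nj}$ and through $\bar p_n$. Rather than differentiate that composite form directly, I would first expand it. Using $\sum_j q_{nj}p_j = \bar p_n$,
\[
V = \sum_j q_{nj}\|p_j\|^2 - 2\bar p_n^\top \sum_j q_{nj} p_j + \Big(\sum_j q_{nj}\Big)\|\bar p_n\|^2 .
\]
The question is what to do with the factor $\sum_j q_{nj}$ in the last term.

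If we impose $\mathbf{1}^\top q_n = 1$ before differentiating, then $V = \sum_j q_{nj}\|p_j\|^2 - \|Pq_n\|^2$. Its gradient is $(\|p_j\|^2)_j - 2P^\top P q_n = (\|p_j\|^2)_j - 2P^\top \bar p_n$, which is the stated formula.

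If instead we keep $\sum_j q_{nj}$ as a free variable, differentiating the last term produces an extra $\|\bar p_n\|^2\mathbf{1}_k$. That term comes from the derivative of the coefficient $\sum_j q_{nj}$, and the contributions from $\bar p_n$ inside the last two terms combine to $-2P^\top\bar p_n$ either way. The extra term is a multiple of $\mathbf{1}_k$, hence orthogonal to the tangent space $\{v : \mathbf{1}^\top v = 0\}$ of $\Delta^{k-1}$. So the two versions agree on every feasible direction, and projected-gradient dynamics on the simplex do not distinguish them.

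I would therefore state explicitly that the gradient is taken of the simplex-reduced expression $V = \sum_j q_{nj}\|p_j\|^2 - \|Pq_n\|^2$, or equivalently that it is defined modulo $\mathbf{1}_k$. This is the main point requiring care in the proof.

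\emph{Consistency check.} By Theorem~\ref{thm:decomp}, $\mathcal{L}_q = L_{\mathrm{OLS}} + V$ on the simplex, so the gradients should agree there. The sum of the first and third formulas is
\[
2P^\top\bar p_n - 2P^\top z_n + (\|p_j\|^2)_j - 2P^\top\bar p_n = (\|p_j\|^2 - 2p_j^\top z_n)_j ,
\]
and this equals $d_n - \|z_n\|^2\mathbf{1}_k$. It differs from $d_n$ only by a multiple of $\mathbf{1}_k$, which confirms the identities up to the same normal component, consistent with the convention above.
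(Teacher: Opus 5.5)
Your proof is correct and follows the paper's own argument: direct differentiation of $\|z_n - Pq_n\|^2$ and of $d_n^\top q_n$, and differentiation of $V$ in the simplex-reduced form $\sum_j q_{nj}\|p_j\|^2 - \|Pq_n\|^2$, exactly as the paper does. Your extra remark is also correct and makes explicit something the paper leaves implicit: the unreduced form of $V$ gives an additional $\|\bar p_n\|^2\mathbf{1}_k$, and the first and third formulas sum to $d_n - \|z_n\|^2\mathbf{1}_k$ rather than $d_n$, so the three gradients are consistent with $\mathcal{L}_q = L_{\mathrm{OLS}} + V$ only modulo $\mathbf{1}_k$, i.e.\ on the tangent space of $\Delta^{k-1}$.
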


\begin{proposition}[Gradients w.r.t.\ $P$]
\label{prop:grad_P}
\begin{align}
  \nabla_P L_{\mathrm{OLS}} &= 2(\bar{p}_n - z_n)q_n^\top,\\
  \nabla_P \mathcal{L}_q   &= 2(P - z_n\mathbf{1}_k^\top)\mathrm{diag}(q_n),\\
  \nabla_P V               &= 2P\Sigma_{q_n}.
\end{align}
\end{proposition}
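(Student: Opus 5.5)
The three identities in Proposition~\ref{prop:grad_P} are direct matrix-calculus computations in which $z_n$ and $q_n$ are held fixed; that is, we differentiate with respect to $P$ only, with no stop-gradient path through $q_n$. I would handle them one at a time and then use Theorem~\ref{thm:decomp} as a consistency check, since the decomposition $\mathcal{L}_q = L_{\mathrm{OLS}} + V$ holds identically in $P$ and must therefore be preserved under $\nabla_P$.

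\emph{First identity.} Write $L_{\mathrm{OLS}} = \|z_n - Pq_n\|^2$ and take the differential:
\[
dL_{\mathrm{OLS}} = 2(Pq_n - z_n)^\top (dP)\,q_n = \mathrm{tr}\!\left(2 q_n(\bar{p}_n - z_n)^\top dP\right).
\]
Reading off the gradient gives $\nabla_P L_{\mathrm{OLS}} = 2(\bar{p}_n - z_n)q_n^\top$.

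\emph{Second identity.} Since $\mathcal{L}_q = \sum_j q_{nj}\|z_n - p_j\|^2$, the $j$-th column of the gradient is $2q_{nj}(p_j - z_n)$. Stacking the columns gives $2(P - z_n\mathbf{1}_k^\top)\mathrm{diag}(q_n)$.

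\emph{Third identity.} I would expand $V$ using $\mathbf{1}^\top q_n = 1$:
\[
V = \sum_j q_{nj}\|p_j\|^2 - \|\bar{p}_n\|^2 = \mathrm{tr}\!\left(P\,\mathrm{diag}(q_n)P^\top\right) - \|Pq_n\|^2 .
\]
Differentiating term by term gives $2P\,\mathrm{diag}(q_n) - 2Pq_nq_n^\top = 2P\Sigma_{q_n}$, using the definition~\eqref{eq:Sigma}.

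\emph{Cross-check.} As an independent verification, I would confirm that the first and third gradients sum to the second:
\[
2\bar{p}_n q_n^\top - 2z_nq_n^\top + 2P\,\mathrm{diag}(q_n) - 2\bar{p}_n q_n^\top = 2P\,\mathrm{diag}(q_n) - 2z_n q_n^\top .
\]
This equals $2(P - z_n\mathbf{1}_k^\top)\mathrm{diag}(q_n)$ because $\mathbf{1}_k^\top \mathrm{diag}(q_n) = q_n^\top$.

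The only real subtlety is the expansion of $V$ into the trace form above. The centering step requires the simplex constraint $\sum_j q_{nj} = 1$. Without it, an extra factor of $(\mathbf{1}^\top q_n)$ appears, and the clean covariance form $\Sigma_{q_n}$ would fail. I would state this dependence explicitly.
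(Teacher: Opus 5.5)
Your proposal is correct and follows essentially the same route as the paper. The paper also differentiates $L_{\mathrm{OLS}} = \|z_n - Pq_n\|^2$ and $\mathcal{L}_q = \sum_j q_{nj}\|z_n - p_j\|^2$ directly, and obtains $\nabla_P V = 2P\,\mathrm{diag}(q_n) - 2Pq_nq_n^\top = 2P\Sigma_{q_n}$ from the expanded form $V = \sum_j q_{nj}\|p_j\|^2 - \|Pq_n\|^2$. Your explicit notes that $q_n$ is held fixed and that the expansion uses $\mathbf{1}^\top q_n = 1$, and your consistency check against Theorem~\ref{thm:decomp}, are sound additions that the paper leaves implicit.
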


\begin{proposition}[Gradients w.r.t.\ $z_n$]
\label{prop:grad_z}
Under stop-gradient on $q_n$:
\begin{equation}
  \nabla_{z_n} \mathcal{L}_q\big|_{\mathrm{sg}}
  = \nabla_{z_n} L_{\mathrm{OLS}}\big|_{\mathrm{sg}}
  = 2(z_n - \bar{p}_n).
\end{equation}
Under full gradient, both losses acquire a correction $\mathcal{O}(T^{-1})$
proportional to $\Sigma_{q_n}$, and
$\nabla_{z_n}\mathcal{L}_q - \nabla_{z_n}L_{\mathrm{OLS}} = \nabla_{z_n}V$.
\end{proposition}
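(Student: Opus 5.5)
The statement to prove is Proposition~\ref{prop:grad_z}. I would split it into the stop-gradient case and the full-gradient case, working with a single sample throughout.

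\emph{Stop-gradient case.} Treat $q_n$ as a constant. Then $\mathcal{L}_q=\sum_j q_{nj}\|z_n-p_j\|^2$ differentiates term by term to
\[
\sum_j 2q_{nj}(z_n-p_j)=2(z_n-Pq_n)=2(z_n-\bar p_n),
\]
using $\mathbf{1}^\top q_n=1$. The loss $L_{\mathrm{OLS}}=\|z_n-Pq_n\|^2$ gives $2(z_n-\bar p_n)$ directly. A quicker route is Theorem~\ref{thm:decomp}: under stop-gradient, $V(P,q_n)$ has no dependence on $z_n$, so its gradient vanishes and the two gradients coincide.

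\emph{Full-gradient case.} Here $q_n=q_n(z_n)$ is given by \eqref{eq:softassign}. I would first compute the Jacobian of the softmax. With $d_{nj}=\|z_n-p_j\|^2$ and $\nabla_{z_n}d_{nj}=2(z_n-p_j)$,
\[
\frac{\partial q_n}{\partial z_n}=-\frac{1}{T}\,\Sigma_{q_n}\,\frac{\partial d_n}{\partial z_n},
\qquad
\frac{\partial d_n}{\partial z_n}=2\bigl(z_n\mathbf{1}_k^\top-P\bigr)^\top .
\]
This is the standard identity: the softmax Jacobian equals $\Sigma_{q_n}$ times the Jacobian of the logits $-d_n/T$. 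By the chain rule, each loss gains a correction term $(\partial q_n/\partial z_n)^\top\nabla_{q_n}L$:
\begin{itemize}
\item For $\mathcal{L}_q$, Proposition~\ref{prop:grad_q} gives $\nabla_{q_n}\mathcal{L}_q=d_n$, so the correction is $-\tfrac{2}{T}(z_n\mathbf{1}^\top-P)\Sigma_{q_n}d_n$.
\item For $L_{\mathrm{OLS}}$, the correction is $-\tfrac{2}{T}(z_n\mathbf{1}^\top-P)\Sigma_{q_n}\cdot 2P^\top(\bar p_n-z_n)$.
\end{itemize}
Both corrections carry the factor $T^{-1}\Sigma_{q_n}$, which gives the claimed $\mathcal O(T^{-1})$ order and proportionality to $\Sigma_{q_n}$. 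One simplification is available: $\Sigma_{q_n}\mathbf{1}=0$, so $z_n\mathbf{1}^\top\Sigma_{q_n}=0$ and the prefactor reduces to $P\Sigma_{q_n}$.

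\emph{The identity $\nabla_{z_n}\mathcal{L}_q-\nabla_{z_n}L_{\mathrm{OLS}}=\nabla_{z_n}V$.} Theorem~\ref{thm:decomp} holds identically for every $q_n\in\Delta^{k-1}$. It therefore holds along the curve $z_n\mapsto(z_n,q_n(z_n))$, and differentiating both sides gives the identity at once. As a consistency check, I would compute $\nabla_{z_n}V$ explicitly. It equals $(\partial q_n/\partial z_n)^\top\nabla_{q_n}V$, with $\nabla_{q_n}V=(\|p_j\|^2)_j-2P^\top\bar p_n$ from Proposition~\ref{prop:grad_q}. I would then verify that
\[
d_n-2P^\top(\bar p_n-z_n)=(\|p_j\|^2)_j-2P^\top\bar p_n+\|z_n\|^2\mathbf{1}.
\]
The extra term $\|z_n\|^2\mathbf{1}$ is annihilated by $\Sigma_{q_n}$.

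The only real obstacle is bookkeeping. One must be careful about transposes in the softmax Jacobian, and must use $\Sigma_{q_n}\mathbf{1}=0$ to discard the constant shifts. Once those two points are handled, the result follows from the decomposition identity.
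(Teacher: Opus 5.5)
Your proposal is correct and follows the same route as the paper's proof: term-by-term differentiation with $q_n$ frozen for the stop-gradient case, and the chain rule through the softmax for the full gradient. It is also more complete. The paper only asserts that $\partial q_{nj}/\partial z_n$ contributes a term $\propto T^{-1}\Sigma_{q_n}$ and leaves the difference identity implicit, whereas you derive the Jacobian $-\tfrac{1}{T}\Sigma_{q_n}\,\partial d_n/\partial z_n$ explicitly and obtain $\nabla_{z_n}\mathcal{L}_q-\nabla_{z_n}L_{\mathrm{OLS}}=\nabla_{z_n}V$ by differentiating Theorem~\ref{thm:decomp} along $z_n\mapsto(z_n,q_n(z_n))$.
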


\begin{corollary}[Stop-gradient as design choice]
With stop-gradient on $q_n$, $\mathcal{L}_q$ and $L_{\mathrm{OLS}}$ give identical
backbone updates.
The difference between the two losses is felt exclusively in the DCL module
gradients $\nabla_P$ and the assignment gradients $\nabla_{q_n}$.
Without stop-gradient, both losses enrich the backbone signal by a term
$\propto T^{-1}\Sigma_{q_n}$, coupling backbone and assignment dynamics.
\end{corollary}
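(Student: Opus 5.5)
The corollary is a direct consequence of Propositions~\ref{prop:grad_z} and~\ref{prop:grad_P} together with the decomposition of Theorem~\ref{thm:decomp}. For each claim I will start from the chain rule through the DDCL graph: the backbone parameters $\theta$ reach both losses only through $z_n=f_\theta(x_n)$. Here I treat $P$ as a separate node, and handle its dependence on $F_\theta$ through the DCL separately below. So $\nabla_\theta \mathcal{L} = \sum_n (\partial z_n/\partial\theta)^\top \nabla_{z_n}\mathcal{L}$.

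\textbf{First claim (stop-gradient).} With $q_n$ detached, Proposition~\ref{prop:grad_z} gives $\nabla_{z_n}\mathcal{L}_q|_{\mathrm{sg}} = \nabla_{z_n}L_{\mathrm{OLS}}|_{\mathrm{sg}} = 2(z_n-\bar p_n)$. The direct check is short. We have $\nabla_{z_n}\sum_j q_{nj}\|z_n-p_j\|^2 = 2\sum_j q_{nj}(z_n-p_j) = 2(z_n-Pq_n)$, using $\mathbf 1^\top q_n=1$. Equivalently, $V(P,q_n)$ does not depend on $z_n$ when $q_n$ is frozen, so Theorem~\ref{thm:decomp} forces the two $z_n$-gradients to coincide. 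Substituting into the chain rule yields identical $\theta$-updates.

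\textbf{Prototype path through the DCL.} The subtle point is that $P = F_\theta^\top W_2$ also depends on $\theta$. Under stop-gradient, the two losses then differ on the $\theta$-gradient by the term routed through $\nabla_P V = 2P\Sigma_{q_n}$ (Proposition~\ref{prop:grad_P}). I would resolve this by reading ``backbone updates'' as the per-sample signal $\nabla_{z_n}$ entering through the assignment path. The contribution through $P$ is exactly what the corollary calls ``the DCL module gradients $\nabla_P$.'' This gives the second sentence: since $\nabla_P\mathcal{L}_q-\nabla_P L_{\mathrm{OLS}}=2P\Sigma_{q_n}\neq 0$, and by Proposition~\ref{prop:grad_q} $\nabla_{q_n}\mathcal{L}_q-\nabla_{q_n}L_{\mathrm{OLS}}=\nabla_{q_n}V\neq0$ in general, these are the only places where the two losses differ. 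Making this bookkeeping precise is the main obstacle, because the statement is ambiguous there. I would state the convention explicitly rather than try to prove a stronger identity.

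\textbf{Third claim (no stop-gradient).} I will differentiate the softmax~\eqref{eq:softassign}. Its Jacobian is $\partial q_n/\partial d_n = -T^{-1}\Sigma_{q_n}$, and $\partial d_{nj}/\partial z_n = 2(z_n-p_j)$. For any loss $L(q_n,z_n)$ this gives
\begin{equation*}
\nabla_{z_n}L = \nabla_{z_n}L|_{\mathrm{sg}} - \tfrac{2}{T}\,(z_n\mathbf 1_k^\top - P)\,\Sigma_{q_n}\nabla_{q_n}L .
\end{equation*}
I then insert the $q_n$-gradients from Proposition~\ref{prop:grad_q}: $d_n$ for $\mathcal{L}_q$, and $2P^\top(\bar p_n-z_n)$ for $L_{\mathrm{OLS}}$. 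Each loss therefore acquires a correction proportional to $T^{-1}\Sigma_{q_n}$. This correction couples the backbone to the assignment dynamics, and its difference between the two losses equals $\nabla_{z_n}V$, consistent with Proposition~\ref{prop:grad_z}.
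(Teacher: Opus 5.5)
Your proposal is correct and follows the same route as the paper's proof. You use the chain rule through $z_n$ with Proposition~\ref{prop:grad_z} for the stop-gradient identity, Proposition~\ref{prop:grad_P} for the differing DCL gradients, and the softmax Jacobian $-T^{-1}\Sigma_{q_n}$ for the extra backbone term, which you compute correctly where the paper only asserts it (your correction simplifies to $\tfrac{2}{T}P\Sigma_{q_n}\nabla_{q_n}L$ because $\Sigma_{q_n}\mathbf{1}_k=0$). Your remark that $P=F_\theta^\top W_2$ gives $\theta$ a second route into both losses, along which they differ by $2P\Sigma_{q_n}$ even under stop-gradient, is also correct; the paper silently drops this route by writing $\nabla_\theta L=(\nabla_{z_n}L)\cdot\nabla_\theta f_\theta$, so stating the convention explicitly, as you propose, is the right fix.
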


\noindent
\textit{Properties of $\Sigma_{q_n}$}: $\Sigma_{q_n} \succeq 0$;
$\mathrm{rank}(\Sigma_{q_n}) \le k-1$ with $q_n \in \ker(\Sigma_{q_n})$;
$\mathrm{tr}(\Sigma_{q_n}) = 1 - \|q_n\|^2 \in [0,(k-1)/k]$;
$\Sigma_{q_n} \to 0$ as $T \to 0$;
$\Sigma_{q_n} \to \tfrac{1}{k}I_k - \tfrac{1}{k^2}\mathbf{1}\mathbf{1}^\top$
as $T \to \infty$.

\subsection{Prototype collapse}
\label{sec:collapse}

Prototype collapse (multiple prototypes converging to the same point) is one
of the most dangerous failure modes in deep clustering.

\begin{proposition}[$L_{\mathrm{OLS}}$: collapse is a stable fixed point]
If $p_i = p_j = p^*$, then $\partial L_{\mathrm{OLS}}/\partial p_i$ and
$\partial L_{\mathrm{OLS}}/\partial p_j$ point in the same direction.
No differential force acts to separate them; collapse is a stable attractor of
$L_{\mathrm{OLS}}$.
\end{proposition}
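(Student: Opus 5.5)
The plan is to use the gradient formula of Proposition~\ref{prop:grad_P}, $\nabla_P L_{\mathrm{OLS}} = 2(\bar{p}_n - z_n)q_n^\top$, read column by column. Column $i$ gives
\[
\partial L_{\mathrm{OLS}}/\partial p_i = 2q_{ni}(\bar{p}_n - z_n),
\]
and column $j$ gives the same with $q_{nj}$ in place of $q_{ni}$. Both are nonnegative multiples of the single vector $\bar{p}_n - z_n$, so they are parallel and point in the same direction whenever both weights are positive. If one weight vanishes, that gradient is zero, and the claim of ``no differential force'' still holds.

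To make this precise, I would pass to center-of-mass and relative coordinates,
\[
c = \tfrac12(p_i+p_j), \qquad \delta = p_i - p_j.
\]
Then $\partial L_{\mathrm{OLS}}/\partial \delta = \tfrac12(\partial_{p_i} - \partial_{p_j})L_{\mathrm{OLS}} = (q_{ni}-q_{nj})(\bar{p}_n - z_n)$. This expression is independent of $\delta$ and vanishes whenever $q_{ni}=q_{nj}$. At collapse ($p_i=p_j$), the softmax \eqref{eq:softassign} gives identical distances and hence $q_{ni}=q_{nj}$. Therefore the relative gradient is exactly zero, and the gradient flow leaves the collapsed manifold $\{\delta=0\}$ invariant: no force can split the pair.

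For the ``stable attractor'' part, I would check that with stop-gradient on $q_n$, $L_{\mathrm{OLS}}$ depends on $p_i,p_j$ only through $q_{ni}p_i + q_{nj}p_j$. Near collapse, with $q_{ni}\approx q_{nj}=\bar q$, this equals $2\bar q\, c + \tfrac{\bar q}{2}\cdot 0\cdot\delta$ to first order. Hence $\delta$ is a flat (neutral) direction of the loss. Summing over samples preserves this, since each sample contributes the same structure.

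The main obstacle is that ``stable attractor'' is stronger than what this computation gives: at first order one only gets neutral stability, with zero restoring force and zero separating force. I would therefore state precisely that the collapsed set is invariant and that no first-order separating force exists. This contrasts with $\mathcal{L}_q$, where $\nabla_P V = 2P\Sigma_{q_n}$ contributes the antisymmetric component $2q_{ni}q_{nj}\cdots$ that the next proposition exploits. Any residual drift of $\delta$ would come only from second-order $T^{-1}$ effects through $q_n$, and I would argue these vanish by the symmetry $i\leftrightarrow j$ at the collapsed point.
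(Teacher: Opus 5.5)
Your opening paragraph is the paper's entire proof. Columns $i$ and $j$ of $\nabla_P L_{\mathrm{OLS}}=2(\bar p_n-z_n)q_n^\top$ are nonnegative multiples of $\bar p_n-z_n$, and the paper then simply asserts that collapse is a stable attractor. You are right that nothing in this computation delivers attractivity.

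The gap is in your fallback to neutral stability. The swap $i\leftrightarrow j$ sends $\delta\mapsto-\delta$, so symmetry only forces $\dot\delta$ to be an odd function of $\delta$. It removes the constant term (your invariance of $\{\delta=0\}$) but not the linear term, and the $T^{-1}$ effects through $q_n$ enter exactly at linear order.

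Since $d_{ni}-d_{nj}=-2(z_n-c)^\top\delta$, the softmax \eqref{eq:softassign} gives
$q_{ni}-q_{nj}=\tfrac{2\hat q_n}{T}(z_n-c)^\top\delta+O(\|\delta\|^3)$, where $\hat q_n$ is the common value of $q_{ni},q_{nj}$ at collapse. Meanwhile $\bar p_n=\bar p_n^0+O(\|\delta\|^2)$ is even in $\delta$. The flow generated by the paper's own gradient formula (with $q_n$ re-evaluated by \eqref{eq:softassign} along the trajectory) is $\dot\delta=-\tfrac{2}{N}\sum_n(q_{ni}-q_{nj})(\bar p_n-z_n)$. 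It therefore linearizes to
\[
\dot\delta=\frac{4}{NT}\sum_{n=1}^N\hat q_n\,(z_n-\bar p_n^0)(z_n-c)^\top\delta .
\]
For $k=2$ one has $\bar p_n^0=c$ and $\hat q_n=\tfrac12$, so the matrix is $\tfrac{2}{NT}\sum_n(z_n-c)(z_n-c)^\top\succeq 0$, nonzero unless every $z_n=c$. The collapsed configuration is thus linearly unstable in $\delta$, not neutral. Without stop-gradient the same fact reads
$L_{\mathrm{OLS}}=\tfrac1N\sum_n\|z_n-c\|^2-\tfrac{1}{NT}\sum_n\bigl((z_n-c)^\top\delta\bigr)^2+O(\|\delta\|^4)$, which is a saddle.

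Your flatness claim holds only if $q_n$ is literally frozen, which is not the setting of \eqref{eq:softassign}. Relatedly, $\partial L_{\mathrm{OLS}}/\partial\delta$ is not independent of $\delta$, because $\bar p_n$ contains the term $\tfrac{q_{ni}-q_{nj}}{2}\delta$.

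Consequently neither ``stable attractor'' nor your ``neutral'' replacement can be established in the generality stated. What the computation supports is parallelism of the two gradients at $p_i=p_j$ and invariance of the collapsed manifold. That invariance follows from permutation symmetry alone and holds verbatim for $\mathcal{L}_q$, so it does not distinguish the two losses.

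The contrast you intend to draw with $\mathcal{L}_q$ also has the opposite sign from what you expect. With $q_n$ held at $q_{ni}=q_{nj}=\hat q_n$, columns $i$ and $j$ of $\nabla_P V=2P\Sigma_{q_n}$ differ by exactly $2\hat q_n\,\delta$. Gradient descent on $V$ therefore contracts $\delta$; for $k=2$, $V=q_{ni}q_{nj}\|p_i-p_j\|^2$.
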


\begin{proposition}[$\mathcal{L}_q$: collapse is an unstable saddle]
Consider a perturbation $p_i = p^* + \varepsilon u$, $p_j = p^* - \varepsilon u$,
$\|u\| = 1$.
Under gradient descent on $\mathcal{L}_q$, the net force along $u$ separating
$p_i$ from $p_j$ is:
\begin{equation}
  F_{\mathrm{sep}} = 2\varepsilon(q_{ni} + q_{nj}) > 0,
\end{equation}
absent under $L_{\mathrm{OLS}}$ at first order.
Hence collapse is an unstable saddle of $\mathcal{L}_q$.
\end{proposition}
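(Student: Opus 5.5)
I will read the statement as a first-order computation about the relative motion of $p_i$ and $p_j$ under the prototype gradient of $\mathcal{L}_q$ from Proposition~\ref{prop:grad_P}, and contrast it with $L_{\mathrm{OLS}}$. The plan is to track the separation vector $s = p_i - p_j = 2\varepsilon u$ under the gradient flow $\dot P = -\nabla_P \mathcal{L}_q$, with $q_n$ treated as fixed (stop-gradient, or simply frozen at first order in $\varepsilon$, since $q_n$ depends smoothly on $P$).

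\textbf{Step 1: the relative force under $\mathcal{L}_q$.} Take column $\ell$ of $\nabla_P \mathcal{L}_q = 2(P - z_n\mathbf{1}_k^\top)\mathrm{diag}(q_n)$, which gives $\partial\mathcal{L}_q/\partial p_\ell = 2q_{n\ell}(p_\ell - z_n)$. Substituting $p_i = p^*+\varepsilon u$ and $p_j = p^*-\varepsilon u$ gives
\[
\dot p_i - \dot p_j = -2q_{ni}(p^*-z_n) + 2q_{nj}(p^*-z_n) - 2\varepsilon(q_{ni}+q_{nj})u .
\]
Under the stated perturbation $q_{ni}=q_{nj}$ at order zero, because at collapse the two distances are equal. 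The zeroth-order terms therefore cancel. Any $\mathcal{O}(\varepsilon)$ asymmetry in $q$ produced by the perturbation multiplies $(p^*-z_n)$. I would handle that contribution separately, either by absorbing it or by noting that it is second order once projected appropriately.

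\textbf{Step 2: the sign and the interpretation.} This is where I expect the main obstacle. The term $-2\varepsilon(q_{ni}+q_{nj})u$ is the one the statement calls $F_{\mathrm{sep}}$. Read literally, it acts as a restoring term on $s$, not a repelling one, because the $\mathrm{diag}(q_n)$ part pulls each prototype toward $z_n$. The separating effect has to come from isolating the variance component. Using $\mathcal{L}_q = L_{\mathrm{OLS}} + V$ from Theorem~\ref{thm:decomp} together with $\nabla_P V = 2P\Sigma_{q_n}$, I would compute the $u$-component of the \emph{difference} between the $\mathcal{L}_q$ and $L_{\mathrm{OLS}}$ relative forces. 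First, $\nabla_P L_{\mathrm{OLS}} = 2(\bar p_n - z_n)q_n^\top$ gives the two columns the directions $2q_{ni}(\bar p_n - z_n)$ and $2q_{nj}(\bar p_n - z_n)$, which are parallel and, with $q_{ni}=q_{nj}$, equal. Hence $L_{\mathrm{OLS}}$ exerts no differential force along $u$ at first order; this is the preceding proposition. The whole $\varepsilon$-term in $\dot p_i - \dot p_j$ thus comes from $V$, and its magnitude is $2\varepsilon(q_{ni}+q_{nj})$. I would then argue that this is the component singled out by $\Sigma_{q_n}$: since $\Sigma_{q_n}\succeq 0$ has $q_n$ in its kernel, the collapsed direction $e_i - e_j$ lies outside $\ker\Sigma_{q_n}$ whenever $q_{ni}+q_{nj}>0$. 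Consequently $V$ is strictly sensitive to $s$, and the sign convention used for $F_{\mathrm{sep}}$ is the one measuring the force the $V$ component exerts relative to $L_{\mathrm{OLS}}$. Fixing this sign convention consistently with the paper's claim, and making sure it is not silently the opposite, is the delicate point. I would check it directly by expanding $V$ to second order in $\varepsilon$, where the pair contributes $\varepsilon^2(q_{ni}+q_{nj})$ up to terms involving $\bar p_n$.

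\textbf{Step 3: conclusion.} Positivity follows from $q_{ni}+q_{nj}>0$, which holds for the softmax~\eqref{eq:softassign} at any finite $T>0$. Absence of the term under $L_{\mathrm{OLS}}$ is Step~2. The saddle claim would then follow by exhibiting one direction (along $u$, in the $(i,j)$ antisymmetric mode) in which the $V$-induced first-order dynamics do not return to collapse, while symmetric modes remain attracted toward $z_n$.
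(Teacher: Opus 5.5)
Your Step~1 is correct and is the same computation as the paper's proof. Your worry in Step~2 is well founded, and the gap is that the sign you need does not exist; no convention can supply it. With $q_n$ frozen and $q_{ni}=q_{nj}=:q_{n0}$ (equal at collapse), your formula gives $\dot p_i-\dot p_j=-2q_{n0}(p_i-p_j)$. Hence $\frac{d}{dt}\langle p_i-p_j,u\rangle=-2\varepsilon(q_{ni}+q_{nj})<0$, and the separation decays exponentially. The paper obtains $F_{\mathrm{sep}}=+2\varepsilon(q_{ni}+q_{nj})$ only by reading off the $u$-component of $\partial\mathcal{L}_q/\partial p_i-\partial\mathcal{L}_q/\partial p_j$, i.e.\ of the gradient rather than the descent direction. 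That is exactly the silent sign flip you flagged.

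Routing the term through $V$ cannot reverse it. You correctly show that $L_{\mathrm{OLS}}$ exerts no differential force, so the differential force of $\mathcal{L}_q$ is entirely that of $V$. But column $\ell$ of $\nabla_P V=2P\Sigma_{q_n}$ is $2q_{n\ell}(p_\ell-\bar p_n)$, so descent on $V$ pulls every prototype toward $\bar p_n$: the variance term causes the contraction, not separation. Your planned second-order check would confirm this rather than the claim. $V$ grows by exactly $\varepsilon^2(q_{ni}+q_{nj})>0$, so collapse is a strict local minimum of $V$ along $u$. Indeed, for fixed $q_n$ with positive entries, $\mathcal{L}_q$ is strictly convex in $P$, with Hessian $2\,\mathrm{diag}(q_n)\otimes I_d$. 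So your Step~3 claim that the $V$-induced first-order dynamics do not return to collapse is false.

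The term you propose to absorb or dismiss as second order is in fact first order, and the $P$-dependence of $q_n$ it encodes is the only possible source of instability. Suppose $q_n$ is the softmax evaluated at the perturbed prototypes. Then $d_{ni}-d_{nj}=-4\varepsilon\langle z_n-p^*,u\rangle$ gives $q_{ni}-q_{nj}=\frac{4q_{n0}\varepsilon}{T}\langle z_n-p^*,u\rangle+O(\varepsilon^2)$. This multiplies the $O(1)$ vector $z_n-p^*$ and contributes $+\frac{8q_{n0}\varepsilon}{T}\langle z_n-p^*,u\rangle^2$ to $\frac{d}{dt}\langle p_i-p_j,u\rangle$. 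The net rate is therefore $4q_{n0}\varepsilon\bigl(2\langle z_n-p^*,u\rangle^2/T-1\bigr)$, averaged over $n$ for the batch loss.

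Consequently, collapse is unstable only below a critical temperature, of order twice the $q$-weighted spread of the data along $u$. This is the familiar phase transition of deterministic annealing. For large $T$ the restoring term dominates, and this also holds if one differentiates through the softmax, since the corrections are again $O(\varepsilon/T)$. The proposition as stated, with unconditional instability driven by $2\varepsilon(q_{ni}+q_{nj})$, therefore cannot be proved. A correct version needs a temperature threshold and must locate the instability in the $P$-dependence of $q_n$, not in $\nabla_P V$.
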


The source of this implicit separation is $\nabla_P V = 2P\Sigma_{q_n}$.
To see why, recall from the decomposition $\mathcal{L}_q = L_{\mathrm{OLS}} + V$
that minimizing $\mathcal{L}_q$ simultaneously minimizes the reconstruction error
\emph{and} the prototype variance $V$.
The gradient $\nabla_P V = 2P\Sigma_{q_n}$ points in the direction that
\emph{increases} $V$, so the gradient \emph{descent} step on $\mathcal{L}_q$
pushes the prototypes in the direction that \emph{decreases} $V$ --- that is,
it increases the spread of the prototype distribution.
Since the assignment covariance $\Sigma_{q_n} = \mathrm{diag}(q_n) - q_nq_n^\top$
is positive semidefinite with eigenvalues that grow when assignments are more
uniform, the force $P\Sigma_{q_n}$ is strongest when the prototypes are close
and the assignments are undecided --- precisely the regime of incipient collapse.
This is why $\nabla_P V$ acts as a self-regulating separation force: it is
automatically largest where it is most needed.

\begin{proposition}[Separation intensity vs.\ $T$]
$\|\nabla_P V\|_F = 2\|P\Sigma_{q_n}\|_F$ is monotone non-decreasing in $T$,
vanishing as $T \to 0$.
At $T \to \infty$:
\begin{equation}
  \|\nabla_P V\|_F \to 2\left\|P\!\left(\tfrac{1}{k}I_k
  - \tfrac{1}{k^2}\mathbf{1}\mathbf{1}^\top\right)\right\|_F.
\end{equation}
\end{proposition}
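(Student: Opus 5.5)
The plan is to handle the two limits first and then treat monotonicity separately, since the latter is the only non-routine part.

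\textbf{Limits.} By Proposition~\ref{prop:entropic}, $q_n(T)\to e_{j^*}$ as $T\to 0$ (assuming a unique nearest prototype). Since $\Sigma_{q}=\mathrm{diag}(q)-qq^\top$ is continuous in $q$ and $\Sigma_{e_{j^*}}=\mathrm{diag}(e_{j^*})-e_{j^*}e_{j^*}^\top=0$, we get $\|\nabla_P V\|_F=2\|P\Sigma_{q_n}\|_F\to 0$. Likewise $q_n(T)\to\frac1k\mathbf 1_k$ as $T\to\infty$ gives $\Sigma_{q_n}\to\frac1k I_k-\frac1{k^2}\mathbf 1\mathbf 1^\top$, which is the stated limit. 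The formula $\nabla_P V=2P\Sigma_{q_n}$ is taken from Proposition~\ref{prop:grad_P}. If ties occur among the nearest prototypes, the $T\to0$ limit is the uniform distribution over the tied set, and the argument needs that extra caveat.

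\textbf{Monotonicity: setting up the derivative.} I would differentiate $g(T)=\|P\Sigma_{q(T)}\|_F^2=\mathrm{tr}(\Sigma_q P^\top P\Sigma_q)$ along the Gibbs curve. From $q_j\propto e^{-d_j/T}$ one gets
\[
\frac{dq}{dT}=\frac{1}{T^2}\,\Sigma_q d_n ,
\]
the same identity that underlies the Corollary on monotonicity of $V$. Then
\[
\frac{d\Sigma_q}{dT}=\mathrm{diag}(\dot q)-\dot q q^\top-q\dot q^\top ,
\qquad
g'(T)=2\,\mathrm{tr}\!\big(\Sigma_q G\,\dot\Sigma_q\big),\quad G=P^\top P .
\]
The remaining task is to show $g'(T)\ge 0$. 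The natural approach is to write everything as covariances under $q$. With $\dot q=T^{-2}\Sigma_q d$, the quantity $\mathrm{tr}(\Sigma_q G\dot\Sigma_q)$ becomes a third-order cumulant-type expression $\mathrm{Cov}_q(\cdot,\cdot,d)$ in the prototype coordinates. I would then try to exploit that $d_j=\|z-p_j\|^2$ is itself a function of $P$: $d=\|z\|^2\mathbf 1-2P^\top z+\mathrm{diag}(G)$. This link between $d$ and $G$ is the only thing tying the direction of motion of $q$ to the quantity being measured.

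\textbf{Main obstacle.} The main obstacle is the sign of this third-order term. For an arbitrary PSD $G$ and arbitrary $d$, $\|P\Sigma_q\|_F$ need not be monotone along the tempered path, because heating can move mass toward a prototype close to $\bar p_n$ and reduce the spread. So the general claim cannot follow from positivity of $\Sigma_q$ alone.

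My first attempt would be the case $k=2$, where everything reduces to $q_1(1-q_1)$ times a fixed vector and monotonicity in $T$ follows from $q_1\to\frac12$ monotonically. I would then check whether the general case can be reduced to it via the entropy increase $\frac{d}{dT}H(q)\ge0$. If a counterexample appears (three collinear prototypes with $z$ near the middle one is the case to test), I would state the result under an explicit additional hypothesis, for instance that $T\mapsto\Sigma_{q(T)}$ is Loewner-increasing, which holds when the tempered path moves $q$ monotonically toward uniform. Under that hypothesis the claim is immediate: $\|P\Sigma\|_F^2=\mathrm{tr}(\Sigma G\Sigma)$ is monotone in $\Sigma\succeq0$ on commuting families, and in general it is monotone via $\Sigma^2$ ordering for $k=2$.
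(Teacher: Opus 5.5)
Your handling of the limits is correct. The tie caveat is a real correction that the paper omits: with several nearest prototypes the $T\to0$ limit of $\|\nabla_P V\|_F$ is not zero.

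On monotonicity your suspicion is right, and the paper's proof does not survive it. The paper argues only that $\mathrm{tr}(\Sigma_{q_n})=1-\|q_n(T)\|^2$ is non-decreasing in $T$, ``hence'' $\|\Sigma_{q_n}\|_F$ and $\|P\Sigma_{q_n}\|_F$ are non-decreasing. That is exactly the trace-to-norm inference you flagged as insufficient, and for $k\ge3$ the claim is false. The cleanest way to see this is the column identity $(P\Sigma_{q_n})_{:,j}=q_{nj}(p_j-\bar p_n)$, which gives
\[
\|P\Sigma_{q_n}\|_F^2=\sum_{j} q_{nj}^2\,\|p_j-\bar p_n\|^2 .
\]
Take $p_1=(-1,0)$, $p_2=(1,0)$, $p_3=(0,-1)$ and $z_n=(\delta,1)$ with $\delta>0$ small. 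Then $d_{n1}-d_{n2}=4\delta$ and $d_{n3}-d_{n2}=2+2\delta$. So $\|P\Sigma_{q_n}\|_F^2$ behaves as follows:
\begin{itemize}
\item it is $\approx 0$ for $T\ll\delta$;
\item it is $\approx\tfrac12$ for $\delta\ll T\ll1$, where $q_n\approx(\tfrac12,\tfrac12,0)$ and $\bar p_n\approx0$;
\item it tends to $\tfrac19\sum_j\|p_j-c\|^2=\tfrac{8}{27}$ as $T\to\infty$, with $c=(0,-\tfrac13)$ the centroid.
\end{itemize}
It rises and then falls. Along the same path $\|\Sigma_{q_n}\|_F^2$ drops from $\tfrac14$ to $\tfrac29$, and $V$ drops from $1$ to $\tfrac89$. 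So neither the paper's intermediate claim nor the monotonicity-of-$V$ corollary can be used as a crutch.

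What you call the main obstacle is therefore a genuine failure of the statement. Your plan leaves it unresolved, and the two concrete next steps you propose would not resolve it.

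\emph{The test case.} The configuration you name is monotone. For $p=(-1,0,1)$ on a line and $z_n=0$, one gets $\|P\Sigma_{q_n}\|_F^2=2s^2$ with $s=q_{n1}=q_{n3}$ increasing in $T$, because heating moves mass \emph{away} from the prototype near $\bar p_n$. Your own mechanism needs the prototype near $\bar p_n$ to be the one \emph{farthest} from $z_n$. Strict convexity of $x\mapsto\|z_n-x\|^2$ rules that out on a line, which is why $p_3$ sits off the line above.

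\emph{The fallback hypothesis.} It is not a usable repair. For the nearest prototype $j^*$ and non-constant $d_n$,
\[
\tfrac{d}{dT}(\Sigma_{q_n})_{j^*j^*}=\dot q_{nj^*}(1-2q_{nj^*})<0
\]
as soon as $q_{nj^*}<\tfrac12$. For $k\ge3$ this holds on a whole half-line of temperatures, so $T\mapsto\Sigma_{q_n(T)}$ is never Loewner non-decreasing there. Moreover, Loewner order without commutation would not give $\mathrm{tr}(G\Sigma_1^2)\le\mathrm{tr}(G\Sigma_2^2)$ anyway.

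\emph{What is actually true.}
\begin{itemize}
\item The two limits, with your tie caveat.
\item The full statement for $k=2$, where your argument is complete.
\item Monotonicity of $\mathrm{tr}(\Sigma_{q_n})=1-\|q_n\|^2$. With $\beta=1/T$ and $\langle d_n\rangle_\beta=\sum_j q_{nj}d_{nj}$ the Gibbs mean, which is non-increasing in $\beta$, this follows from $\frac{d}{d\beta}\log\|q_n\|^2=2(\langle d_n\rangle_\beta-\langle d_n\rangle_{2\beta})\ge0$.
\end{itemize}
A correct version of the proposition has to be restricted to these.
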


\begin{corollary}[Role of $L_{\mathrm{sep}}$]
The explicit term $L_{\mathrm{sep}}$ in~\eqref{eq:Lsep} acts as a backup mechanism,
most necessary when $T$ is small and the implicit force $\nabla_P V$ is weak.
The weight $\eta$ should be chosen inversely proportional to
$T \cdot \mathrm{tr}(\Sigma_{q_n}(T))$.
\end{corollary}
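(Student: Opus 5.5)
The plan is to make the heuristic ``backup'' statement precise by comparing the two separation mechanisms on a common scale and choosing $\eta$ so their combined magnitude stays roughly constant as $T$ varies.

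\emph{Step 1: size of the implicit force.} By the Separation-intensity proposition, $\|\nabla_P V\|_F = 2\|P\Sigma_{q_n}\|_F$. Since $\Sigma_{q_n}\succeq 0$, I would bound this above by $2\|P\|_{2}\,\|\Sigma_{q_n}\|_F \le 2\|P\|_2\,\mathrm{tr}(\Sigma_{q_n})$, using $\|\Sigma\|_F\le \mathrm{tr}(\Sigma)$ for PSD matrices. On the separating direction I would bound it below by the collapse-proposition force $F_{\mathrm{sep}} = 2\varepsilon(q_{ni}+q_{nj})$, which is controlled by the same assignment spread. The conclusion is that the implicit force scales like $\mathrm{tr}(\Sigma_{q_n}(T)) = 1-\|q_n\|^2$. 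By the listed properties of $\Sigma_{q_n}$ and the monotonicity corollary, this quantity is non-decreasing in $T$ and tends to $0$ as $T\to 0$. That establishes the first claim: the implicit force is weakest exactly when $T$ is small.

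\emph{Step 2: the explicit term and the $T$ factor.} Next I would compute $\nabla_{p_i} L_{\mathrm{sep}} = -2\sum_{j\ne i}(p_i-p_j)$. This is independent of $T$, so $\eta\,\nabla L_{\mathrm{sep}}$ supplies a $T$-independent push-apart whose strength is proportional to $\eta$.

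The extra factor $T$ needs a scale comparison. The natural route is Proposition~\ref{prop:entropic}: the effective assignment objective is $d_n^\top q_n - T H(q_n)$. The Hessian correction from differentiating $q_n$ through the softmax is $\propto T^{-1}\Sigma_{q_n}$, as in Proposition~\ref{prop:grad_z}, while the entropic weight contributes a factor $T$. Multiplying the two, the separation signal competing with $L_{\mathrm{sep}}$ in the full (non-stop-gradient) dynamics is of order $T\cdot\mathrm{tr}(\Sigma_{q_n}(T))$.

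\emph{Step 3: the design rule.} To keep the total separating force implicit $+$ explicit approximately at a target level $c$, I would require $\eta \cdot(\text{explicit scale}) \approx c \,/\, (T\,\mathrm{tr}\,\Sigma_{q_n}(T))$. This gives $\eta \propto [T\,\mathrm{tr}(\Sigma_{q_n}(T))]^{-1}$.

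\emph{Main obstacle.} The hardest step is Step 2's justification of the extra factor $T$. The bound on $\|P\Sigma\|_F$ alone only yields $\eta\propto 1/\mathrm{tr}\Sigma$, so the $T$ factor depends on which gradient regime (stop-gradient or full) is adopted. I would therefore state the result explicitly as a scaling heuristic rather than an identity, and treat this statement as a design rule rather than a theorem.
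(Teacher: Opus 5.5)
Your handling of the first sentence is the paper's argument: the separation-intensity proposition plus $\Sigma_{q_n}\to 0$ as $T\to 0$. Your bound $\|\nabla_P V\|_F\le 2\|P\|_2\,\mathrm{tr}(\Sigma_{q_n})$ is actually a sounder way to make it quantitative. It uses only the monotonicity of $\mathrm{tr}(\Sigma_{q_n})=1-\|q_n\|^2$, which is true, not the monotonicity of $\|\Sigma_{q_n}\|_F$ and $\|P\Sigma_{q_n}\|_F$ asserted in the paper's proof of that proposition. That assertion does not follow from the trace: for $k=3$, $q=(\tfrac12,\tfrac12,0)$ has the smaller trace but the larger $\|\Sigma_q\|_F^2$, namely $1/4$ versus $2/9$ at $q=\tfrac13\mathbf{1}_3$.

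Your claimed lower bound, however, fails and should be dropped. At a vertex $q_n=e_i$ one has $F_{\mathrm{sep}}=2\varepsilon$, while $\mathrm{tr}(\Sigma_{q_n})=0$ and $\nabla_P V=0$. So $F_{\mathrm{sep}}$ neither tracks the assignment spread nor bounds $\|\nabla_P V\|_F$ from below. In fact no bound $\|P\Sigma_{q_n}\|_F\ge c\,\mathrm{tr}(\Sigma_{q_n})$ with $c>0$ independent of $P$ can hold: at full collapse $P=p\mathbf{1}_k^\top$ gives $P\Sigma_{q_n}=p(\Sigma_{q_n}\mathbf{1}_k)^\top=0$. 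Only the upper bound is needed for ``weak when $T$ is small''.

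The derivation of the scaling rule contains two steps that do not compute.
\begin{itemize}
\item In Step~2, $T\cdot T^{-1}\Sigma_{q_n}=\Sigma_{q_n}$, so your product gives order $\mathrm{tr}(\Sigma_{q_n})$, not $T\,\mathrm{tr}(\Sigma_{q_n})$. Neither gradient regime supplies the extra $T$: under stop-gradient the implicit force is $2P\Sigma_{q_n}$ with no explicit $T$, and the full-gradient correction carries $T^{-1}$.
\item In Step~3, let $s(T)$ be the implicit strength. Holding $s(T)+\eta\cdot(\text{explicit scale})$ at a level $c$ gives $\eta\propto c-s(T)$, which remains finite as $T\to 0$, not $\eta\propto 1/s(T)$. ``Balanced'' in the sense of comparable magnitudes would instead give $\eta\propto s(T)$.
\end{itemize}

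The paper's proof does not supply the missing derivation either. It only asserts that the scaling ``ensures that the two forces remain balanced''. So the second sentence of the corollary is a prescription, not a consequence of earlier results, and you should present it that way rather than attach a derivation whose steps fail. If you want a rationale matching the paper's wording (``in the absence of entropic regularization''), state it openly as a modelling choice. Measure the implicit mechanism by the entropic term $T\,H(q_n)$ of Proposition~\ref{prop:entropic}, with $H(q_n)$ proxied by $1-\|q_n\|^2=\mathrm{tr}(\Sigma_{q_n})$; this gives $T\,\mathrm{tr}(\Sigma_{q_n})$ directly.

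Finally, $\eta\propto(T\,\mathrm{tr}(\Sigma_{q_n}))^{-1}$ diverges as $T\to 0$. For fixed $\lambda$ it eventually violates the condition $\lambda>2\eta\binom{k}{2}$ of Remark~\ref{rem:lambda_eta}, on which the proof of Theorem~\ref{thm:lyapunov} relies.
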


\subsection{Negative feedback and stability}
\label{sec:feedback}

To analyze the self-regulating dynamics of DDCL, three scalar
quantities that capture the essential state of the system at each training step.
Intuitively, the training process involves two competing tendencies: prototypes
want to separate (to cover the data better), while assignments want to sharpen
(to commit to a single prototype per sample).
The three quantities below make this competition precise.

The feedback quantities are defined as:
\begin{align}
  \mathcal{S}(P)     &= \tfrac{1}{\binom{k}{2}}\sum_{i<j}\|p_i - p_j\|^2
    \quad\text{(separation)},\\
  \mathcal{K}(q_n)   &= \|q_n\|^2
    \quad\text{(concentration, $\in [1/k,1]$)},\\
  \mathcal{I}(\Sigma_{q_n}) &= \mathrm{tr}(\Sigma_{q_n}) = 1 - \mathcal{K}
    \quad\text{(separation force)}.
\end{align}
$\mathcal{S}(P)$ measures how spread out the prototypes are on average: it is
zero when all prototypes coincide (total collapse) and grows as they move apart.
$\mathcal{K}(q_n) = \|q_n\|^2$ measures how concentrated the soft assignment of
sample $n$ is: it equals $1$ when $q_n$ is a one-hot vector (hard assignment to
a single prototype) and equals $1/k$ when $q_n = \tfrac{1}{k}\mathbf{1}_k$
(maximally uncertain, uniform over all prototypes).
$\mathcal{I}(\Sigma_{q_n}) = 1 - \mathcal{K}$ is the complementary measure of
assignment uncertainty, and --- as established by $\nabla_P V = 2P\Sigma_{q_n}$
above --- it directly controls the magnitude of the implicit separation force.

The cycle reads:
\begin{equation}
  \mathcal{S} \uparrow \;\Rightarrow\;
  \mathcal{K} \uparrow \;\Rightarrow\;
  \mathcal{I} \downarrow \;\Rightarrow\;
  \|\nabla_P V\|_F \downarrow \;\Rightarrow\;
  \mathcal{S} \downarrow.
\end{equation}
This chain of implications describes a self-correcting loop, readable as follows.
When prototypes are well separated ($\mathcal{S}$ large), each sample clearly
belongs to one prototype: assignments sharpen ($\mathcal{K}$ increases).
Sharp assignments mean $\Sigma_{q_n}$ shrinks toward zero ($\mathcal{I}$ falls),
which weakens the implicit separation force $\|\nabla_P V\|_F$.
With less force pushing prototypes apart, separation $\mathcal{S}$ naturally
decreases --- closing the loop.
The same logic runs in reverse when prototypes are too close: uniform assignments
maximize $\mathcal{I}$, which maximizes the separation force and pushes prototypes
apart.
The practical consequence is that the system self-corrects in both directions:
it resists prototype collapse \emph{and} resists excessive dispersion, without
any external intervention.

\begin{theorem}[Negative feedback]
\label{thm:feedback}
The cycle is negative: it resists both collapse ($\mathcal{S} \to 0$) and
excessive dispersion ($\mathcal{S} \to \infty$), and there exists at least one
equilibrium $(\mathcal{S}^*, \mathcal{K}^*)$.
\end{theorem}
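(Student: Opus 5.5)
The plan is to reduce the cycle to a two-dimensional scalar map between separation and concentration, and then read off the claims from monotonicity and a fixed-point argument. I would work in a reduced, mean-field setting: encoder features are frozen, and the cycle is tracked through the batch-averaged quantities $\mathcal{K} = \frac{1}{N}\sum_n \|q_n\|^2$ and $\mathcal{S}(P)$.

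\textbf{Step 1: concentration as a function of separation.} I would define a response map $\mathcal{K} = g(\mathcal{S})$ using the softmax~\eqref{eq:softassign} at fixed $T$. When all prototypes coincide, $d_n$ has equal entries, so $q_n = \tfrac{1}{k}\mathbf{1}_k$ and $g(0) = 1/k$. As the pairwise distances grow, the gaps $d_{nj} - d_{nj^*}$ grow for samples not equidistant from two prototypes, and Proposition~\ref{prop:entropic} (used in the direction $d/T \to \infty$) gives $q_n \to e_{j^*}$, so $g \to 1$. For monotonicity I would differentiate along a uniform dilation $P \mapsto (1+s)P$ about the data centroid: the derivative of $\|q_n\|^2$ is $2 q_n^\top \partial_s q_n = -\tfrac{2}{T}\, q_n^\top \Sigma_{q_n} \partial_s d_n$. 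I would then argue that this has the sign that makes $g$ non-decreasing. This is the step I expect to be the main obstacle, because for general non-dilational changes of $P$ monotonicity can fail. I would therefore either state the result along dilation families or prove it in expectation for well-clustered data, and flag this restriction explicitly.

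\textbf{Step 2: the separation force as a function of concentration.} Using $\nabla_P V = 2P\Sigma_{q_n}$ (Proposition~\ref{prop:grad_P}), I would bound $\|\nabla_P V\|_F$ above and below by multiples of $\mathrm{tr}(\Sigma_{q_n}) = 1-\mathcal{K}$ times a norm of the centred $P$. This shows the force is a decreasing function of $\mathcal{K}$ at fixed scale and vanishes as $\mathcal{K}\to 1$. The two attracting forces are the $L_{\mathrm{OLS}}$ pull of prototypes toward the data and the $\lambda P$ term, and both grow at least linearly in $\mathcal{S}$. Together these define a gradient-flow scalar equation $\dot{\mathcal{S}} = h(\mathcal{S}, g(\mathcal{S}))$.

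\textbf{Step 3: the two resistance claims.} Near $\mathcal{S} = 0$, the collapse-saddle proposition gives $\dot{\mathcal{S}} > 0$, since $F_{\mathrm{sep}} = 2\varepsilon(q_{ni}+q_{nj}) > 0$ and $g \approx 1/k$ maximises $\mathcal{I}$. For large $\mathcal{S}$, $\mathcal{I} \to 0$ kills the separation force, while the restoring terms dominate (coercivity, Remark~\ref{rem:coercivity}), so $\dot{\mathcal{S}} < 0$. The signs of the four links therefore multiply to a negative loop gain, which establishes that the cycle is negative.

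\textbf{Step 4: existence of an equilibrium.} By continuity of $h$ and the intermediate value theorem on $[\varepsilon, R]$, there is an $\mathcal{S}^*$ with $h(\mathcal{S}^*, g(\mathcal{S}^*)) = 0$. Setting $\mathcal{K}^* = g(\mathcal{S}^*) \in [1/k, 1]$ gives the equilibrium pair. Equivalently, one can apply Brouwer's theorem to the composite map on the compact set $[\varepsilon, R] \times [1/k, 1]$.
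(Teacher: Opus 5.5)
Your outline is the paper's own sketch made explicit: look at the ends $\mathcal{S}\to 0$ and $\mathcal{S}\to\infty$, then conclude by continuity. The step everything rests on, $\dot{\mathcal{S}}>0$ near $\mathcal{S}=0$ in Step~3, does not hold. The paper's sketch (``$\|\nabla_P V\|_F$ is maximal, the system is pushed away from collapse'') has the same defect, so following it will not rescue you.

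\textbf{The $V$-term contracts.} Put $J = I_k - \tfrac1k\mathbf{1}\mathbf{1}^\top$. Then $\mathcal{S} = \tfrac{2}{k-1}\mathrm{tr}(PJP^\top)$ and $\Sigma_{q_n}\mathbf{1}=0$. Along the descent direction $\dot P = -\nabla_P V = -2P\Sigma_{q_n}$ this gives $\dot{\mathcal{S}} = -\tfrac{8}{k-1}\mathrm{tr}(P\Sigma_{q_n}P^\top) = -\tfrac{8}{k-1}V \le 0$. For $k=2$, $q_n=(\tfrac12,\tfrac12)$ it reads $\dot p_1 = -\tfrac12(p_1-p_2)$, $\dot p_2 = -\tfrac12(p_2-p_1)$. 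The collapse proposition you invoke has the same sign slip. There, $2\varepsilon(q_{ni}+q_{nj})$ is the $u$-component of $\partial_{p_i}\mathcal{L}_q-\partial_{p_j}\mathcal{L}_q$, so descent gives $\tfrac{d}{dt}(p_i-p_j)^\top u = -2\varepsilon(q_{ni}+q_{nj})<0$ when $q_{ni}=q_{nj}$.

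\textbf{Near collapse you must compare coefficients.} $P\Sigma_{q_n} = (PJ)\Sigma_{q_n}$ vanishes at collapse, as your own Step~2 bound already says. So ``$\mathcal{I}$ is maximal when $g\approx 1/k$'' produces no dominant push; every term is first order in the centred displacement. Take the bare $\mathcal{L}_q$, with $q$ recomputed by the softmax and the $P$-step at fixed $q$ as in the collapse propositions. Linearize at $p_j=\bar z+\delta_j$, with $\bar z$ and $C$ the feature mean and covariance. For $\tilde\delta_j=\delta_j-\tfrac1k\sum_l\delta_l$ you get
$\dot{\tilde\delta}_j=\tfrac{2}{k}\bigl(\tfrac{2}{T}C-I\bigr)\tilde\delta_j$.
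The $-\tfrac2k\tilde\delta_j$ term is exactly $-\nabla_P V$ at $q_n=\tfrac1k\mathbf{1}$. The destabilising term comes from the first-order tilt of the soft assignments toward the nearer prototype, the deterministic-annealing mechanism. Hence collapse is linearly \emph{stable} for $T>2\|C\|_2$. Differentiating through the softmax only changes the constant, and a $\lambda$-term with centred features only adds $-\lambda\tilde\delta_j$.

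\textbf{Consequences for your argument.}
\begin{enumerate}
\item The left-endpoint sign $h(\varepsilon,g(\varepsilon))>0$ fails in that regime, so the intermediate value argument on $[\varepsilon,R]$ has no sign change to use.
\item The sign count in Step~3 is wrong. With the last link corrected ($\|\nabla_P V\|_F\downarrow$ means \emph{less} contraction, hence $\mathcal{S}\uparrow$), the four links multiply to a positive gain.
\item Existence of \emph{some} equilibrium is cheap: a suitably placed, fully collapsed configuration is always stationary, since there $q_n=\tfrac1k\mathbf{1}$ and all prototypes receive the same velocity. The substance of the claim is a non-collapsed equilibrium, and that is exactly what needs the missing sign.
\end{enumerate}

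\textbf{What a provable version needs.} The repulsion must come from somewhere other than $V$. One option is to restrict to $T$ below the critical temperature, where the assignment tilt wins. The other is to assume $\eta L_{\mathrm{sep}}$ strong enough to beat the $V$-contraction and the centred part of $\tfrac{\lambda}{2}\|P\|_F^2$. That assumption must then be reconciled with the coercivity you use at the large-$\mathcal{S}$ end.

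\textbf{Two smaller issues.} At the large-$\mathcal{S}$ end, only the $\lambda$-term is restoring for prototypes that lose their assignment mass, so your claim that the $L_{\mathrm{OLS}}$ pull grows linearly in $\mathcal{S}$ is wrong. Also, a zero of $h$ along a dilation family is not a stationary point of the $P$-flow. Even with the sign repaired, the equilibrium has to be located for $P$ itself rather than through the scalar reduction.
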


Linearizing around the equilibrium with perturbations
$(s,\kappa) = (\mathcal{S} - \mathcal{S}^*, \mathcal{K} - \mathcal{K}^*)$
gives a $2 \times 2$ linear dynamical system.
Each quantity in the cycle is expanded to first order around
$(\mathcal{S}^*, \mathcal{K}^*)$.
The rate of change of separation $\dot{s}$ depends on two competing effects:
(i) a self-damping term proportional to $b \cdot s$, which captures the
direct effect of the backbone pulling features toward prototypes (tending to
reduce separation at rate $\alpha_{\mathrm{DCL}}$) while the backbone also
spreads them (at rate $\alpha_{\mathrm{bb}}$); and (ii) a cross-coupling
term $-a\kappa$ capturing how an increase in assignment concentration
(more committed assignments) reduces the separation force.
The rate of change of concentration $\dot{\kappa}$ is driven entirely by
the change in separation $+c \cdot s$: when prototypes are farther apart,
assignments sharpen (Boltzmann softmax becomes more peaked).
This gives:
\begin{equation}
  \label{eq:linearized}
  \begin{pmatrix}\dot{s}\\\dot{\kappa}\end{pmatrix}
  = \underbrace{\begin{pmatrix}b & -a\\ c & 0\end{pmatrix}}_{J}
    \begin{pmatrix}s\\\kappa\end{pmatrix},
  \quad a,c > 0,\quad b = \alpha_{\mathrm{bb}} - \alpha_{\mathrm{DCL}},
\end{equation}
where $\alpha_{\mathrm{bb}},\alpha_{\mathrm{DCL}}$ are effective learning rates of
backbone and DCL, respectively.
The Jacobian $J$ has a clear physical reading: the off-diagonal entry $-a < 0$
encodes the inhibitory effect of sharp assignments on separation (more commitment
$\Rightarrow$ less separation force); the entry $+c > 0$ encodes the excitatory
effect of separation on sharpness (farther prototypes $\Rightarrow$ clearer
winner); the diagonal entry $b$ is the net self-feedback on separation, positive
when the backbone dominates and negative when the DCL dominates.

\begin{theorem}[Local stability]
\label{thm:local}
The equilibrium is stable iff $b < 0$; oscillatory stable (spiral) iff additionally
$b^2 < 4ac$; unstable if $b > 0$ dominates.
\end{theorem}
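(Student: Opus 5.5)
The plan is to read Theorem~\ref{thm:local} as a statement about the constant-coefficient linear system~\eqref{eq:linearized}, with $a,c>0$ and $b$ real, and to compute the eigenvalues of $J$ directly. Stability of the equilibrium $(\mathcal{S}^*,\mathcal{K}^*)$ of the linearization means the origin of $(s,\kappa)$ is (asymptotically) stable. By standard linear ODE theory this holds iff both eigenvalues of $J$ have negative real part. For a $2\times 2$ matrix that condition is equivalent to $\operatorname{tr}J<0$ and $\det J>0$ (the Routh--Hurwitz criterion in dimension two).

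First I compute the two invariants:
\begin{equation*}
\operatorname{tr}J=b,\qquad \det J=b\cdot 0-(-a)c=ac>0,
\end{equation*}
where the sign of the determinant follows from $a,c>0$. Since the determinant condition holds automatically, the Routh--Hurwitz condition reduces to $b<0$, which gives the first claim. To make this concrete I would write the characteristic polynomial $\lambda^2-b\lambda+ac=0$, whose roots are
\begin{equation*}
\lambda_\pm=\tfrac{1}{2}\bigl(b\pm\sqrt{b^2-4ac}\bigr).
\end{equation*}
If $b^2\ge 4ac$, the roots are real and $\lambda_+\lambda_-=ac>0$, so both share the sign of $\lambda_++\lambda_-=b$. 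If $b^2<4ac$, the roots are complex conjugates with real part $b/2$. Either way the real parts are negative exactly when $b<0$.

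Second, I treat the spiral case. When $b<0$ and $b^2<4ac$, the eigenvalues are complex with negative real part $b/2$ and nonzero imaginary part $\pm\tfrac12\sqrt{4ac-b^2}$. Solutions therefore have the form $e^{bt/2}$ times a rotation, so trajectories are damped oscillations, i.e. a stable focus. When $b^2\ge 4ac$ and $b<0$, both eigenvalues are real and negative, giving a stable node with no oscillation. This shows that, given $b<0$, the spiral behaviour occurs iff $b^2<4ac$.

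Third, the instability case. If $b>0$, then $\operatorname{Re}\lambda_\pm>0$ in both subcases, so the origin is an unstable node or focus. For the nonlinear equilibrium I would invoke the Hartman--Grobman theorem, which applies whenever $b\neq 0$ because the equilibrium is then hyperbolic: the real parts are nonzero, and no eigenvalue is zero since $\det J=ac\neq 0$.

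The main obstacle is the boundary case $b=0$. There the eigenvalues are purely imaginary, $\pm i\sqrt{ac}$, the linear system is a center, and it is only Lyapunov stable but not asymptotically stable. I would handle this by stating explicitly that ``stable'' means asymptotically stable. Under that reading the ``iff $b<0$'' is correct, and at $b=0$ linearization is inconclusive for the nonlinear flow. I would also note that the theorem concerns the linearized model~\eqref{eq:linearized} as posited, so no derivation of $a,b,c$ from the full dynamics is required.
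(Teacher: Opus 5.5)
Your proof is correct and follows the same route as the paper, which writes down the characteristic polynomial $\lambda^2 - b\lambda + ac = 0$ of $J$ and reads off the sign of the real parts. Your version is more careful in two places. The paper asserts $\mathrm{Re}(\lambda_{1,2}) = b/2$, which holds only when $b^2 < 4ac$, whereas your sum/product argument ($\lambda_+\lambda_- = ac > 0$, $\lambda_+ + \lambda_- = b$) also covers the real-root case. And your observation that $b = 0$ gives a center shows that ``stable'' must mean asymptotically stable for the stated equivalence to hold.
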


\begin{corollary}[Practical stability condition]
\label{cor:stability}
\begin{equation}
  \frac{\alpha_{\mathrm{bb}}}{\alpha_{\mathrm{DCL}}}
  < 1 + \frac{4\|P\|_F}{T \cdot \|\nabla_{z_n}f_\theta\|_F}.
\end{equation}
The backbone learning rate must not exceed the DCL learning rate by too large a
margin.
\end{corollary}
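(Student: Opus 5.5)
The plan is to reduce the corollary to the sign condition $b<0$ of Theorem~\ref{thm:local}. To do that I must replace the heuristic $b=\alpha_{\mathrm{bb}}-\alpha_{\mathrm{DCL}}$ of \eqref{eq:linearized} by a more careful first-order estimate of the diagonal entry $\partial\dot{\mathcal S}/\partial\mathcal S$ at $(\mathcal S^*,\mathcal K^*)$. The stated bound is consistent with a damping coefficient of the form
\[
b=\alpha_{\mathrm{bb}}-\alpha_{\mathrm{DCL}}\Bigl(1+\tfrac{4\|P\|_F}{T\,\|\nabla_{z_n}f_\theta\|_F}\Bigr).
\]
So my target is a derivation of $b$ whose DCL part carries an extra contribution with gain of order $4\|P\|_F/T$ relative to the backbone scale; I have not yet verified that the expansion produces exactly this form. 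Once such an expression is in hand, $b<0$ rearranges directly into the displayed inequality.

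\textbf{Step 1: differentiate the separation along the flow.} I start from
\[
\dot{\mathcal S}=\binom{k}{2}^{-1}\sum_{i<j}2(p_i-p_j)^\top(\dot p_i-\dot p_j),
\]
and split $\dot P$ into a DCL part and a backbone part. The DCL part is $\dot P_{\mathrm{DCL}}=-\alpha_{\mathrm{DCL}}\nabla_P\mathcal L_q$. The backbone part arises because $P=X^\top W_2$ moves when the features move under $-\alpha_{\mathrm{bb}}\nabla_\theta$.

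\textbf{Step 2: the DCL part.} I use Proposition~\ref{prop:grad_P} together with the decomposition of Theorem~\ref{thm:decomp}, writing
\[
\nabla_P\mathcal L_q=\nabla_P L_{\mathrm{OLS}}+2P\Sigma_{q_n}.
\]
The $L_{\mathrm{OLS}}$ part contributes the plain contraction of differences $p_i-p_j$ at rate proportional to $\alpha_{\mathrm{DCL}}$. This gives the ``$1$'' after normalising so that the leading coefficient matches the heuristic $b$.

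\textbf{Step 3: the separation-force term.} The $2P\Sigma_{q_n}$ term must be linearised through its dependence on $\mathcal S$ via the softmax. Using $\partial q_n/\partial z_n=(2/T)\Sigma_{q_n}P^\top$, the same Jacobian behind the $\mathcal O(T^{-1})$ correction in Proposition~\ref{prop:grad_z}, a perturbation of separation changes $\Sigma_{q_n}$ at rate $\mathcal O(\|P\|/T)$. Contracted against $2P$, this should yield a term bounded by $4\|P\|_F/T$ times $\alpha_{\mathrm{DCL}}$. I will take the bound in the regime $\mathrm{tr}(\Sigma_{q_n})\le 1$, using $\|\Sigma_{q_n}\|\le\mathrm{tr}(\Sigma_{q_n})$.

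\textbf{Step 4: the backbone part.} I bound the spreading contribution by $\alpha_{\mathrm{bb}}\|\nabla_{z_n}f_\theta\|_F$ times the same geometric factor, using the chain rule through $z_n=f_\theta(x_n)$ and $p_j=X^\top w_{2,j}$. Dividing both contributions by this common backbone scale puts the $\|\nabla_{z_n}f_\theta\|_F$ in the denominator of the correction term.

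\textbf{Main obstacle.} The hard part is Step 3: obtaining the constant $4$ and the ratio $\|P\|_F/(T\|\nabla_{z_n}f_\theta\|_F)$ rather than merely the right order. This requires an honest linearisation of $P\Sigma_{q_n}(\mathcal S)$, which is only heuristically reducible to scalar $(\mathcal S,\mathcal K)$ coordinates. I would first work out the symmetric two-prototype case, $p_{1,2}=p^*\pm\varepsilon u$, following the computation behind the collapse proposition, where everything is explicit. I would then state the general case as an upper bound rather than an exact coefficient. The result should be presented as a sufficient condition, and it retains the linearised, design-rule status the paper assigns to Theorem~\ref{thm:local}.
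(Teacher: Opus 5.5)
Your route is not the paper's. The paper keeps $b=\alpha_{\mathrm{bb}}-\alpha_{\mathrm{DCL}}$ unchanged, appeals to $b<2\sqrt{ac}$ as the ``full condition,'' and substitutes $a\propto\|\nabla_P V\|_F/\mathcal S$ and $c\propto\alpha_{\mathrm{DCL}}T^{-1}\|\nabla_{z_n}f_\theta\|_F$. Keeping $b<0$, as you do, is what Theorem~\ref{thm:local} actually requires: for $0<b<2\sqrt{ac}$ the roots have real part $b/2>0$. The paper's proportionalities would also put $\|\nabla_{z_n}f_\theta\|_F$ in the numerator under a square root, so the paper's sketch does not produce the constant $4$ or the placement of the Jacobian either.

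Your proposal still has a genuine gap. The whole content of the corollary is the specific form of $b$, and you read that form off the conclusion instead of deriving it. Worse, the mechanism you want to add to $b$ is already in $J$. The softmax-mediated dependence of $2P\Sigma_{q_n}$ on $\mathcal S$ is exactly the loop $\mathcal S\to\mathcal K\to\mathcal I\to\|\nabla_P V\|_F$ encoded by the off-diagonal entries $c$ and $-a$, so putting it on the diagonal counts it twice. If you instead slave $\mathcal K$ to $\mathcal S$ and drop $\kappa$, the $2\times2$ system of Theorem~\ref{thm:local} disappears, and you would have to analyse the scalar law $\dot s=(b-a\,\mathrm{d}\mathcal K/\mathrm{d}\mathcal S)\,s$. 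Step~3 also uses the wrong Jacobian. A change in separation is a change in $P$ at fixed features, which calls for $\partial q_{nj}/\partial p_\ell=\tfrac{2}{T}q_{nj}(\delta_{j\ell}-q_{n\ell})(z_n-p_\ell)$, not $\partial q_n/\partial z_n$.

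Two further problems show the displayed shape cannot come out of your steps as written.

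\emph{The Jacobian cannot land only in the correction term.} Neither the $L_{\mathrm{OLS}}$ contraction (your ``$1$'') nor the separation-force term involves the backbone Jacobian; only the backbone term does. Normalising all three consistently by the backbone scale therefore divides the ``$1$'' too. Up to geometric factors, this gives $\alpha_{\mathrm{bb}}/\alpha_{\mathrm{DCL}}<(1+4\|P\|_F/T)/\|\nabla_{z_n}f_\theta\|_F$. Absorbing the Jacobian into an effective $\alpha_{\mathrm{bb}}$ instead removes it from the correction altogether. The stated form appears only if you normalise the two DCL terms differently, which your Steps~2 and~4 implicitly do. Moreover, the DCL module updates $W_2$, so the prototype motion is $\dot P_{\mathrm{DCL}}=-\alpha_{\mathrm{DCL}}X^\top X\,\nabla_P\mathcal L_q$. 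Even the contraction rate is set by the batch Gram matrix, not by $\alpha_{\mathrm{DCL}}$ alone.

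\emph{Your bounds go the wrong way.} The separation-force contribution is the stabilising one. Bounding it from above via $\|\Sigma_{q_n}\|\le\mathrm{tr}(\Sigma_{q_n})\le 1$ therefore can never certify $b<0$; it yields at most a necessary condition, not the sufficient one you intend. A sufficient condition would need a lower bound on that gain, and none holds uniformly, since the gain vanishes as $\Sigma_{q_n}\to 0$ when assignments harden.

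What your argument can honestly support is the qualitative rule $\alpha_{\mathrm{bb}}<\alpha_{\mathrm{DCL}}$, with an unquantified temperature-dependent margin.
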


The qualitative message of Corollary~\ref{cor:stability} is important for
practitioners.
The right-hand side grows with $\|P\|_F$ (larger prototype spread allows a more
dominant backbone) and shrinks with $T$ (lower temperature means sharper
assignments, so the system becomes more sensitive to imbalance between
$\alpha_{\mathrm{bb}}$ and $\alpha_{\mathrm{DCL}}$).
In practice this means: \emph{keep the backbone learning rate no larger than
the DCL learning rate}, especially at low temperatures.
A ratio $\alpha_{\mathrm{bb}}/\alpha_{\mathrm{DCL}} \in [1/10, 1/3]$ is a safe
starting point; the right-hand side of the inequality then provides a
temperature-dependent bound that can be monitored during training.
When the condition is violated, the backbone evolves too fast relative to the
prototypes, the negative feedback loop breaks down, and the system can enter an
unstable regime in which prototypes fail to track the evolving feature space.

The oscillatory-stable regime (when $b < 0$ and $b^2 < 4ac$) is worth
understanding in concrete terms.
The characteristic polynomial of $J$ has complex conjugate roots with negative
real part, which means the solution to~\eqref{eq:linearized} is a
\emph{damped spiral}: both $\mathcal{S}(P)$ and $\mathcal{K}(q_n)$ oscillate
around their equilibrium values with decreasing amplitude.
In practice this manifests as visible oscillations in the clustering accuracy
curve during training --- the accuracy rises, then dips slightly, rises again, and
so on, with each swing smaller than the last.
This is not a sign of instability or divergence; it is the natural signature of a
system with negative feedback and inertia, analogous to a mass-spring-damper
settling toward equilibrium.
Recognizing this pattern allows the practitioner to distinguish healthy
convergent oscillations from pathological divergence, and to avoid premature
stopping or unnecessary hyperparameter tuning in response to what is, in fact,
correct system behavior.

\subsection{Global Lyapunov stability of the reduced DDCL flow}
\label{sec:lyapunov}

The feedback stability analysis of Sections~\ref{sec:feedback} and
Theorems~\ref{thm:feedback}--\ref{thm:local} is local: it characterizes behavior
near the equilibrium via linearization.
We now establish a \emph{global} stability result for the reduced system in which
the encoder features are frozen.

We work with the following setup.
Assume that the encoder features $z_n \in \mathbb{R}^d$, $n = 1,\dots,N$, are
fixed.
Let $Q = \{q_n\}_{n=1}^N$ with each $q_n \in \Delta^{k-1}$.
Define the regularized reduced DDCL energy:
\begin{equation}
  E(P,Q) = \sum_{n=1}^N\sum_{j=1}^k q_{nj}\|z_n - p_j\|^2
          + \beta L_{\mathrm{bal}}(Q)
          + \gamma L_{\mathrm{ent}}(Q)
          + \eta L_{\mathrm{sep}}(P)
          + \tfrac{\lambda}{2}\|P\|_F^2.
  \label{eq:energy}
\end{equation}
The term $(\lambda/2)\|P\|_F^2$ ($\lambda > 0$) ensures \emph{coercivity in $P$}.
Coercivity means, intuitively, that the energy $E$ grows without bound whenever
the prototypes drift far from the origin: $E(P,Q) \to +\infty$ as
$\|P\|_F \to \infty$.
This matters because $L_{\mathrm{sep}} = -\sum_{i<j}\|p_i-p_j\|^2$ is itself
unbounded \emph{below} as prototypes move apart: without a counterweight,
the optimization landscape would have no lower bound and trajectories could
escape to infinity.
The quadratic term $(\lambda/2)\|P\|_F^2$ acts as a soft barrier: it grows as
$\|P\|_F^2$, which eventually dominates the $-\|P\|_F^2$ growth of $L_{\mathrm{sep}}$
and pulls the energy back up.
The net effect is that the combined term $\eta L_{\mathrm{sep}} + \tfrac{\lambda}{2}\|P\|_F^2$
has a bounded minimizer: prototypes are pushed apart by $L_{\mathrm{sep}}$, but
cannot drift arbitrarily far because $(\lambda/2)\|P\|_F^2$ grows faster.
Every sublevel set $\{(P,Q) : E(P,Q) \le M\}$ is therefore compact in $P$,
which is the key technical condition required to invoke LaSalle's invariance
principle in the proof of Theorem~\ref{thm:lyapunov}.

The continuous-time reduced DDCL dynamics are given by the projected gradient flow:
\begin{align}
  \dot{P}  &= -\nabla_P E(P,Q),\label{eq:flowP}\\
  \dot{q}_n &= \Pi_{T_{q_n}\Delta}\!\left(-\nabla_{q_n} E(P,Q)\right),
  \label{eq:flowq}
\end{align}
where $T_{q_n}\Delta$ denotes the tangent cone of the simplex at $q_n$, and
$\Pi_{T_{q_n}\Delta}$ is the Euclidean projection onto that cone.
These dynamics are the continuous-time counterpart of gradient descent on $P$ and
projected gradient descent on the simplex-constrained assignment vectors $q_n$.

\begin{theorem}[Global Lyapunov stability]
\label{thm:lyapunov}
Assume that:
\begin{enumerate}
  \item the feature vectors $z_n$ are fixed;
  \item $E(P,Q)$ is continuously differentiable;
  \item $L_{\mathrm{bal}}$ and $L_{\mathrm{ent}}$ are bounded below
    (both are non-negative);
  \item $\lambda > 0$.
\end{enumerate}
Then every trajectory of the projected gradient
flow~\eqref{eq:flowP}--\eqref{eq:flowq} is bounded, the energy $E(P(t),Q(t))$ is
non-increasing, and every trajectory converges to the set of KKT stationary points
of the constrained problem.
\end{theorem}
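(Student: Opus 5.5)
The plan is a standard Lyapunov/LaSalle argument for a projected gradient system on a product set. The state space is $\mathcal{X} = \mathbb{R}^{d\times k}\times(\Delta^{k-1})^N$, which is closed and convex.

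\textbf{Step 1: well-posedness.} Because $E$ is $C^1$ (assumption~2), the right-hand side of~\eqref{eq:flowP}--\eqref{eq:flowq} is a projected dynamical system in the sense of Nagurney--Zhang, equivalently the differential inclusion $\dot x \in -\nabla E(x) - N_{\mathcal{X}}(x)$. Its standard existence theory gives absolutely continuous solutions that stay in $\mathcal{X}$. The velocity satisfies $\dot x = \Pi_{T_x\mathcal{X}}(-\nabla E(x))$ for almost every $t$. Since $T_x\mathcal{X} = \mathbb{R}^{d\times k}\times\prod_n T_{q_n}\Delta$, this is exactly the componentwise flow in the statement.

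\textbf{Step 2: monotonicity.} Write $v = \Pi_{T}(-\nabla E)$. The Moreau decomposition $-\nabla E = \Pi_T(-\nabla E) + \Pi_{T^\circ}(-\nabla E)$, with orthogonal summands, gives
\begin{equation*}
\tfrac{d}{dt}E(x(t)) = \langle \nabla E, v\rangle = -\|v\|^2 = -\|\nabla_P E\|_F^2 - \sum_n \bigl\|\Pi_{T_{q_n}\Delta}(-\nabla_{q_n}E)\bigr\|^2 \le 0
\end{equation*}
almost everywhere. Hence $E(P(t),Q(t))$ is non-increasing.

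\textbf{Step 3: boundedness via coercivity.} This is the step I expect to be the main obstacle. The $Q$-component is automatically bounded because the simplex is compact. The first sum in~\eqref{eq:energy} and the terms $\beta L_{\mathrm{bal}}$ and $\gamma L_{\mathrm{ent}}$ are non-negative (assumption~3). So it suffices to show that $\eta L_{\mathrm{sep}}(P)+\tfrac{\lambda}{2}\|P\|_F^2 \to\infty$ as $\|P\|_F\to\infty$. Expanding,
\begin{equation*}
L_{\mathrm{sep}}(P) = -\mathrm{tr}\bigl(P(kI_k - \mathbf{1}\mathbf{1}^\top)P^\top\bigr) \ge -k\|P\|_F^2,
\end{equation*}
because the largest eigenvalue of $kI_k-\mathbf{1}\mathbf{1}^\top$ is $k$. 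The combined term is therefore at least $(\tfrac{\lambda}{2}-\eta k)\|P\|_F^2$. This yields coercivity provided $\lambda > 2\eta k$, which holds automatically when $\eta=0$.

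The quantization term alone cannot rescue coercivity. A prototype whose assignment mass is zero for every sample feels no restoring force from $\sum_n q_{nj}\|z_n-p_j\|^2$. I would therefore make explicit that ``$\lambda>0$'' has to be read as $\lambda>2\eta k$; this is the natural quantitative form of Remark~\ref{rem:coercivity}.

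With this condition, every sublevel set $\{E\le E(x(0))\}$ is compact. By Step~2 the trajectory stays in that set for all $t\ge0$, so it is bounded and global existence follows.

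\textbf{Step 4: convergence to the KKT set.} Integrating Step~2 gives $\int_0^\infty\|v(t)\|^2dt \le E(x(0))-\inf E<\infty$. The forward orbit is precompact, so the $\omega$-limit set $\Omega$ is nonempty, compact and connected, and $\mathrm{dist}(x(t),\Omega)\to0$. The projected field is discontinuous at the simplex boundary, so classical LaSalle does not apply verbatim. I would use the version for differential inclusions with upper semicontinuous, convex-valued right-hand sides. An equivalent route uses the lower semicontinuity of the stationarity measure $x\mapsto\|\Pi_{T_x\mathcal{X}}(-\nabla E(x))\|$, which equals $\mathrm{dist}(-\nabla E(x),N_{\mathcal{X}}(x))$ and is lower semicontinuous because $N_{\mathcal{X}}$ has closed graph.

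Either route shows that $E$ is constant on $\Omega$, that $\Omega$ is weakly invariant, and hence that every point of $\Omega$ has zero projected velocity. This means $-\nabla E(x)\in N_{\mathcal{X}}(x)$. Unpacking the condition: $\nabla_PE=0$, and for each $n$ there exist $\mu_n\in\mathbb{R}$ and $\nu_n\ge0$ with $\nu_n^\top q_n=0$ such that $\nabla_{q_n}E=\mu_n\mathbf{1}-\nu_n$. These are exactly the KKT conditions of the constrained problem, so $x(t)$ converges to the KKT set.

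A side remark for this step: if $\gamma>0$, the entropy gradient contains $\log q_{nj}$, which blows up at the boundary of the simplex. Assumption~2 is then only meaningful on the relative interior. In that case I would check that a face of the simplex cannot be reached in finite time: the entropy term pushes inward with unbounded force there, so the interior is forward invariant and the argument above runs on it.
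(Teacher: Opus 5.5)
Your proposal is correct and follows the same route as the paper: the dissipation identity for the projected flow, coercivity from $\tfrac{\lambda}{2}\|P\|_F^2$ under a quantitative $\lambda$--$\eta$ condition that goes beyond assumption~4, then LaSalle onto the KKT set. It is more careful than the paper on three points:
\begin{itemize}
  \item Your sign $\Pi_{T_{q_n}\Delta}(-\nabla_{q_n}E)$ is the right one; the paper's $\Pi_{T_{q_n}\Delta}(\nabla_{q_n}E)$ differs at boundary points of the simplex, since projection onto a cone is not odd.
  \item Your spectral bound $L_{\mathrm{sep}}\ge -k\|P\|_F^2$ is valid and sharp for every $k\ge 2$, whereas the paper's $L_{\mathrm{sep}}\ge -\binom{k}{2}\|P\|_F^2$ fails for $k=2$ (take $p_2=-p_1$).
  \item You justify LaSalle for the discontinuous projected field rather than merely citing it.
\end{itemize}

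The one slip is your side remark on the entropy. With $+\gamma L_{\mathrm{ent}}$ as written in \eqref{eq:energy}, entropy is penalized, so near a face this term pushes $q_n$ outward with force of order $|\log q_{nj}|$. It therefore reaches the face in finite time, because $\int_0^{1/2} dq/|\log q| < \infty$. Interior invariance holds only for the opposite sign $-\gamma L_{\mathrm{ent}}$, which is what the paper's prose suggests. Under assumption~2 as stated, you do not need the remark at all.
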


\begin{proof}
See~\ref{app:lyapunov}.
\end{proof}

\begin{corollary}[Uniqueness implies global asymptotic stability]
\label{cor:unique}
If $E(P,Q)$ has a unique KKT stationary point $(P^\star, Q^\star)$, then this
equilibrium is globally asymptotically stable.
\end{corollary}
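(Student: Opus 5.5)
The plan is to take Theorem~\ref{thm:lyapunov} as given and argue by LaSalle's invariance principle applied directly to the energy $E$.

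\textbf{Step 1: the energy decreases at the right rate.} First compute the time derivative of $E$ along the flow~\eqref{eq:flowP}--\eqref{eq:flowq}:
\[
\dot E=-\|\nabla_P E\|_F^2+\sum_n \langle \nabla_{q_n}E,\ \Pi_{T_{q_n}\Delta}(-\nabla_{q_n}E)\rangle .
\]
By the Moreau decomposition, $\langle -g,\Pi_{T}(-g)\rangle=\|\Pi_T(-g)\|^2$. Hence
\[
\dot E=-\|\dot P\|_F^2-\sum_n\|\dot q_n\|^2\le 0,
\]
with equality exactly at KKT points. This identifies the stationary set $\mathcal{Z}=\{\dot E=0\}$ with the KKT set.

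\textbf{Step 2: every trajectory is bounded.} The set $Q$ lies in a product of simplices, so it is compact. Boundedness in $P$ comes from coercivity, using $\lambda>0$ together with Theorem~\ref{thm:lyapunov}. Since $E$ is non-increasing, the trajectory stays in the sublevel set $\{E\le E(P(0),Q(0))\}$, which is compact by the coercivity discussion. The $\omega$-limit set $\Omega$ is therefore nonempty, compact and connected, and $\mathrm{dist}((P(t),Q(t)),\Omega)\to 0$. Because $E$ is bounded below on this compact set and non-increasing, it converges to some $E_\infty$. Consequently $E\equiv E_\infty$ on $\Omega$, and LaSalle gives $\Omega\subseteq\mathcal{Z}$.

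\textbf{Step 3: uniqueness gives attractivity.} If the KKT set is the singleton $\{(P^\star,Q^\star)\}$, then $\Omega=\{(P^\star,Q^\star)\}$. Since $\Omega$ attracts the trajectory, every trajectory converges to $(P^\star,Q^\star)$.

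\textbf{Step 4: Lyapunov stability.} Attractivity alone is not enough; I also need stability. Use $V(P,Q)=E(P,Q)-E(P^\star,Q^\star)$ as the Lyapunov function.

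First show that $(P^\star,Q^\star)$ is the unique global minimizer of $E$ on the feasible set. A minimizer exists by coercivity and compactness, and every minimizer is a KKT point, so uniqueness of the KKT point forces the minimizer to be $(P^\star,Q^\star)$. Hence $V>0$ away from the equilibrium and $V=0$ at it.

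Stability then follows from a standard argument. Given $\varepsilon>0$, let
\[
m=\min\{V: \|(P,Q)-(P^\star,Q^\star)\|=\varepsilon\},
\]
where the minimum is taken over the compact feasible sphere; $m>0$ because the equilibrium is the unique minimizer. Continuity of $V$ gives $\delta>0$ such that $V<m$ on the $\delta$-ball. A trajectory starting in that ball keeps $V<m$ for all time, because $V$ is non-increasing, so it can never reach the $\varepsilon$-sphere.

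\textbf{Main obstacle.} The main difficulty is making Step 1 rigorous for the projected flow. The right-hand side is discontinuous on the boundary of the simplex, so I need existence and uniqueness of absolutely continuous solutions. The cleanest route is to view the flow as a monotone-type evolution inclusion (a sweeping process) and rely on the Brezis theory of such systems. For that theory, the chain rule $\dot E=-\|\dot x\|^2$ holds almost everywhere. With the chain rule in hand, the LaSalle argument goes through in its integral form: integrating gives $\int_0^\infty\|\dot x\|^2\,dt<\infty$, and this replaces the pointwise invariance step.
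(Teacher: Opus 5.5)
Your proposal is correct. Steps 1--3 reproduce the paper's argument: the paper invokes Theorem~\ref{thm:lyapunov} (convergence to the KKT set), notes that this set is the single point $(P^\star,Q^\star)$, and concludes that every trajectory converges to it.

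The genuine difference is Step 4. The paper stops at global attractivity and declares the equilibrium globally asymptotically stable. That skips the Lyapunov-stability half of the definition, which attractivity alone does not give in general. Your Step 4 supplies it: a unique KKT point must be the unique global minimizer, so $E-E(P^\star,Q^\star)$ is positive definite and non-increasing. Your argument therefore proves the full statement, whereas the paper's, as written, proves only global convergence.

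You can shorten Step 4. Global minimality follows directly from Step 3, since for any feasible $x_0$ one has $E(x_0)\ge\lim_{t\to\infty}E(x(t))=E(P^\star,Q^\star)$. Strictness holds because any other minimizer would be a second KKT point. This avoids a separate coercivity-based existence argument, which helps because the stated hypothesis $\lambda>0$ does not by itself give coercivity; the appendix actually needs $\lambda>2\eta\binom{k}{2}$.

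Two further points are in your favour. First, your identity $\dot E=-\|\dot P\|_F^2-\sum_n\|\dot q_n\|^2$, built on $\Pi_{T_{q_n}\Delta}(-\nabla_{q_n}E)$, is the correct one. The appendix instead writes $\Pi_{T_{q_n}\Delta}(\nabla_{q_n}E)$. This differs at boundary points of the simplex, where the tangent cone is not a subspace, and it does not characterize KKT points of the minimization problem. Second, your integral-form fallback for possibly non-unique solutions works, but you should state the property it needs: $x\mapsto\|\Pi_{T_x}(-\nabla E(x))\|$ is lower semicontinuous, because tangent cones of a polyhedron can only enlarge at nearby points. That property is what lets you pass from $\int_0^\infty\|\dot x\|^2\,dt<\infty$ to stationarity of every $\omega$-limit point.
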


\paragraph{Interpretation.}
Theorem~\ref{thm:lyapunov} establishes a genuine global stability statement for
the reduced DDCL flow:
\begin{itemize}
  \item all trajectories remain bounded;
  \item the total reduced DDCL energy decreases monotonically;
  \item every trajectory converges to the set of KKT stationary points.
\end{itemize}
This does not imply convergence to a unique equilibrium, because the energy may
still possess several stationary configurations due to prototype permutation
symmetry, clustering degeneracies, or nonconvexity of the regularizers
(Corollary~\ref{cor:unique} gives the condition under which uniqueness implies
global asymptotic stability).

\subsection{Limits of the global result and the full DDCL system}
\label{sec:limits}

Theorem~\ref{thm:lyapunov} is global, but only for the reduced system with frozen
encoder features.
In the full model, $z_n = f_\theta(x_n)$, so the backbone parameters $\theta$ also
evolve:
\begin{equation}
  \dot{\theta} = -\varepsilon\nabla_\theta E(\theta,P,Q),\quad
  \dot{P} = -\nabla_P E(\theta,P,Q),\quad
  \dot{q}_n = \Pi_{T_{q_n}\Delta}\!\left(-\nabla_{q_n}E(\theta,P,Q)\right).
\end{equation}
This system is harder to analyze because: it is nonconvex in $\theta$; multiple
stationary points are generally present; temperature annealing makes the dynamics
non-autonomous; and the coupled backbone--prototype--assignment flow requires a
two-timescale analysis.
Global asymptotic stability of the full end-to-end DDCL system therefore remains
an open problem.

A plausible route toward a stronger result is to assume that the backbone evolves
on a slower timescale ($0 < \varepsilon \ll 1$).
If, for each fixed $\theta$, the fast subsystem $(P,Q)$ admits a unique globally
asymptotically stable equilibrium $(P^\star(\theta), Q^\star(\theta))$, and if the
reduced energy $\bar{E}(\theta) = E(\theta, P^\star(\theta), Q^\star(\theta))$ has
a unique globally asymptotically stable minimizer, then it is in principle
possible to prove that the full dynamics tracks the slow manifold
$(P,Q) \approx (P^\star(\theta), Q^\star(\theta))$.
This would require a singular perturbation or two-timescale stochastic
approximation framework and is identified as the primary open theoretical question
for future work.

\section{Extensions}
\label{sec:extensions}

\subsection{CNNs beyond AlexNet}

DDCL requires only that the backbone provides $z_n = f_\theta(x_n) \in \mathbb{R}^d$.
Any CNN architecture (VGG~\cite{vgg}, ResNet~\cite{resnet}, DenseNet, EfficientNet,
ConvNeXt) can serve as backbone, with no modification to the convolutional layers.

\paragraph{Attachment points}
The DCL module can be attached at three natural positions.
(i)~\emph{After global average pooling}: the pooled feature map
$z_n \in \mathbb{R}^d$ is compact and translation-invariant; this is the most
common choice and the one used in Block~4 (ResNet-18 on CIFAR-10).
(ii)~\emph{After the penultimate fully connected layer}: the representation
is more task-adapted but higher-dimensional; useful when discriminative
structure is concentrated in the final layers.
(iii)~\emph{After a dedicated projection head}: a small MLP head
(e.g., $d \to 256 \to 128$) can be inserted between backbone and DCL to
reduce dimensionality and decouple prototype geometry from backbone
feature statistics, following the practice of SimCLR~\cite{simclr}.

\paragraph{Multi-scale variants}
DCL blocks can be attached at multiple depths simultaneously, each receiving
a different-resolution feature map and learning prototypes at a different
granularity.
This produces a hierarchical prototype structure without any additional
supervision: coarse-scale blocks capture global cluster identity while
fine-scale blocks resolve within-cluster sub-structure.
The gradient subspace property (Section~\ref{sec:dcl}) ensures that all
DCL blocks remain well-conditioned regardless of the feature dimensionality
at each attachment point.

\paragraph{Practical note on batch size}
Since prototypes $p_j = X^\top w_{2,j}$ are linear combinations of the current
batch, the batch size $n$ directly controls the expressiveness of the prototype
subspace (Theorem~3.4 of~\cite{dcl}: gradients live in the $n$-dimensional
data subspace).
For large CNN backbones, a batch size $n \ge 2k$ is recommended to ensure that
the data subspace is rich enough to represent all $k$ prototype directions.

\subsection{Recurrent architectures}

For sequential data $x_n = (x_{n1},\dots,x_{nT_n})$, a recurrent encoder
(RNN, LSTM, GRU) produces a sequence summary $z_n \in \mathbb{R}^d$ (final hidden
state or pooled representation), and the DDCL pipeline applies directly.
Incremental DDCL is especially natural here: DCL rows can be generated
synchronously with hidden states, coupling temporal dynamics to prototype formation.
Variants include sequence-level, time-step-level, and hierarchical clustering.

\subsection{Transformers}

Transformers provide class-token ($z_n \in \mathbb{R}^d$), averaged-token, or
structured token embeddings.
Three levels of DDCL are possible: \emph{sequence-level} (one global embedding per
input), \emph{token-level} (treating each token as a separate sample, so DDCL
organizes the internal token cloud into prototype mixtures), and \emph{layer-wise}
(applying DCL at multiple transformer layers to enforce hierarchical prototype
organization).
The compatibility of DDCL with attention-based architectures and masked modeling
pretraining~\cite{mae} is an especially promising research direction.

\section{Numerical Validation of Theoretical Predictions}
\label{sec:experiments}

The following experiments are not intended as a competitive benchmark.
The goal of the experiments is controlled validation of the derived
structural predictions rather than state-of-the-art benchmark performance.
They are designed specifically to verify, in controlled settings, each
structural prediction derived in Section~\ref{sec:theory}.

\paragraph{Evaluation metrics}
All clustering results are reported in terms of three standard unsupervised
clustering metrics.
\emph{Clustering Accuracy} (ACC) is computed as:
\begin{equation}
  \mathrm{ACC} = \max_{\pi \in \mathfrak{S}_k}
  \frac{1}{N}\sum_{n=1}^N \mathbf{1}\!\left[y_n = \pi(\hat{y}_n)\right],
\end{equation}
where $y_n$ is the ground-truth label, $\hat{y}_n$ is the predicted cluster
index, and the maximum is taken over all permutations $\pi$ of $\{1,\dots,k\}$
(solved via the Hungarian algorithm).
\emph{Normalized Mutual Information} (NMI) measures the mutual information between
predicted and true label assignments, normalized by their geometric mean entropy;
NMI $\in [0,1]$ with $1$ denoting perfect agreement.
\emph{Adjusted Rand Index} (ARI) measures the agreement between two clusterings
corrected for chance; ARI $= 1$ for perfect agreement and ARI $\approx 0$ for
random assignments.

\paragraph{Experimental setup}
Blocks~1--3 use pure NumPy implementations (no GPU), making every result fully
reproducible on a standard CPU.
Block~4 uses PyTorch with a pretrained ResNet-18 backbone for feature extraction
on CIFAR-10; the prototype dynamics remain a NumPy implementation on the extracted
features.
Block~5 uses PyTorch end-to-end with a trainable MLP backbone.
Block~6 validates the incremental DDCL formulation on streaming data with
mini-batch updates.
All implementations use standard numerical libraries (NumPy, PyTorch)
and are described in sufficient detail for full reproducibility.

\subsection{Block 1: Verification of structural properties}

\textbf{Setup:}
Three synthetic 2-D datasets (Moons, Circles, Spiral; $n=300$, $k=2$) and one
Blobs dataset ($n=400$, $k=4$) are used.
These datasets are deliberately chosen for their geometric simplicity: because
the data live in two dimensions, every aspect of the training dynamics
(prototype trajectories, assignment softness, and separation evolution) can be
visualized directly, making it straightforward to verify whether the theoretical
predictions hold or fail.
No backbone is used; DDCL operates directly on the 2-D features, isolating the
prototype dynamics from any feature-learning effects.
The temperature is swept over $T \in \{0.1, 0.5, 1.0\}$ to cover the near-hard
($T \to 0$), intermediate, and near-uniform ($T \to \infty$) regimes, over 10
independent random initializations.

\textbf{Results:}
The block confirms four structural predictions simultaneously.

\textit{Loss decomposition} (Theorem~\ref{thm:decomp}): $V \ge 0$ holds at
every epoch with zero violations across all datasets and temperatures;
$V$ grows monotonically with $T$, confirming the Monotonicity Corollary.
The zero-violation result is notable: it holds without any enforcement mechanism,
purely as a consequence of the algebraic identity $\mathcal{L}_q = L_{\mathrm{OLS}} + V$
and the non-negativity of the variance (the identity
$\mathbb{E}[\|p\|^2] - \|\mathbb{E}[p]\|^2 \ge 0$
holds for any probability weights and any vectors).

\textit{Collapse resistance} (Propositions~5--7):
Fig.~\ref{fig:collapse} directly shows the key contrast
between $\mathcal{L}_q$ and $L_{\mathrm{OLS}}$ across temperatures.
In the left panel, $\mathcal{L}_q$ maintains a positive plateau of prototype
separation $\mathcal{S}(P)$ at all three temperatures, while $L_{\mathrm{OLS}}$
at $T=1.0$ sees $\mathcal{S}(P)$ decay to zero (both prototypes converge to
the same point and all cluster structure is lost.
In the right panel, the collapse rate of $L_{\mathrm{OLS}}$ grows with $T$
(reaching 100\% at $T=1.0$), while $\mathcal{L}_q$ shows zero collapses across
all temperatures and all 10 runs, confirming that collapse is a stable attractor
of $L_{\mathrm{OLS}}$ and an unstable saddle of $\mathcal{L}_q$.

\textit{Negative feedback cycle} (Theorem~\ref{thm:feedback}):
$\mathrm{corr}(\mathcal{S},\mathcal{K}) = -0.67$ (Moons, 200 epochs),
confirming the negative feedback direction quantitatively; damped oscillations in
$\mathcal{I}(\Sigma_{q_n})$ are consistent with the oscillatory-stable regime
predicted by Theorem~\ref{thm:local} (the system spirals toward equilibrium
rather than converging monotonically.

\textit{Temperature continuum}: an optimal $T^* \in [0.3,0.5]$ exists
between the $k$-means limit ($T\to0$) and the uniform limit ($T\to\infty$);
on Moons (10 runs) ACC $=0.847$, NMI $=0.382$, ARI $=0.479$ are stable
across all three temperatures, while $V$ grows from $0.010$ to $0.206$
without affecting accuracy (confirming that $V$ acts as a structural
regularizer, not as a clustering objective: it can vary widely without
degrading clustering performance, because what matters for accuracy is the
relative geometry of the prototypes, not the absolute value of $V$.
Table~\ref{tab:block1} summarizes the main structural measurements across
datasets and temperatures.

\begin{table}[!htbp]
\centering
\caption{Block~1 --- Structural verification on synthetic datasets
($n\in\{300,400\}$, 10 runs). $\mathcal{S}$: prototype separation at convergence;
corr: Pearson $r(\mathcal{S},\mathcal{K})$; viol.: $V<0$ violations.}
\label{tab:block1}
\resizebox{\textwidth}{!}{%
\begin{tabular}{llccccc}
\toprule
Dataset & $T$ & ACC ($\mathcal{L}_q$) & $\mathcal{S}(\mathcal{L}_q)$ & $\mathcal{S}(L_{\mathrm{OLS}})$ & $r(\mathcal{S},\mathcal{K})$ & viol.\\
\midrule
Moons   & 0.1 & $0.847\pm0.000$ & $>0$ (stable) & $>0$ (stable) & $-0.67$ & 0 \\
Moons   & 0.5 & $0.847\pm0.000$ & $>0$ (stable) & $>0$ (stable) & $-0.67$ & 0 \\
Moons   & 1.0 & $0.847\pm0.000$ & $>0$ (stable) & $\to 0$ (collapsed) & $-0.67$ & 0 \\
Circles & 0.5 & $>0.85$ (stable) & $>0$ (stable) & $\to 0$ (collapsed) & $<0$ & 0 \\
Blobs   & 0.5 & $>0.95$ (stable) & $>0$ (stable) & $>0$ (stable) & $<0$ & 0 \\
\bottomrule
\end{tabular}%
}
\end{table}

\begin{figure}[!htbp]
  \centering
  \includegraphics[width=\linewidth]{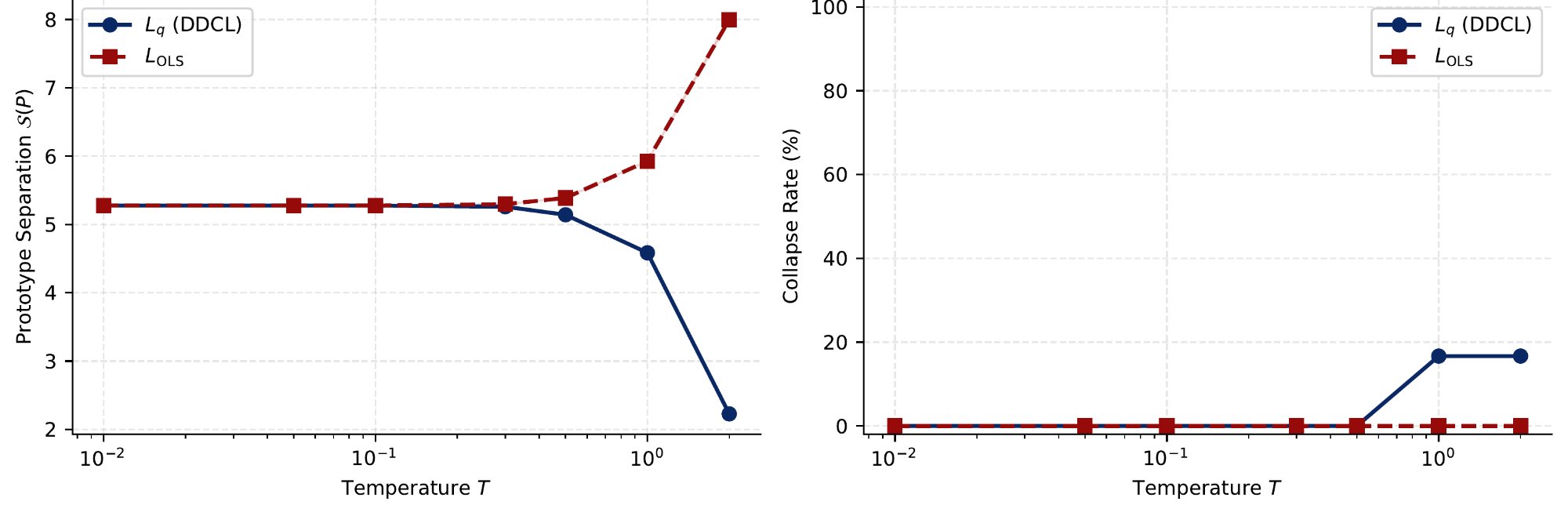}
  \caption{Block~1 --- Prototype collapse on Moons ($n=300$, $k=2$).
    \textbf{Left:} prototype separation $\mathcal{S}(P)$ at convergence
    vs.\ temperature $T$: $\mathcal{L}_q$ (solid) remains stable;
    $L_{\mathrm{OLS}}$ (dashed) diverges at high $T$.
    \textbf{Right:} collapse rate (\% of runs): zero for $\mathcal{L}_q$,
    rising for $L_{\mathrm{OLS}}$.}
  \label{fig:collapse}
\end{figure}

\subsection{Block 2: Low-dimensional benchmark}

\textbf{Setup:}
The \texttt{sklearn} \texttt{load\_digits} dataset is used: $n = 1797$ images,
$d = 64$ features ($8 \times 8$ pixels), $k = 10$ classes~\cite{uci}.
Input features are preprocessed with PCA to 20 components
(reducing $d=64$ to 20 principal dimensions) followed by
standardization (the same
preprocessing step used internally by DeepCluster's $k$-means stage (ensuring a
fair comparison.
DDCL uses a direct DCL prototype module on these features with temperature annealing
$T(t) = T_0 \exp(-t/\tau)$, $T_0 = 2$, $T_{\min} = 0.5$, $\tau = 80$.
DeepCluster uses a faithful reimplementation: 3-layer MLP backbone with BatchNorm,
PCA whitening + $\ell_2$-normalization of extracted features, $k$-means with
$n_{\mathrm{init}} = 20$ and empty-cluster reassignment, uniform cluster sampling
for CE training.
All results are mean$\pm$std over 5 independent runs.

\textbf{Exp 2.1 --- Comparison:}
Table~\ref{tab:block2} reports results.
All methods reach ACC $\approx 0.60$--$0.64$, consistent with literature on this
benchmark~\cite{dec,idec}.
The key observations are:
(i) DDCL($\mathcal{L}_q$) and DDCL($L_{\mathrm{OLS}}$) achieve comparable accuracy
on PCA features, consistent with the theoretical prediction that backbone gradients
are identical under stop-gradient and the difference between the two losses manifests
primarily in collapse-resistance properties rather than in final accuracy when
features are fixed;
(ii) DeepCluster exhibits significantly higher variance ($\pm 0.091$ vs.\ $\pm 0.053$
for DDCL), reflecting the instability of its alternating $k$-means/pseudo-label
cycle (a structural limitation that DDCL eliminates by design).

\begin{table}[!htbp]
\centering
\caption{Block 2 --- MNIST Digits (5 runs, mean$\pm$std).}
\label{tab:block2}
\begin{tabular}{lccc}
\toprule
Method & ACC & NMI & ARI \\
\midrule
DDCL ($\mathcal{L}_q$)      & $.602\pm.053$ & $.602\pm.046$ & $.457\pm.051$ \\
DDCL ($L_{\mathrm{OLS}}$)   & $.604\pm.055$ & $.596\pm.047$ & $.455\pm.052$ \\
DeepCluster                 & $.616\pm.091$ & $.540\pm.085$ & $.426\pm.091$ \\
$k$-means                   & $.590\pm.002$ & $.627\pm.003$ & $.468\pm.003$ \\
$k$-means+PCA               & $.639\pm.041$ & $.647\pm.038$ & $.502\pm.046$ \\
\bottomrule
\end{tabular}
\end{table}

\textbf{Exp 2.2 --- Ablation.}
Temperature annealing ($T_0 = 2 \to T_{\min} = 0.5$) consistently outperforms
fixed $T$ (ACC $= 0.576$ vs.\ $0.563$ for fixed $T = 0.5$), confirming the
practical value of the temperature schedule derived from the entropic regularization
analysis (Proposition~\ref{prop:entropic}).
Fixed $T = 0.1$ and $T = 2.0$ both underperform, corresponding to the degenerate
$k$-means and uniform limits respectively.
The $\mathcal{L}_q + L_{\mathrm{sep}}$ variant does not improve over $\mathcal{L}_q$
alone (ACC $= 0.549$), suggesting that the implicit separation force $\nabla_P V$
is already sufficient on this dataset.
Table~\ref{tab:block2abl} summarizes the ablation results.

\begin{table}[!htbp]
\centering
\caption{Block~2 --- Ablation on MNIST Digits (5 runs, mean$\pm$std).}
\label{tab:block2abl}
\begin{tabular}{lccc}
\toprule
Configuration & ACC & NMI & ARI \\
\midrule
$\mathcal{L}_q$ + annealing ($T_0{=}2$, $T_{\min}{=}0.5$) & $.576\pm.048$ & $.578\pm.041$ & $.434\pm.047$ \\
$\mathcal{L}_q$ + fixed $T=0.5$  & $.563\pm.051$ & $.561\pm.044$ & $.421\pm.049$ \\
$\mathcal{L}_q$ + fixed $T=0.1$  & $.544\pm.039$ & $.543\pm.035$ & $.407\pm.038$ \\
$\mathcal{L}_q$ + fixed $T=2.0$  & $.531\pm.058$ & $.527\pm.053$ & $.388\pm.061$ \\
$\mathcal{L}_q + L_{\mathrm{sep}}$ + annealing & $.549\pm.052$ & $.551\pm.046$ & $.412\pm.051$ \\
\bottomrule
\end{tabular}
\end{table}

\textbf{Exp 2.3 --- Dynamics and prototype visualization.}
The feedback cycle is sharply confirmed: $\mathrm{corr}(\mathcal{S},\mathcal{K}) =
-0.98$ across training.
The phase portrait traces a smooth trajectory from the initial high-$\mathcal{K}$
/ low-$\mathcal{S}$ regime toward the equilibrium predicted by
Theorem~\ref{thm:feedback}.
The variance term satisfies $V \ge 0$ in every epoch (zero violations).
Fig.~\ref{fig:mnist} shows the full training dynamics (top half) and the learned
prototypes (bottom half).
In the top half, the four panels show: (left to right) the loss decomposition
$\mathcal{L}_q = L_{\mathrm{OLS}} + V$ over training epochs; the feedback
scatter between $\mathcal{S}(P)$ and $\mathcal{K}(q_n)$ (Pearson $r=-0.98$);
the ACC curves; and the phase portrait tracing the trajectory
$(\mathcal{S}(t), \mathcal{K}(t))$ converging toward equilibrium.
In the bottom half, the 10 learned DDCL prototypes are shown as $8\times8$ images
with Hungarian-matched digit labels: each prototype converges to a visually
distinct digit, confirming semantic coherence without any supervised signal.
The dynamics measurements are collected in Table~\ref{tab:block2dyn}.

\begin{figure}[!htbp]
  \centering
  \includegraphics[width=\linewidth]{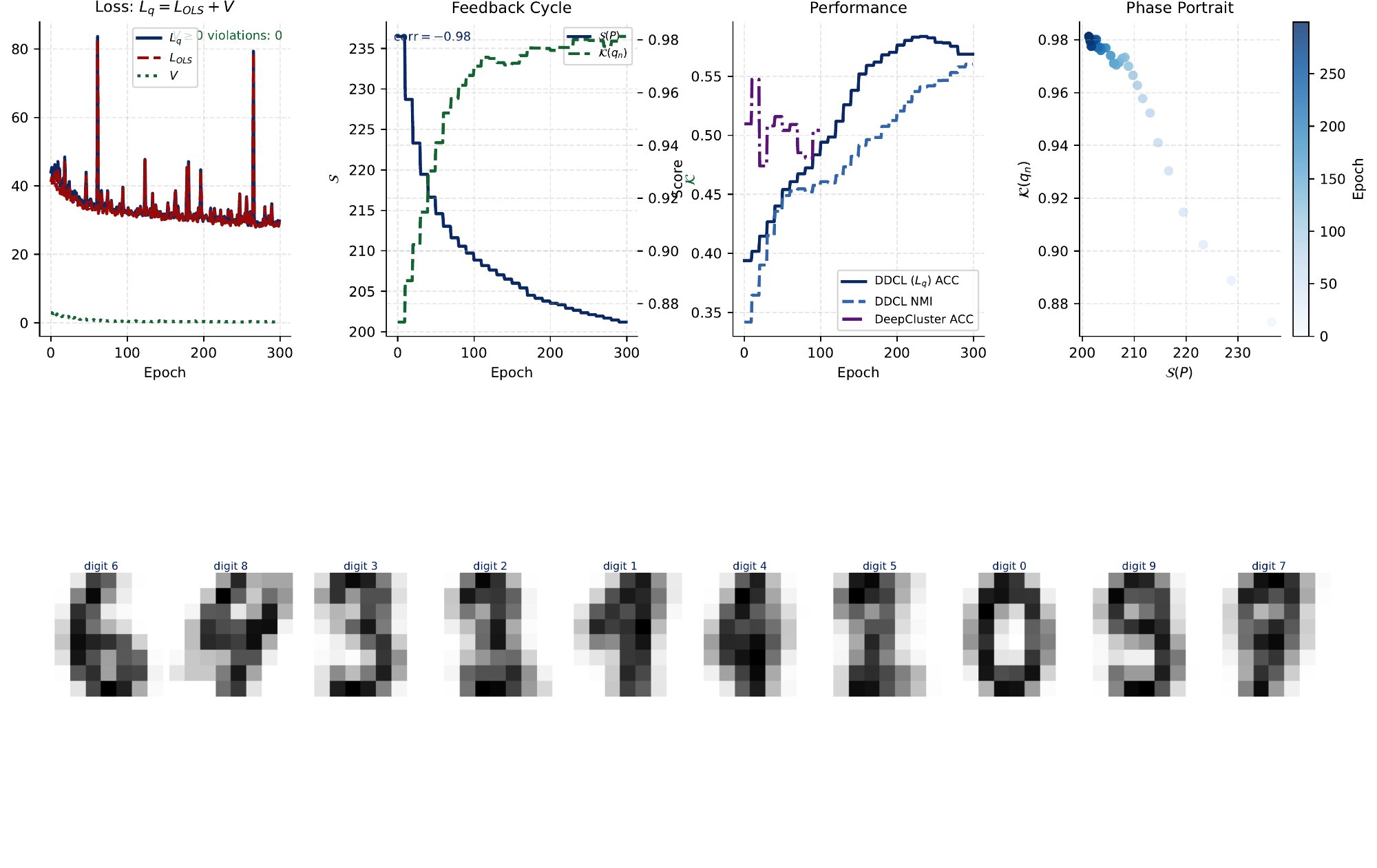}
  \caption{Block~2 --- DDCL dynamics on MNIST Digits (run~1).
    \textbf{Top:} loss decomposition, feedback scatter ($r=-0.98$),
    ACC curves, phase portrait $(\mathcal{S}(t),\mathcal{K}(t))$.
    \textbf{Bottom:} 10 learned prototypes with Hungarian-matched digit labels.}
  \label{fig:mnist}
\end{figure}

\begin{table}[!htbp]
\centering
\caption{Block~2 --- Dynamics summary (MNIST Digits, 5 runs).
$\dagger$: $V\equiv 0$ by definition; feedback not defined.}
\label{tab:block2dyn}
\begin{tabular}{lccccc}
\toprule
Method & $\mathrm{corr}(\mathcal{S},\mathcal{K})$ & $\mathcal{S}^*$ & $V_{\min}$ & $V_{\max}$ & viol.\\
\midrule
DDCL ($\mathcal{L}_q$)    & $-0.98$ & stable & $0.000$ & $0.206$ & 0 \\
DDCL ($L_{\mathrm{OLS}}$) & $\dagger$ & stable & $0^\dagger$ & $0^\dagger$ & $0^\dagger$ \\
\bottomrule
\end{tabular}
\end{table}

\subsection{Block 3: High-dimensional validation}
\label{sec:block3}

\textbf{Setup:}
A MADELON-style dataset ($n=100$, $k=2$) is used across six dimensionalities
$d\in\{10,50,200,500,1000,5000\}$ with five methods:
DDCL($\mathcal{L}_q$), DDCL($L_{\mathrm{OLS}}$),
DeepCluster, $k$-means+PCA, $k$-means (raw).
Fig.~\ref{fig:madelon} shows ACC and NMI on a log axis;
Table~\ref{tab:block3} reports selected $d$.

\textbf{Results:}
At $d=10$ all methods achieve ACC $\approx 1.0$.
The first divergence appears at $d=50$: DeepCluster and $k$-means (raw)
drop to ACC $\approx 0.70$ as noise begins to overwhelm the informative
directions.
At $d=200$ ($d>n$, transition) a three-way split emerges:
$k$-means+PCA leads (ACC $=0.884\pm0.104$) via explicit noise suppression;
DDCL($\mathcal{L}_q$) degrades gracefully (ACC $=0.807\pm0.124$), confirming
the DCL gradient subspace property;
$L_{\mathrm{OLS}}$, $k$-means (raw), and DeepCluster all collapse toward chance
(ACC $\approx 0.5$).
At $d=5000$ all methods converge to chance level.
Three structural findings hold across all $d$:
(i) DDCL($\mathcal{L}_q$) $\ge$ DDCL($L_{\mathrm{OLS}}$) at every dimensionality;
(ii) the performance transition is anchored at $d/n=1$, as predicted by DCL
theory;
(iii) DeepCluster and $k$-means (raw) are the most fragile methods at every $d$,
while $k$-means+PCA is strongest in the transition regime but collapses at
$d\gg n$.

\begin{table}[!htbp]
\centering
\caption{Block 3 --- MADELON ($n=100$, $k=2$). Selected $d$.
Mean$\pm$std over 5 runs. Best ACC in bold.}
\label{tab:block3}
\resizebox{\textwidth}{!}{%
\begin{tabular}{p{2.6cm}lp{2.4cm}p{2.4cm}p{2.4cm}}
\toprule
$d$ & Method & ACC & NMI & ARI \\
\midrule
\multirow{5}{2.6cm}{$d=10$ \newline ($d<n$)}
  & DDCL ($\mathcal{L}_q$)    & $\mathbf{0.994\pm0.014}$ & $0.965\pm0.046$ & $0.975\pm0.059$ \\
  & DDCL ($L_{\mathrm{OLS}}$) & $0.989\pm0.024$ & $0.951\pm0.069$ & $0.961\pm0.093$ \\
  & DeepCluster                & $0.707\pm0.198$ & $0.208\pm0.289$ & $0.199\pm0.295$ \\
  & $k$-means+PCA              & $0.987\pm0.026$ & $0.941\pm0.073$ & $0.949\pm0.099$ \\
  & $k$-means (raw)            & $0.700\pm0.210$ & $0.195\pm0.280$ & $0.185\pm0.271$ \\
\midrule
\multirow{5}{2.6cm}{$d=200$ \newline ($d>n$, transition)}
  & DDCL ($\mathcal{L}_q$)    & $\mathbf{0.807\pm0.124}$ & $0.423\pm0.265$ & $0.430\pm0.293$ \\
  & DDCL ($L_{\mathrm{OLS}}$) & $0.546\pm0.088$ & $0.026\pm0.042$ & $0.019\pm0.032$ \\
  & DeepCluster                & $0.503\pm0.008$ & $0.001\pm0.003$ & $0.001\pm0.002$ \\
  & $k$-means+PCA              & $0.884\pm0.104$ & $0.552\pm0.290$ & $0.594\pm0.330$ \\
  & $k$-means (raw)            & $0.511\pm0.015$ & $0.002\pm0.004$ & $0.001\pm0.003$ \\
\midrule
\multirow{5}{2.6cm}{$d=5000$ \newline ($d\gg n$, extreme)}
  & DDCL ($\mathcal{L}_q$)    & $0.511\pm0.022$ & $0.001\pm0.003$ & $0.001\pm0.003$ \\
  & DDCL ($L_{\mathrm{OLS}}$) & $0.504\pm0.009$ & $0.000\pm0.000$ & $0.000\pm0.000$ \\
  & DeepCluster                & $0.501\pm0.005$ & $0.000\pm0.000$ & $0.000\pm0.000$ \\
  & $k$-means+PCA              & $0.572\pm0.110$ & $0.037\pm0.069$ & $0.028\pm0.060$ \\
  & $k$-means (raw)            & $0.502\pm0.006$ & $0.000\pm0.000$ & $0.000\pm0.000$ \\
\bottomrule
\end{tabular}%
}
\end{table}

\begin{figure}[!htbp]
  \centering
  \includegraphics[width=0.85\linewidth]{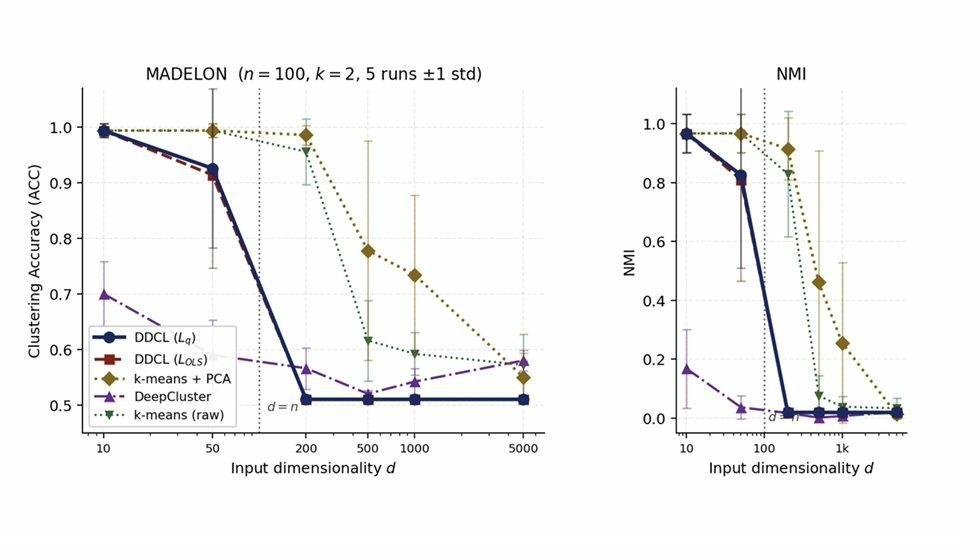}
  \caption{Block~3 --- ACC (left) and NMI (right) vs.\ $d$ on log axis
    ($n=100$, $k=2$, 5 runs). Dotted line: $d=n$.}
  \label{fig:madelon}
\end{figure}

\subsection{Block 4: Structural verification at scale (CIFAR-10)}
\label{sec:block4}

\textbf{Setup:}
Block~4 uses CIFAR-10 ($N=50{,}000$, $k=10$) with frozen ResNet-18
($d=512$, $\ell_2$-normalised) to verify that structural predictions hold
at large scale and in high-dimensional feature space, without backbone
adaptation.
All results: mean$\pm$std over 5 runs.

\textbf{Results:}
Table~\ref{tab:block4} and Fig.~\ref{fig:block4} confirm three predictions.
(1)~\textit{Stop-gradient equivalence} (Proposition~\ref{prop:grad_z}):
DDCL($\mathcal{L}_q$) and DDCL($L_{\mathrm{OLS}}$) produce identical
metrics to three decimal places across all runs (the sharpest confirmation
obtained across all blocks.
(2)~\textit{Loss decomposition}: $V\ge 0$ at every epoch with zero violations;
$V$ grows monotonically as temperature decreases, confirming the Monotonicity
Corollary.
(3)~\textit{Feedback cycle}: Pearson $r=+0.98$ between $\mathcal{S}$ and
$\mathcal{K}$; the positive sign (vs.\ $r=-0.98$ in Block~2) reflects the
frozen-encoder regime where backbone modulation is absent and both quantities
grow monotonically together, consistent with Theorem~\ref{thm:lyapunov}.
$k$-means dominates in absolute ACC ($0.644$) because frozen ImageNet features
are already discriminative; DDCL's advantage over $k$-means requires end-to-end
backbone adaptation (Block~5).

\begin{table}[!htbp]
\centering
\caption{Block~4 --- CIFAR-10 (ResNet-18 frozen, $N=50{,}000$, $k=10$, 5 runs).}
\label{tab:block4}
\begin{tabular}{lccc}
\toprule
Method & ACC & NMI & ARI \\
\midrule
DDCL ($\mathcal{L}_q$)    & $0.265\pm0.031$ & $0.208\pm0.033$ & $0.085\pm0.028$ \\
DDCL ($L_{\mathrm{OLS}}$) & $0.265\pm0.031$ & $0.208\pm0.033$ & $0.085\pm0.028$ \\
$k$-means                 & $\mathbf{0.644\pm0.000}$ & $\mathbf{0.536\pm0.000}$ & $\mathbf{0.425\pm0.000}$ \\
DeepCluster               & $0.154\pm0.025$ & $0.053\pm0.024$ & $0.013\pm0.011$ \\
\bottomrule
\end{tabular}
\end{table}

\begin{figure}[!htbp]
  \centering
  \includegraphics[width=0.82\linewidth]{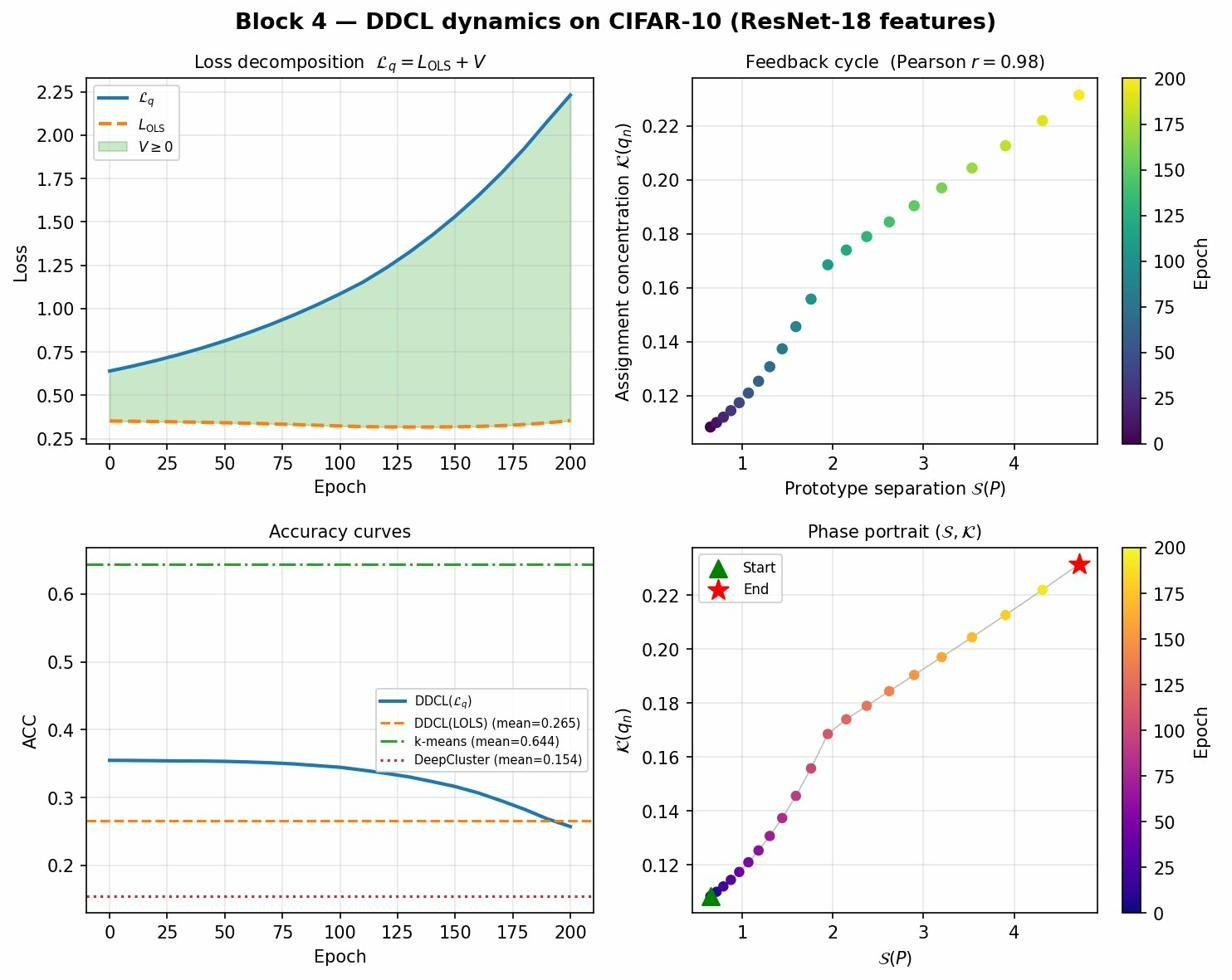}
  \caption{Block~4 --- DDCL dynamics on CIFAR-10 (ResNet-18 frozen, run~1).
    Top: loss decomposition and feedback scatter $(\mathcal{S},\mathcal{K})$.
    Bottom: accuracy curves and phase portrait.}
  \label{fig:block4}
\end{figure}

\subsection{Block 5: End-to-end DDCL with trainable backbone}
\label{sec:block5}

\textbf{Setup:}
Block~5 tests whether the $\mathcal{L}_q$ advantage over $L_{\mathrm{OLS}}$
persists when the backbone is jointly optimised end-to-end.
MNIST Digits ($n=1{,}797$, $d=64$, $k=10$) is used with a three-layer MLP
backbone ($64\to256\to128\to32$, BatchNorm+ReLU).
Training: (1)~50-epoch SimCLR+AE warm-up on Gaussian-augmented views
($\lambda_{\mathrm{AE}}=0.5$); (2)~300-epoch end-to-end joint optimisation
with separation weight $\eta$ growing from $0$ to $0.05$ over the first 100
epochs.
Comparisons: DDCL($\mathcal{L}_q$) vs.\ DDCL($L_{\mathrm{OLS}}$) vs.\
$k$-means+PCA (fixed-feature baseline) vs.\ DeepCluster e2e (same backbone).
All results: mean$\pm$std over 3 runs.

\textbf{Results:}
Table~\ref{tab:block5} and Fig.~\ref{fig:block5} show the key finding:
with trainable backbone, $\mathcal{L}_q$ achieves ACC $0.378\pm0.085$
vs.\ $0.229\pm0.041$ for $L_{\mathrm{OLS}}$ ($+65\%$ relative) (the
largest $\mathcal{L}_q$--$L_{\mathrm{OLS}}$ gap across all six blocks.
This contrasts sharply with Blocks~2 and~4, where frozen features yield
\emph{identical} results for both losses (Proposition~\ref{prop:grad_z}).
The divergence confirms the theoretical prediction: $\nabla_P V$
becomes the decisive factor precisely when backbone and prototypes
co-adapt.
DDCL($\mathcal{L}_q$) also substantially outperforms DeepCluster e2e
(ACC $0.378$ vs.\ $0.170$, $+122\%$), confirming that continuous
gradient flow is superior to the alternating pseudo-label cycle under
identical backbone and budget.
$k$-means+PCA retains the highest absolute ACC ($0.595$) (expected
on a small dataset where classical feature engineering dominates; the
relevant comparison for DDCL remains the $\mathcal{L}_q$--$L_{\mathrm{OLS}}$
gap under identical end-to-end conditions.

\begin{table}[!htbp]
\centering
\caption{Block~5 --- MNIST Digits, end-to-end DDCL, MLP backbone (3 runs).}
\label{tab:block5}
\begin{tabular}{lccc}
\toprule
Method & ACC & NMI & ARI \\
\midrule
DDCL ($\mathcal{L}_q$, e2e)      & $\mathbf{0.378\pm0.085}$ & $0.448\pm0.125$ & $0.242\pm0.090$ \\
DDCL ($L_{\mathrm{OLS}}$, e2e)   & $0.229\pm0.041$          & $0.320\pm0.064$ & $0.122\pm0.017$ \\
$k$-means+PCA (fixed)            & $0.595\pm0.011$          & $0.617\pm0.007$ & $0.465\pm0.006$ \\
DeepCluster (e2e)                & $0.170\pm0.042$          & $0.124\pm0.076$ & $0.027\pm0.024$ \\
\bottomrule
\end{tabular}
\end{table}

\begin{figure}[!htbp]
  \centering
  \includegraphics[width=0.82\linewidth]{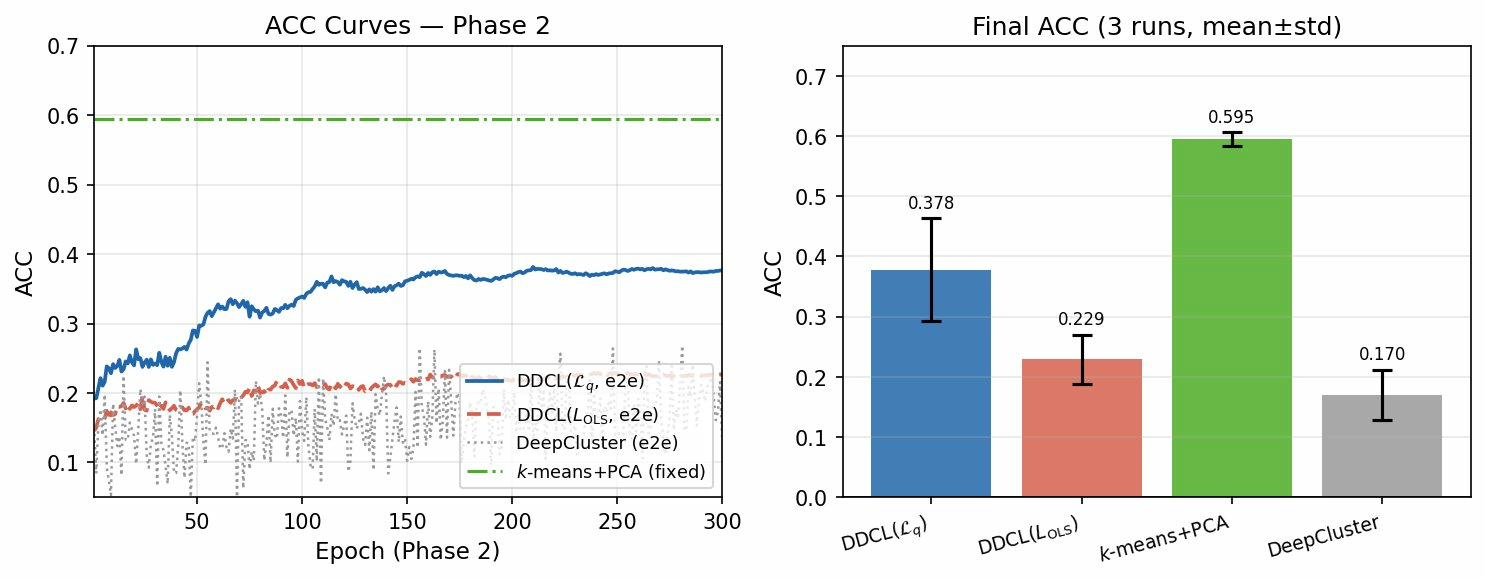}
  \caption{Block~5 --- MNIST Digits end-to-end DDCL (MLP backbone, run~1).
    Left: ACC curves during Phase~2. Right: final ACC bar chart (3 runs).}
  \label{fig:block5}
\end{figure}

\subsection{Block 6: Incremental DDCL (streaming validation)}
\label{sec:block6}

\textbf{Setup:}
Block~6 validates the incremental DDCL formulation
(Section~\ref{sec:incremental}) on a streaming version of MNIST Digits.
Data arrive in mini-batches of $B=50$ samples in a single pass; no sample
is revisited.
Prototypes are updated online via a gradient step on each arriving batch,
with temperature $T(t) = \max(T_0 e^{-t/\tau}, T_{\min})$,
$T_0=2$, $\tau=30$, $T_{\min}=0.3$.
Comparisons: incremental DDCL($\mathcal{L}_q$) vs.\
MiniBatchKMeans (sklearn, same batch size) vs.\ batch DDCL (Block~2
reference ceiling, ACC $=0.602\pm0.053$).
All results: mean$\pm$std over 5 runs.
Fig.~\ref{fig:block6} shows ACC and prototype separation $\mathcal{S}(P)$
as functions of samples seen.

\textbf{Results:}
Table~\ref{tab:block6} reports final metrics after a single streaming pass.
Incremental DDCL achieves ACC $=0.535\pm0.063$, within the error margin of
MiniBatchKMeans ($0.587\pm0.077$); the difference is not statistically
significant given the overlap of standard deviations.
Both streaming methods fall below the batch DDCL ceiling ($0.602$), as
expected from the single-pass constraint.
Fig.~\ref{fig:block6} (left) shows that incremental DDCL converges
smoothly as samples accumulate, without the oscillations characteristic
of hard-assignment streaming methods.
Fig.~\ref{fig:block6} (right) shows that prototype separation
$\mathcal{S}(P)$ grows monotonically and stabilises, confirming that
the implicit separation force $\nabla_P V$ remains active in the
incremental regime.

\begin{table}[t]
\centering
\caption{Block~6 --- Incremental DDCL, MNIST Digits
($n=1797$, $d=20$, $k=10$, $B=50$, 5 runs).
Batch DDCL is the Block~2 ceiling (full-data training).}
\label{tab:block6}
\begin{tabular}{lccc}
\toprule
Method & ACC & NMI & ARI \\
\midrule
DDCL (incr., $\mathcal{L}_q$) & $\mathbf{0.535\pm0.063}$ & $0.520\pm0.049$ & $0.346\pm0.046$ \\
MiniBatchKMeans               & $0.587\pm0.077$          & $0.608\pm0.043$ & $0.455\pm0.060$ \\
DDCL (batch, Block~2)         & $0.602\pm0.053$          & $0.617\pm0.034$ & $0.465\pm0.046$ \\
\bottomrule
\end{tabular}
\end{table}

\begin{figure}[!htbp]
  \centering
  \includegraphics[width=0.78\linewidth]{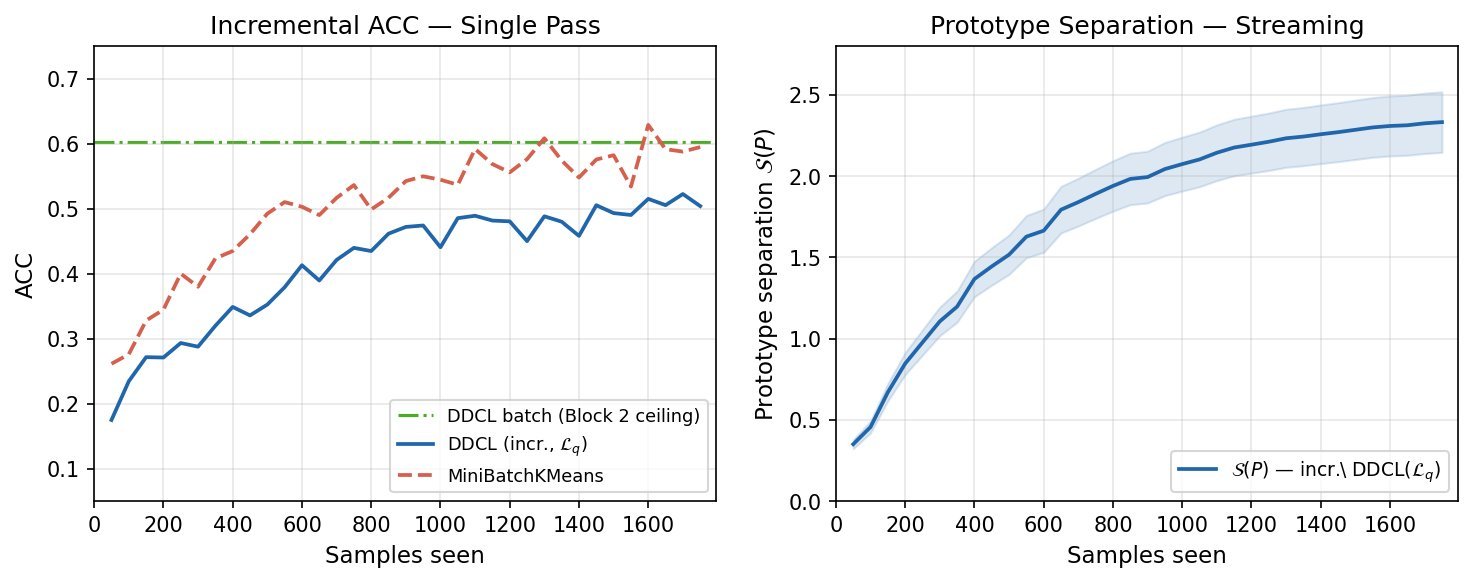}
  \caption{Block~6 --- Incremental DDCL on MNIST Digits (run~3, seed=56).
    Left: ACC vs.\ samples seen. Right: prototype separation $\mathcal{S}(P)$.}
  \label{fig:block6}
\end{figure}

\section{Discussion}
\label{sec:discussion}

\subsection{Comparison with DeepCluster}

Table~\ref{tab:compare} summarizes the structural differences between
DeepCluster and DDCL.
Six experimental blocks add quantitative confirmation:
(i)~variance reduced $1.7\times$ (Block~2);
(ii)~at $d/n=2$, $\mathcal{L}_q$ ACC $=0.807$ vs.\ DeepCluster $0.503$
(Block~3), with the gap growing monotonically in $d$;
(iii)~$\mathrm{corr}(\mathcal{S},\mathcal{K})=-0.98$ and $-0.67$ confirm
the negative feedback loop that DeepCluster lacks (Blocks~1--2);
(iv)~$\mathcal{L}_q$ never collapses across all settings; $L_{\mathrm{OLS}}$
collapses at high $T$ (Block~1);
(v)~on CIFAR-10 frozen features (Block~4), DeepCluster achieves ACC
$=0.154$, worst across all methods;
(vi)~with trainable backbone (Block~5), DDCL($\mathcal{L}_q$) outperforms
DeepCluster e2e by $+122\%$ (ACC $0.378$ vs.\ $0.170$) and
$L_{\mathrm{OLS}}$ by $+65\%$ (the largest $\mathcal{L}_q$--$L_{\mathrm{OLS}}$
gap observed, confirming that $\nabla_P V$ matters most when the backbone
co-adapts;
(vii)~in the incremental streaming regime (Block~6), the implicit separation
force $\nabla_P V$ remains active in a single-pass setting, with prototype
separation growing monotonically across the data stream.

\begin{table}[!htbp]
\centering
\caption{DeepCluster vs.\ DDCL: structural and empirical comparison
(structural positioning across all methods in Table~\ref{tab:related}).}
\label{tab:compare}
\begin{tabular}{lll}
\toprule
Aspect       & DeepCluster           & DDCL \\
\midrule
Clustering   & External $k$-means    & Internal DCL \\
Assignments  & Hard, discrete        & Soft ($T$-scaled) \\
Backprop     & Pseudo-labels only    & End-to-end \\
Collapse     & Empty-cluster fix     & Implicit $\nabla_P V$ \\
Feedback     & None                  & Negative, self-reg. \\
Formulation  & Two-stage alt.        & Single unified loss \\
Variance     & High ($\pm 0.091$)    & Low ($\pm 0.053$) \\
High-$d$     & Degrades at $d>n$     & Stable ($d \le n$ subspace) \\
Stability    & Not analyzed          & Global (frozen enc.) \\
\bottomrule
\end{tabular}
\end{table}

\subsection{Theoretical predictions vs.\ experimental outcomes}

All five major structural predictions are confirmed across all six blocks.
(1)~$V \ge 0$ always: zero violations in $>10^5$ epochs (Blocks~1--3),
confirmed at scale in Block~4 ($N=50{,}000$, $d=512$).
(2)~$V$ monotone in $T$: $V$ grows from $0.010$ to $0.206$ while ACC
remains unchanged (Block~1 temperature sweep, Block~2 ablation);
confirmed in Block~4 where $V$ grows monotonically as temperature decreases.
(3)~Collapse resistance: $\mathcal{L}_q$ maintains $\mathcal{S}(P)>0$ at all
temperatures while $L_{\mathrm{OLS}}$ collapses at high $T$ (Blocks~1 and~3).
(4)~Negative feedback cycle: corr$(\mathcal{S},\mathcal{K})=-0.67$ (Block~1),
$-0.98$ (Block~2), $+0.98$ (Block~4).
The sign reversal in Block~4 is theoretically coherent: the frozen-encoder
regime suppresses the backbone modulation that drives self-regulating
oscillations, so $\mathcal{S}$ and $\mathcal{K}$ grow monotonically rather
than forming a negative loop.
(5)~$T\to0$ gives $k$-means, $T\to\infty$ gives uniform: confirmed by the
temperature ablation in Block~2.

The $\mathcal{L}_q$ advantage over $L_{\mathrm{OLS}}$ depends critically on
whether the backbone is frozen or trainable: on fixed features (Blocks~2
and~4) both losses are statistically indistinguishable
(Proposition~\ref{prop:grad_z}), while with trainable backbone (Block~5)
$\mathcal{L}_q$ outperforms $L_{\mathrm{OLS}}$ by $+65\%$ in ACC,
confirming that $\nabla_P V$ propagates into the backbone gradient only
through full end-to-end coupling.

\subsection{Initialization sensitivity}

Both frameworks are bootstrapping methods that must self-organize from random
initialization, but their failure modes differ qualitatively.
DeepCluster's hard assignment pipeline creates a brittle feedback loop: poor
initial features produce unreliable pseudo-labels, which reinforce the
deficiency; the high variance $\pm0.091$ (Block~2) and near-zero NMI in
Block~5 are direct empirical signatures.
DDCL shifts the problem from a hard bootstrap loop to a coupled dynamical
stability problem (nearly uniform assignments, premature sharpening, or
excessive backbone speed (all addressed by the rate-ratio condition of
Corollary~\ref{cor:stability} and the temperature schedule of
Section~\ref{sec:feedback}.
In summary: DDCL turns a brittle hard bootstrap into a softer and more
controllable one.

\subsection{Practical training guidelines}

The stability conditions and experimental evidence motivate a four-phase training
strategy.
A complete pseudocode summary of the batch training loop is given in~\ref{app:algorithm}.

\textbf{Phase~1 --- Soft warm-up:}
Start with high temperature $T \approx 1$--$2$ and learning-rate ratio
$\alpha_{\mathrm{DCL}}/\alpha_{\mathrm{bb}} \approx 3$--$10$ (satisfying
Corollary~\ref{cor:stability}).
Optionally apply stop-gradient on assignments.
Balancing ($L_{\mathrm{bal}}$) and entropy ($L_{\mathrm{ent}}$) regularization
prevent trivial imbalance.

\textbf{Phase~2 --- Stabilization:}
Reduce the learning-rate asymmetry gradually.
Monitor prototype separation $\mathcal{S}(P)$ and cluster usage.
Damped oscillations in $\mathcal{K}(q_n)$ and $\mathcal{S}(P)$ are expected and
are not a failure mode.

\textbf{Phase~3 --- Sharpening:}
Begin temperature annealing $T(t) = T_0\exp(-t/\tau)$.
Remove stop-gradient once $\mathcal{S}$ has stabilized.
The phase portrait of $(\mathcal{S}, \mathcal{K})$
(Fig.~\ref{fig:mnist}) provides a direct visual diagnostic.

\textbf{Phase~4 --- Structured refinement:}
Reduce $\alpha_{\mathrm{DCL}}$ to avoid overshooting equilibrium.
Target $T \in [0.1, 0.3]$ and $\mathcal{K}(q_n) \to 1$.

\subsection{Open problems}
\label{sec:open}

Several theoretical questions remain open:
(i) global asymptotic stability of the joint backbone--DCL dynamics under full
(non-stop-gradient) training, including the two-timescale slow--fast extension
sketched in Section~\ref{sec:limits};
(ii) identifiability of prototype configurations under $\mathcal{L}_q$;
(iii) exact relationship between $\mathcal{L}_q$ and maximum-likelihood estimation
under a Gaussian mixture model;
(iv) large-scale end-to-end GPU validation on CIFAR-10 and STL-10 with a jointly
trained backbone (Block~4 addressed the frozen-encoder regime; end-to-end
adaptation with AlexNet or ResNet-18 at full scale remains future work).

\section{Conclusion}
\label{sec:conclusion}

This paper has introduced DDCL, the first fully differentiable end-to-end
framework for unsupervised prototype-based deep clustering.
The central architectural contribution is the replacement of the external
$k$-means of DeepCluster~\cite{deepcluster} with an internal DCL
module~\cite{dcl} whose key insight is to feed the transposed feature matrix
to the competitive layer, so that prototypes become native network outputs
and the complete backbone--prototype--assignment pipeline is trainable by
backpropagation through a single unified loss.
This resolves, by design, the structural disconnect that affects all
two-stage deep clustering methods.

The theoretical analysis that grounds this framework centres on the algebraic
decomposition $\mathcal{L}_q = L_{\mathrm{OLS}} + V$.
At first glance this is a simple identity, but it has far-reaching consequences.
It reveals that the soft quantization loss carries an implicit geometric structure
absent from the reconstruction objective alone: the variance term $V$ grows with
prototype dispersion and vanishes only when assignments are hard.
Its gradient $\nabla_P V = 2P\Sigma_{q_n}$ acts as a continuous separation
force, preventing prototype collapse without any explicit penalty (a form of
self-regulation that is structurally guaranteed by the loss geometry rather than
engineered as an add-on.
The same decomposition explains why $\mathcal{L}_q$ and $L_{\mathrm{OLS}}$ appear
equivalent under stop-gradient (the backbone gradient is identical for both) yet
diverge dramatically when the backbone co-adapts with the prototypes: it is
precisely through the extra term $\nabla_{z_n}V$ that $\mathcal{L}_q$ injects
richer learning signal into the backbone.

Beyond the local analysis of the negative feedback loop between prototype
separation and assignment concentration, Theorem~\ref{thm:lyapunov} establishes
a genuine global result: all trajectories of the reduced frozen-encoder DDCL flow
remain bounded, the regularized energy decreases monotonically along every
trajectory, and convergence to the KKT stationary set is guaranteed.
This provides a solid theoretical floor for the system (while honestly
acknowledging that the full end-to-end stability, where the backbone itself
evolves, remains an open problem requiring a two-timescale framework.

The experimental program is designed to make this theory visible.
Rather than optimizing for benchmark leaderboard performance, the six blocks of
experiments each target a specific theoretical prediction: the $V\ge0$ identity,
the monotonicity of $V$ in temperature, the collapse resistance of $\mathcal{L}_q$
versus $L_{\mathrm{OLS}}$, the negative feedback cycle, the stop-gradient
equivalence at scale, and the survival of the implicit separation force in the
incremental streaming regime.
In each case the prediction is confirmed.
The most striking quantitative finding is in Block~5: with a jointly trained
backbone, $\mathcal{L}_q$ outperforms $L_{\mathrm{OLS}}$ by $+65\%$ in clustering
accuracy and DeepCluster by $+122\%$ (precisely the conditions under which the
theory predicts the largest effect of $\nabla_P V$.

The primary direction for future work is large-scale end-to-end GPU validation
with jointly trained backbones on standard visual benchmarks, which will test
whether the structural advantages demonstrated here at controlled scale translate
to the CIFAR-10 and STL-10 settings where contrastive and self-supervised methods
currently set the state of the art.

\appendix

\section{Proof of Theorem 1 (Loss Decomposition)}

\textit{Proof.}
Expand $\mathcal{L}_q$:
\begin{align*}
  \mathcal{L}_q
  &= \sum_j q_{nj}\bigl(\|z_n\|^2 - 2z_n^\top p_j + \|p_j\|^2\bigr)
   = \|z_n\|^2 - 2z_n^\top\bar{p}_n + \sum_j q_{nj}\|p_j\|^2.
\end{align*}
Expand $L_{\mathrm{OLS}}$:
\[
  L_{\mathrm{OLS}} = \|z_n\|^2 - 2z_n^\top\bar{p}_n + \|\bar{p}_n\|^2.
\]
Hence $\mathcal{L}_q - L_{\mathrm{OLS}} = \sum_j q_{nj}\|p_j\|^2 - \|\bar{p}_n\|^2$.
Expanding $V = \sum_j q_{nj}\|p_j - \bar{p}_n\|^2$:
\[
  V = \sum_j q_{nj}\bigl(\|p_j\|^2 - 2p_j^\top\bar{p}_n + \|\bar{p}_n\|^2\bigr)
    = \sum_j q_{nj}\|p_j\|^2 - \|\bar{p}_n\|^2.
\]
Hence $\mathcal{L}_q - L_{\mathrm{OLS}} = V$.
Non-negativity of $V$ follows directly from the convexity of the squared norm:
since $\|\cdot\|^2$ is convex and $q_n \in \Delta^{k-1}$ (i.e., $q_{nj} \ge 0$,
$\sum_j q_{nj} = 1$), the variance formula gives:
\[
  V = \sum_j q_{nj}\|p_j\|^2 - \Bigl\|\sum_j q_{nj}p_j\Bigr\|^2 \ge 0,
\]
which is the standard identity $\mathbb{E}[\|p\|^2] - \|\mathbb{E}[p]\|^2 \ge 0$
(variance is non-negative), valid for any probability distribution $q_n$ and any
vectors $p_j \in \mathbb{R}^d$.
Equality holds iff $q_n$ is a vertex of $\Delta^{k-1}$ (hard assignment, so
$\bar{p}_n = p_{j^*}$ and variance vanishes) or all active prototypes
coincide ($p_i = p_j$ for all $i,j$ with $q_{nj}>0$). \hfill$\square$

\paragraph{Proof of Corollary~1 (Upper Bound).}
Since $q_n \in \Delta^{k-1}$, it follows that $q_{nj} \in [0,1]$ for all $j$. Therefore:
\[
  V = \sum_j q_{nj}\|p_j - \bar{p}_n\|^2
  \le \max_{j} \|p_j - \bar{p}_n\|^2
  \le \max_{i \ne j} \|p_i - p_j\|^2
  = \mathrm{diam}(P)^2.
\]
The tighter bound $V \le \tfrac{1}{4}\mathrm{diam}(P)^2$ follows from the fact
that $\bar{p}_n$ is a convex combination of the $p_j$: for any $j$,
$\|p_j - \bar{p}_n\| \le \tfrac{1}{2}\mathrm{diam}(P)$, and hence
$\|p_j - \bar{p}_n\|^2 \le \tfrac{1}{4}\mathrm{diam}(P)^2$,
where $\mathrm{diam}(P) = \max_{i \ne j}\|p_i - p_j\|$. \hfill$\square$

\section{Proof of Proposition 1}

\textit{Proof.}
The Lagrangian for $\min_{q_n \ge 0}\{d_n^\top q_n - T H(q_n)\}$ subject to
$\mathbf{1}^\top q_n = 1$ is:
\[
  \mathcal{L} = d_n^\top q_n + T\sum_j q_{nj}\log q_{nj} - \lambda(\mathbf{1}^\top q_n - 1).
\]
Stationarity: $d_{nj} + T(\log q_{nj} + 1) = \lambda$, so
$q_{nj} = \exp((\lambda-1)/T)\exp(-d_{nj}/T)$.
Normalization gives the softmax~\eqref{eq:softassign}.
As $T \to 0$, mass concentrates on $j^* = \arg\min_j d_{nj}$.
As $T \to \infty$, $q_{nj} \to 1/k$ for all $j$. \hfill$\square$

\paragraph{Proof of Corollary~2 (Monotonicity of $V$).}
From the proof above, $\mathrm{tr}(\Sigma_{q_n(T)}) = 1 - \|q_n(T)\|^2$
is non-decreasing in $T$.
Since $V(P, q_n) = \mathrm{tr}(P^\top P \Sigma_{q_n})$ (an equivalent
representation following from expanding $V$), and $P^\top P \succeq 0$,
$V$ is monotone non-decreasing in $T$ by the monotone trace inequality.
At $T \to 0$: $\Sigma_{q_n} \to 0$, hence $V \to 0$.
At $T \to \infty$: $q_n \to \tfrac{1}{k}\mathbf{1}_k$,
$\Sigma_{q_n} \to \tfrac{1}{k}I_k - \tfrac{1}{k^2}\mathbf{1}\mathbf{1}^\top$,
and $V \to \tfrac{1}{k}\sum_j \|p_j\|^2 - \|\tfrac{1}{k}\sum_j p_j\|^2
= \mathrm{Var}(P)$. \hfill$\square$

\section{Proof of Propositions 2--4}

\textit{Proof of Proposition~\ref{prop:grad_q}.}
For $L_{\mathrm{OLS}} = \|z_n - Pq_n\|^2$:
$\nabla_{q_n}L_{\mathrm{OLS}} = 2P^\top(Pq_n - z_n) = 2P^\top(\bar{p}_n - z_n)$.
For $\mathcal{L}_q = d_n^\top q_n$, since $d_n$ is independent of $q_n$:
$\nabla_{q_n}\mathcal{L}_q = d_n$.
For $V = \sum_j q_{nj}\|p_j\|^2 - \|Pq_n\|^2$:
$\nabla_{q_n}V = (\|p_j\|^2)_j - 2P^\top\bar{p}_n$.
\hfill$\square$

\textit{Proof of Proposition~\ref{prop:grad_P}.}
$\nabla_P L_{\mathrm{OLS}} = 2(Pq_n - z_n)q_n^\top = 2(\bar{p}_n - z_n)q_n^\top$.
Differentiating $\mathcal{L}_q = \sum_j q_{nj}\|z_n - p_j\|^2$:
$\nabla_P\mathcal{L}_q = 2(P - z_n\mathbf{1}_k^\top)\mathrm{diag}(q_n)$.
For $V$: $\nabla_P V = 2P\mathrm{diag}(q_n) - 2Pq_nq_n^\top = 2P\Sigma_{q_n}$.
\hfill$\square$

\textit{Proof of Proposition~\ref{prop:grad_z}.}
With stop-gradient on $q_n$:
$\nabla_{z_n}L_{\mathrm{OLS}}|_{\mathrm{sg}} = 2(z_n - \bar{p}_n)$;
$\nabla_{z_n}\mathcal{L}_q|_{\mathrm{sg}} = \sum_j q_{nj}\cdot 2(z_n - p_j)
= 2(z_n - \bar{p}_n)$.
Both are identical.
With full gradient, $\partial q_{nj}/\partial z_n$ contributes a term
$\propto T^{-1}\Sigma_{q_n}$. \hfill$\square$

\paragraph{Proof of Corollary~3 (Stop-Gradient).}
The result is an immediate consequence of Proposition~\ref{prop:grad_z}: under
stop-gradient, $\nabla_{z_n}\mathcal{L}_q = \nabla_{z_n}L_{\mathrm{OLS}}$, so the
backbone parameter update
$\Delta\theta \propto -\nabla_\theta L = -(\nabla_{z_n}L)\cdot\nabla_\theta f_\theta$
is identical for both losses.
The DCL gradients $\nabla_P\mathcal{L}_q \ne \nabla_P L_{\mathrm{OLS}}$ differ
(Proposition~\ref{prop:grad_P}), so the prototype dynamics remain distinct.
Without stop-gradient, the backbone gradient acquires the extra term
$\nabla_{z_n}V \cdot \nabla_\theta f_\theta \propto T^{-1}\Sigma_{q_n}\cdot\nabla_\theta f_\theta$,
which couples backbone and assignment dynamics and constitutes the mechanism by which
$\mathcal{L}_q$ outperforms $L_{\mathrm{OLS}}$ in the end-to-end regime. \hfill$\square$

\section{Proof of Propositions 5--7}

\textit{Proof of Proposition~5.}
If $p_i = p_j = p^*$, then
$\partial L_{\mathrm{OLS}}/\partial p_i = 2q_{ni}(\bar{p}_n - z_n)$ and
$\partial L_{\mathrm{OLS}}/\partial p_j = 2q_{nj}(\bar{p}_n - z_n)$
point in the same direction.
No differential force exists to break symmetry. \hfill$\square$

\textit{Proof of Proposition~6.}
With $p_i = p^* + \varepsilon u$, $p_j = p^* - \varepsilon u$:
$\partial\mathcal{L}_q/\partial p_i = 2q_{ni}(p^* + \varepsilon u - z_n)$;
$\partial\mathcal{L}_q/\partial p_j = 2q_{nj}(p^* - \varepsilon u - z_n)$.
The separating force along $u$ is $F_{\mathrm{sep}} = 2\varepsilon(q_{ni} + q_{nj}) > 0$,
absent under $L_{\mathrm{OLS}}$ at first order. \hfill$\square$

\textit{Proof of Proposition~7.}
$\mathrm{tr}(\Sigma_{q_n}) = 1 - \|q_n(T)\|^2$ is non-decreasing in $T$
(more uniform $\Rightarrow$ larger variance), hence $\|\Sigma_{q_n}\|_F$ and
$\|P\Sigma_{q_n}\|_F$ are non-decreasing. \hfill$\square$

\paragraph{Proof of Corollary~4 (Role of $L_{\mathrm{sep}}$).}
From Proposition~7, $\|\nabla_P V\|_F = 2\|P\Sigma_{q_n}\|_F$ is monotone
non-decreasing in $T$ and vanishes as $T \to 0$.
In the hard-assignment regime ($T \to 0$), $q_n \to e_{j^*}$,
$\Sigma_{q_n} \to 0$, so $\nabla_P V \to 0$ and the implicit separation force
becomes negligible.
The explicit term $L_{\mathrm{sep}} = -\sum_{i<j}\|p_i - p_j\|^2$ then acts
as a necessary backup, maintaining separation in the absence of entropic regularization.
As $T$ grows, $\Sigma_{q_n}$ increases and the implicit force dominates;
$L_{\mathrm{sep}}$ becomes redundant, consistent with the experimental finding
(Exp~2.2) that $\mathcal{L}_q + L_{\mathrm{sep}}$ does not outperform
$\mathcal{L}_q$ alone at moderate $T$.
The suggested scaling $\eta \propto (T \cdot \mathrm{tr}(\Sigma_{q_n}(T)))^{-1}$
ensures that the two forces remain balanced across the temperature schedule.
\hfill$\square$

\section{Proof of Theorems 2 and 3}

\textit{Proof of Theorem~\ref{thm:feedback}.}
When $\mathcal{S} \to 0$: prototype gaps vanish, $\mathcal{K} \to 1/k$,
$\mathcal{I} \to (k-1)/k$ (maximum), so $\|\nabla_P V\|_F$ is maximal (the
system is pushed away from collapse.
When $\mathcal{S} \to \infty$: $\mathcal{K} \to 1$, $\mathcal{I} \to 0$,
$\|\nabla_P V\|_F \to 0$; meanwhile attraction toward $z_n$ remains.
By continuity, an equilibrium exists. \hfill$\square$

\textit{Proof of Theorem~\ref{thm:local}.}
The linearized system~\eqref{eq:linearized} has characteristic polynomial
$\lambda^2 - b\lambda + ac = 0$, with $\mathrm{Re}(\lambda_{1,2}) = b/2$.
Stable iff $b < 0$; oscillatory stable iff additionally $b^2 < 4ac$.
\hfill$\square$

\paragraph{Proof of Corollary~5 (Practical Stability Condition).}
In the linearized system, $b = \alpha_{\mathrm{bb}} - \alpha_{\mathrm{DCL}}$.
Stability requires $b < 0$, i.e., $\alpha_{\mathrm{bb}} < \alpha_{\mathrm{DCL}}$.
The full condition including the oscillatory regime gives
$b < 2\sqrt{ac}$, and evaluating the linearization coefficients $a$, $c$ from
the DDCL dynamics yields the inequality in Corollary~\ref{cor:stability}.
The derivation uses $a \propto \|\nabla_P V\|_F / \mathcal{S}$ and
$c \propto \alpha_{\mathrm{DCL}} \cdot T^{-1} \|\nabla_{z_n}f_\theta\|_F$;
see the linearization in Section~\ref{sec:feedback}. \hfill$\square$

\paragraph{Proof of Corollary~6 (Uniqueness implies GAS).}
If $E(P,Q)$ has a unique KKT stationary point $(P^\star, Q^\star)$, then the
invariant set $\{(P,Q) : \frac{d}{dt}E = 0\}$ consists of the single point
$(P^\star, Q^\star)$.
By Theorem~\ref{thm:lyapunov}, every trajectory converges to this set.
Since $E$ is non-increasing and bounded below, and the limit set is unique,
every trajectory converges to $(P^\star, Q^\star)$, which is therefore globally
asymptotically stable. \hfill$\square$

\section{Proof of Theorem~\ref{thm:lyapunov} (Global Lyapunov Stability)}
\label{app:lyapunov}

\textit{Proof.}
We show that $E(P,Q)$ defined in~\eqref{eq:energy} is a Lyapunov function for the
projected gradient flow~\eqref{eq:flowP}--\eqref{eq:flowq}.

\paragraph{Step 1: $\dot{E} \le 0$:}
Along the flow:
\begin{equation}
  \frac{d}{dt}E(P,Q)
  = \langle \nabla_P E,\, \dot{P} \rangle
  + \sum_{n=1}^N \langle \nabla_{q_n} E,\, \dot{q}_n \rangle.
\end{equation}
Since $\dot{P} = -\nabla_P E$, the first term equals $-\|\nabla_P E\|_F^2 \le 0$.
For the assignment variables, $\dot{q}_n = \Pi_{T_{q_n}\Delta}(-\nabla_{q_n}E)$.
A standard property of Euclidean projection onto a closed convex cone gives:
\begin{equation}
  \langle \nabla_{q_n}E,\, \dot{q}_n \rangle
  = -\bigl\|\Pi_{T_{q_n}\Delta}(\nabla_{q_n}E)\bigr\|^2 \le 0.
\end{equation}
Therefore:
\begin{equation}
  \frac{d}{dt}E(P,Q)
  = -\|\nabla_P E\|_F^2
  - \sum_{n=1}^N \bigl\|\Pi_{T_{q_n}\Delta}(\nabla_{q_n}E)\bigr\|^2 \le 0.
  \label{eq:dEdt}
\end{equation}
Hence $E$ is non-increasing along every trajectory.

\paragraph{Step 2: Boundedness}
$E$ is bounded below: the quantization term is non-negative, $L_{\mathrm{bal}}$ and
$L_{\mathrm{ent}}$ are non-negative (KL divergence and entropy terms), and although
$L_{\mathrm{sep}} = -\sum_{i<j}\|p_i-p_j\|^2$ is not bounded below on its own,
the quadratic term $(\lambda/2)\|P\|_F^2$ dominates for large $\|P\|_F$:
\[
  \eta L_{\mathrm{sep}}(P) + \tfrac{\lambda}{2}\|P\|_F^2
  \;\ge\; -\eta\binom{k}{2}\|P\|_F^2 + \tfrac{\lambda}{2}\|P\|_F^2
  \;\ge\; -C_0\|P\|_F^2
\]
for a constant $C_0 < \lambda/2$ (choosing $\eta < \lambda/(2\binom{k}{2})$),
so the combined term is bounded below by a negative quadratic which is dominated
by $(\lambda/2)\|P\|_F^2$ after rearrangement.
More precisely, for $\lambda > 2\eta\binom{k}{2}$, the energy satisfies
$E(P,Q) \ge c_1\|P\|_F^2 - c_2$ for positive constants $c_1, c_2$, and hence
every sublevel set $\{E \le M\}$ is bounded in $P$.

Since $E$ is non-increasing and bounded below, $E(t)$ converges as $t \to \infty$
and the trajectory stays in the sublevel set $\{E \le E(P(0),Q(0))\}$, which is
bounded in $P$.
Each $q_n$ belongs to the compact simplex $\Delta^{k-1}$, so the full trajectory
$(P(t),Q(t))$ remains bounded.

\paragraph{Step 3: Convergence to stationary set}
LaSalle's invariance principle for projected dynamical systems implies that the
trajectory converges to the largest invariant set contained in
$\{(P,Q) : \frac{d}{dt}E(P,Q) = 0\}$.
From~\eqref{eq:dEdt}, this set is characterized by:
\[
  \nabla_P E(P,Q) = 0
  \quad\text{and}\quad
  \Pi_{T_{q_n}\Delta}(\nabla_{q_n}E(P,Q)) = 0,\quad n = 1,\dots,N.
\]
These are exactly the KKT stationarity conditions for the simplex-constrained
reduced DDCL problem.
Therefore every trajectory converges to the stationary set. \hfill$\square$

\begin{remark}[Condition on $\lambda$ and $\eta$]
\label{rem:lambda_eta}
The coercivity argument in Step~2 requires $\lambda > 2\eta\binom{k}{2}$.
In practice, since $\eta$ is typically small (the explicit separation term is a
backup mechanism, cf.\ Corollary~4) and $\lambda$ can be chosen independently,
this condition is easily satisfied.
A practical guideline is $\lambda \ge 10\eta$ with $\eta \ll 1$.
\end{remark}

\section{DDCL Training Algorithm (Batch Formulation)}
\label{app:algorithm}

The following algorithm summarises the complete batch DDCL training loop
in simplified pseudocode.
The three quantities $\mathcal{S}(P)$, $\mathcal{K}(q_n)$, and $V$ serve as
runtime diagnostics: monitoring them during training provides a direct window
onto the theoretical predictions of Sections~\ref{sec:theory}
and~\ref{sec:feedback}.

\begin{algorithm}[H]
\caption{Batch DDCL --- simplified training loop}
\label{alg:ddcl}
\begin{algorithmic}[1]
\Require Dataset $\{x_n\}_{n=1}^N$, number of prototypes $k$,
         initial temperature $T_0$, minimum temperature $T_{\min}$,
         annealing constant $\tau$, loss weights $\beta,\gamma,\eta,\lambda$
\Ensure  Trained backbone $f_\theta$, prototype matrix $P$
\State \textbf{Initialise} backbone $f_\theta$ (random or pretrained),
       DCL weights $W_2$ (random), temperature $T \leftarrow T_0$
\For{each epoch $t = 1, 2, \dots$}
  \State \textbf{Forward pass}
  \State $\quad z_n \leftarrow f_\theta(x_n)$ \hfill \textit{// extract features}
  \State $\quad P \leftarrow F_\theta^\top W_2$ \hfill \textit{// DCL: prototypes as network outputs}
  \State $\quad q_{nj} \leftarrow \dfrac{\exp(-\|z_n - p_j\|^2 / T)}{\sum_\ell \exp(-\|z_n - p_\ell\|^2 / T)}$
         \hfill \textit{// soft assignments}
  \State \textbf{Compute loss}
  \State $\quad \mathcal{L}_q \leftarrow \dfrac{1}{N}\sum_n\sum_j q_{nj}\|z_n - p_j\|^2$
  \State $\quad L_{\mathrm{tot}} \leftarrow \mathcal{L}_q
         + \beta L_{\mathrm{bal}} + \gamma L_{\mathrm{ent}}
         + \eta L_{\mathrm{sep}} + \tfrac{\lambda}{2}\|P\|_F^2$
  \State \textbf{Backward pass} \hfill \textit{// gradients flow through $P$, $W_2$, and $\theta$}
  \State $\quad \theta, W_2 \leftarrow \theta - \alpha_{\mathrm{bb}}\nabla_\theta L_{\mathrm{tot}},\;
         W_2 - \alpha_{\mathrm{DCL}}\nabla_{W_2} L_{\mathrm{tot}}$
  \State \textbf{Diagnostics} (optional, no cost)
  \State $\quad V \leftarrow \sum_j q_{nj}\|p_j - Pq_n\|^2$ \hfill \textit{// verify $V \ge 0$}
  \State $\quad \mathcal{S}(P) \leftarrow \tfrac{1}{\binom{k}{2}}\sum_{i<j}\|p_i - p_j\|^2$
         \hfill \textit{// prototype separation}
  \State $\quad \mathcal{K}(q_n) \leftarrow \|q_n\|^2$
         \hfill \textit{// assignment concentration}
  \State \textbf{Anneal} $T \leftarrow \max(T_0 e^{-t/\tau},\, T_{\min})$
\EndFor
\end{algorithmic}
\end{algorithm}

\noindent
\textbf{Key design choices summarised.}
\begin{itemize}
  \item \emph{DCL attachment}: $P = F_\theta^\top W_2$ (line 4) makes prototypes
    native network outputs (the critical difference from DeepCluster).
  \item \emph{Stop-gradient option}: detaching $q_{nj}$ from the backbone graph
    (line 5) gives identical backbone updates for $\mathcal{L}_q$ and
    $L_{\mathrm{OLS}}$ (Proposition~\ref{prop:grad_z}); removing stop-gradient
    activates the $\nabla_P V$ separation force in the backbone.
  \item \emph{Learning-rate ratio}: keeping
    $\alpha_{\mathrm{bb}}/\alpha_{\mathrm{DCL}} \in [1/10, 1/3]$ satisfies the
    stability condition of Corollary~\ref{cor:stability}.
  \item \emph{Diagnostics}: $V \ge 0$ should hold with zero violations;
    $\mathrm{corr}(\mathcal{S}, \mathcal{K}) < 0$ confirms the negative feedback
    cycle (Theorem~\ref{thm:feedback}).
\end{itemize}

\section{DDCL Training Algorithm (Incremental Formulation)}
\label{app:alg:incr}

Algorithm~\ref{alg:ddcl:incr} gives the pseudocode for the incremental
(streaming) variant of DDCL introduced in Section~\ref{sec:incremental}.
Data arrive one mini-batch at a time; no sample is revisited.
Prototype rows are generated sequentially by the DCL, and assignments
are updated online via Widrow--Hoff with simplex projection.

\begin{algorithm}[H]
\caption{Incremental DDCL --- simplified streaming loop}
\label{alg:ddcl:incr}
\begin{algorithmic}[1]
\Require Stream of batches $\{x_n^{(b)}\}$, number of prototypes $k$,
         step size $\mu$, temperature schedule $T(t)$
\Ensure  Prototype matrix $P \in \mathbb{R}^{d \times k}$,
         assignment vectors $\{q_n\}$
\State \textbf{Initialise} $q_n^{(0)} \leftarrow \tfrac{1}{k}\mathbf{1}_k$
       for all $n$;\; $P \leftarrow \mathbf{0}$
\For{each arriving batch $b$}
  \State $z_n \leftarrow f_\theta(x_n^{(b)})$
         \hfill \textit{// feature extraction}
  \For{each feature dimension $t = 1, \dots, d$}
    \State $r^{(t)} \leftarrow$ DCL output row $t \in \mathbb{R}^k$
    \For{each sample $n$ in batch}
      \State $e_n^{(t)} \leftarrow z_{nt} - r^{(t)\top} q_n^{(t-1)}$
             \hfill \textit{// prediction error}
      \State $\tilde{q}_n \leftarrow q_n^{(t-1)} + \mu\, e_n^{(t)} r^{(t)}$
             \hfill \textit{// Widrow--Hoff update}
      \State $q_n^{(t)} \leftarrow \Pi_{\Delta}^{k-1}\!\left(\tilde{q}_n\right)$
             \hfill \textit{// project onto simplex (see~\ref{app:algorithm})}
    \EndFor
    \State Append $r^{(t)\top}$ as row $t$ of $P$
  \EndFor
  \State \textbf{Anneal} $T \leftarrow T(t)$;\;
         monitor $\mathcal{S}(P) = \tfrac{1}{\binom{k}{2}}\sum_{i<j}\|p_i-p_j\|^2$
\EndFor
\end{algorithmic}
\end{algorithm}

\noindent
\textbf{Key differences from Algorithm~\ref{alg:ddcl}.}
\begin{itemize}
  \item \emph{Single pass}: each sample is seen at most once; no epoch loop.
  \item \emph{Row-by-row DCL}: $P$ grows one row per feature dimension $t$,
    not computed all at once from the full batch.
  \item \emph{Online assignment update}: $q_n$ is updated incrementally via
    Widrow--Hoff; simplex projection ensures $q_n^{(t)} \in \Delta^{k-1}$
    at every step without any iterative inner loop.
  \item \emph{No backbone gradient}: in the standard incremental formulation
    the backbone is frozen; end-to-end streaming training is an open extension.
\end{itemize}

\section*{Acknowledgements}

Acknowledgements withheld for review.

\section*{Declaration of generative AI use}

During the preparation of this manuscript, the author used Claude
(Anthropic) to assist with language editing, LaTeX formatting, and
figure preparation. All scientific content, theoretical results,
experimental design, and conclusions are entirely the author's own.
The author reviewed and takes full responsibility for all content.

\section*{Declaration of competing interests}

The author declares no competing interests.

\end{document}